\documentclass[twoside]{article}

\usepackage[accepted]{aistats2026}
\usepackage[round]{natbib}

\usepackage[utf8]{inputenc} 
\usepackage[T1]{fontenc}    
\usepackage{hyperref}       
\usepackage{url}            
\usepackage{booktabs}       
\usepackage{amsfonts}       
\usepackage{nicefrac}       
\usepackage{microtype}      
\usepackage{xcolor}         
\usepackage{tabularx}       
\newcommand{\indep}{\perp \!\!\! \perp}

\usepackage{graphicx}
\usepackage{amsmath,amssymb}
\usepackage{amsthm}
\theoremstyle{plain}
\newtheorem{thm}{Theorem}
\newtheorem*{thm*}{Theorem}
\newtheorem{lem}{Lemma}

\newtheorem{asmp}{Assumption}

\theoremstyle{definition}

\usepackage{etoc}
\etocdepthtag.toc{mtchapter}
\etocsettagdepth{mtchapter}{subsection}
\etocsettagdepth{mtappendix}{none}

\newcommand{\argmin}{\operatorname{argmin\,}}
\usepackage{mathtools}
\usepackage{hyperref}
\hypersetup{
			colorlinks,
			citecolor=blue,
			filecolor=black,
			linkcolor=red,
			urlcolor=black
		} 
\hypersetup{
	pdfborderstyle={/S/S/W 0.4} 
}
\usepackage{multirow}

\begin{document}
	
	%
	
	%
	
	\twocolumn[
	\aistatstitle{Mixture Proportion Estimation and Weakly-supervised Kernel Test for Conditional Independence}
	
	\aistatsauthor{ Yushi Hirose \And Akito Narahara \And Takafumi Kanamori}
	
	\aistatsaddress{ Institute of Science Tokyo \And  Institute of Science Tokyo \And Institute of Science Tokyo, RIKEN AIP} ]
	
	\begin{abstract}
		Mixture proportion estimation (MPE) aims to estimate class priors from unlabeled data. This task is a critical component in weakly supervised learning, such as PU learning, learning with label noise, and domain adaptation. Existing MPE methods rely on the \textit{irreducibility} assumption or its variant for identifiability. In this paper, we propose novel assumptions based on conditional independence (CI) given the class label, which ensure identifiability even when irreducibility does not hold. We develop method of moments estimators under these assumptions and analyze their asymptotic properties. Furthermore, we present weakly-supervised kernel tests to validate the CI assumptions, which are of independent interest in applications such as causal discovery and fairness evaluation. Empirically, we demonstrate the improved performance of our estimators compared with existing methods and that our tests successfully control both type I and type II errors.\label{key}
	\end{abstract}
	
	\section{INTRODUCTION}\label{introduction}
	
	Mixture Proportion Estimation (MPE) is the problem of estimating the mixture proportions of underlying class distributions in unlabeled data. This work addresses a generalized MPE setting (Unlabeled-Unlabeled or UU setting) where we are given samples from two distinct mixture distributions, $U=\theta P+(1-\theta )N$ and $U'=\theta' P+(1-\theta' )N$. Here, $P$ and $N$ represent the positive and negative class probability distributions, respectively, and the objective is to estimate the unknown class priors $(\theta,\theta')$. This formulation is a strict generalization of the standard MPE setting (Positive-Unlabeled or PU setting), which assumes access to $P$, corresponding to the case where $\theta=1$.
	
	MPE is a critical component in various machine learning tasks. For instance, weakly-supervised learning such as positive-unlabeled learning \citep{elkan08pu,du14pu,kiryo17pu}, unlabeled-unlabeled learning \citep{nan19uu, nan21biuu}, and learning from pairwise similarity \citep{bao18sim} require known mixture proportions to train classifiers. Other applications of MPE include learning with label noise \citep{natar13noise,scott13noise,liu16noise}, anomaly detection \citep{sand14ano}, and domain adaptation under open set label shift \citep{garg22open}.
	
	Without any assumptions on the distributions $P$ and $N$, $(\theta,\theta')$ are not identifiable. 	To address this, \citet{blanchard2010semi} introduced the irreducibility assumption, which posits that $N$ is irreducible with respect to $P$. Intuitively, this means that the negative distribution cannot be expressed as a mixture containing the positive distribution. In the UU setting, the converse: irreducibility of $P$ with respect to $N$ is also required for identifiability \citep{scott2015rate}.
	 To date, most of the existing MPE algorithms \citep{scott2015rate,scott13noise,jain2016nonparametric,ramaswamy2016mixture,ivanov2020dedpul,bekker2020learning} have been developed under the irreducibility assumption or stricter conditions, such as the anchor set assumption \citep{ramaswamy2016mixture}. 
		
	However, the irreducibility assumption can be violated in practical applications as discussed by \citet{zhu23beyond}. To the best of our knowledge, \citet{yao2022rethinking} and \citet{zhu23beyond} are the only existing works that have attempted MPE beyond irreducibility, while they still have limitations. The regrouping method of \citet{yao2022rethinking} can mitigate estimation bias, but is not statistically consistent. \citet{zhu23beyond} derived a more general condition than irreducibility that requires the essential supremum of $P(Y|X=x)$, which is usually unavailable in practice. 
	
	This work investigates alternative assumptions and presents new identifiability results for MPE. Our assumptions are based on the underlying data structure: conditional independence given class label. 	The CI assumption is widely adopted across various machine learning domains. For instance, in text classification and spam filtering \citep{jurafsky24speech}, features are often assumed to be independent given the label. Similarly, multi-view learning paradigms, including co-training \citep{blum98com} and unsupervised-learning  \citep{song14multi,stein16uns,anand14tensor}, frequently leverage conditional independence between feature sets. The applications include web page categorization \citep{blum98com}, text-image categorization \citep{giesen21mom} and biological data such as flow cytometry \citep{song14multi}. We consider two structural assumptions, conditional independence (CI) and multivariate conditional independence (MCI), on which we develop the method of moments estimators.
	
	We also establish kernel test methods to verify the CI assumptions from only observed data from $U$ and $U'$ (weakly-supervised setting). Our tests are developed from non-parametric kernel tests such as HSIC \citep{gretton07kernal} and Kernel-based Conditional Independence test \citep{zhang11kci}. We show the test consistency under mild conditions. In contrast, existing MPE assumptions, such as irreducibility, are generally difficult to verify, and no such studies currently exist. Our tests have potential applications not only in MPE, but also in fairness evaluation \citep{meh22fair} and causal discovery \citep{gordon23causal}.
	
	Our contributions are summarized as follows.
	\begin{itemize}
		\item We propose method of moments estimators for MPE, under class-specific CI  (independence of two features given the class label) and MCI (independence of two features given the class label and additional features) assumptions. We show the asymptotic normality of estimators.		
		\item We establish kernel tests for CI and MCI using unlabeled data from $U$ and $U'$, which is not possible with existing kernel tests.
		\item We investigate the testing methods under two settings: one where the true mixture proportions are known, and another where they are unknown. We derive the asymptotic distributions of the proposed test statistics under $H_0$ and propose gamma approximation methods. 
		\item In the test setting of unknown mixture proportions, we propose a post-hoc testing method, where we plug estimated mixture proportions into the test statistics and estimate the modified mean and variance for gamma approximation.
	\end{itemize}


	\section{PROBLEM SETTING}
	Let $X$ and $Y$ be feature and binary label random variables that take values in $\mathcal{X}$ and $\{-1,1\}$ respectively. In this paper, we assume two unlabeled datasets are given:
	$$\begin{aligned}
		\{x^{(i)}\}^n_{i=1}&\sim U=\theta P+(1-\theta)N\\
		\{x'^{(i)}\}^{n'}_{i=1}&\sim U'=\theta'P+(1-\theta')N
	\end{aligned}$$
	where $P:=P(X|Y=1)$ and $N:=P(X|Y=-1)$ are probability distributions for positive and negative data. $U$ and $U'$ are unlabeled data distributions with different mixture proportions $\theta,\theta'\in[0,1]$ ($\theta>\theta'$, without loss of generality). Denote $M=n+n'$ and assume $\lim_{M\rightarrow\infty} M/n\rightarrow\nu$ and $\lim_{M\rightarrow\infty}M/n'\rightarrow\nu'$. The objective is to estimate $\theta,\theta'$ from unlabeled datasets $\{x^{(i)}\}^{n}_{i=1},\{x'^{(i)}\}^{n'}_{i=1}$. Under these settings, the labeled distributions $P$ and $N$ can be expressed as mixtures of the unlabeled distributions, $U$ and $U'$: 
	\begin{align}
		P&=\alpha_+U+(1-\alpha_+)U'\label{simul1}\\
		N&=\alpha_-U+(1-\alpha_-)U' \label{simul2}	
	\end{align}
	where $\alpha_+:=\frac{1-\theta'}{\theta-\theta'}$ and $\alpha_-:=\frac{-\theta'}{\theta-\theta'}$.
	
	These equations demonstrate that the labeled distributions can be recovered from the unlabeled distributions if $(\theta,\theta')$ are known. This property is utilized in weakly-supervised learning \citep{nan19uu,du14pu} and learning with label noise \citep{natar13noise,scott2015rate}. We also leverage this property for our proposed MPE and CI tests. Specifically in our MPE, we estimate $\alpha_+$ and $\alpha_-$, which is equivalent to estimating $\theta,\theta'$, and is more convenient for theoretical analysis.
	
	\section{MIXTURE PROPORTION ESTIMATION WITH CONDITIONAL INDEPENDENCE}
	In this section, we present a novel approach to mixture proportion estimation by leveraging conditional independence (CI) between features.
	
	We assume a two-dimensional feature $X=(X_1,X_2) $ that takes values in $\mathcal{X}_1 \times \mathcal{X}_2$. We denote the positive and negative distributions as $P_{12} := P(X_1,X_2|Y=1)$ and $N_{12} := P(X_1,X_2|Y=-1)$, and unlabeled distributions $U_{12} := \theta P_{12} + (1-\theta)N_{12}$ and $U'_{12} := \theta' P_{12} + (1-\theta')N_{12}$. Similarly, $P_{\tau}, N_{\tau},  U_{\tau}, U'_{\tau}, \forall \tau\in\{1,2\} $ represent the corresponding marginal distributions.	We define mixture distributions $F^{\alpha}_{\tau}:= \alpha U_\tau + (1 - \alpha) U'_\tau$ and their empirical versions $\hat F^{\alpha}_{\tau}:= \alpha \hat U_\tau + (1 - \alpha) \hat U'_\tau, \forall\tau\in\{1,2,12\}$. Note that $F^{\alpha_+}_{\tau}=P_{\tau}$ and $F^{\alpha_-}_{\tau}=N_{\tau}$ from (\ref{simul1}) and (\ref{simul2}). 
	
	In the following procedure, we focus on estimating $\alpha^*\in\{\alpha_+,\alpha_-\}$ and define $\bar\alpha^*\in\{\alpha_+,\alpha_-\}\setminus\{\alpha^*\}$. Note that a similar assumption and procedure are required to estimate $\bar\alpha^*$ and subsequently recover $(\theta,\theta')$.  We assume the following class-specific CI assumption:
	
	\begin{asmp}\label{asmp:ci}
		$F^{\alpha^*}_{12}= F^{\alpha^*}_{1}F^{\alpha^*}_{2}$
	\end{asmp}

	If $\alpha^*=\alpha_+$ (resp. $\alpha_-$), the assumption corresponds to $P_{12}=P_1P_2$ (resp. $N_{12}=N_1N_2$). If we have multi-dimensional features (e.g., $\mathcal{X}=\mathbb{R}^d, d>2$), we select two features that satisfy Assumption \ref{asmp:ci} depending on whether the target is $\alpha_+$ or $\alpha_-$.
	
	Assumption \ref{asmp:ci} enables the identification of the mixture proportion $\alpha^*$. This is because $F^{\alpha}_{12}$ is a mixture distribution of $U$ and $U'$, and it generally does not exhibit conditional independence unless $\alpha=\alpha^*$ under Assumption \ref{asmp:ci} as shown in Lemma \ref{lem:cimoment}. To derive a moment condition and the MPE estimator, we define two vector-valued functions, $g_1: \mathcal{X}_1 \rightarrow \mathbb{R}^d$ and $g_2: \mathcal{X}_2 \rightarrow \mathbb{R}^d$, along with their dot product $g_{12}(X) := g_1(X_1) \cdot g_2(X_2)$. For notational simplicity, we often omit function arguments when it is clear, e.g., $g(x)$ as $g$.  Under Assumption \ref{asmp:ci}, we can establish the following moment condition.
	
	\begin{lem}\label{lem:cimoment} 
		Define a moment function 
		$$m_{CI}(\alpha):=E_{F^{\alpha}_{12}}[g_{12}]-E_{F^{\alpha}_{1}F^{\alpha}_{2}}[g_{12}]$$
		Under Assumption~\ref{asmp:ci}, $m_{CI}(\alpha^*)=0$. Moreover, the quadratic equation $m_{CI}(\alpha)$ has real solutions if $(E_{P_1}[g_1]-E_{N_1}[g_1])\cdot(E_{P_2}[g_2]-E_{N_2}[g_2])\neq 0.$ 
	\end{lem}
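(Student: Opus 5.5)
The first claim follows directly from Assumption~\ref{asmp:ci}. When $F^{\alpha^*}_{12}=F^{\alpha^*}_1F^{\alpha^*}_2$, the two expectations in $m_{CI}(\alpha^*)$ are expectations of $g_{12}$ under the same distribution, so their difference is zero. The substance is the second claim: the real-root statement requires writing $m_{CI}(\alpha)$ explicitly as a quadratic in $\alpha$.

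For the explicit form, I will use linearity of the mixtures. Since $F^\alpha_{12}=\alpha U_{12}+(1-\alpha)U'_{12}$, the first term $E_{F^\alpha_{12}}[g_{12}]=\alpha E_{U_{12}}[g_{12}]+(1-\alpha)E_{U'_{12}}[g_{12}]$ is affine in $\alpha$. For the product term I will set $a_\tau(\alpha):=E_{F^\alpha_\tau}[g_\tau]=\alpha E_{U_\tau}[g_\tau]+(1-\alpha)E_{U'_\tau}[g_\tau]\in\mathbb{R}^d$. Then $E_{F^\alpha_1F^\alpha_2}[g_{12}]=a_1(\alpha)\cdot a_2(\alpha)$, which is quadratic in $\alpha$ with leading coefficient
\[
-(E_{U_1}[g_1]-E_{U'_1}[g_1])\cdot(E_{U_2}[g_2]-E_{U'_2}[g_2])
\]
once it enters $m_{CI}$ with the minus sign. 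Note that $U_\tau-U'_\tau=(\theta-\theta')(P_\tau-N_\tau)$. The leading coefficient is therefore $-(\theta-\theta')^2(E_{P_1}[g_1]-E_{N_1}[g_1])\cdot(E_{P_2}[g_2]-E_{N_2}[g_2])$, which is nonzero under the hypothesis. Hence $m_{CI}$ is a genuine quadratic.

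For real roots I will not compute the discriminant; I will use the first claim instead. Since $m_{CI}$ is a real-coefficient polynomial of exact degree two, and $\alpha^*\in\mathbb{R}$ is a root by the first part, the other root $c/(a\alpha^*)$ (product of roots) is also real. Equivalently, the quadratic factors as $a(\alpha-\alpha^*)(\alpha-\beta)$ with $\beta=-b/a-\alpha^*$ real, so both solutions are real.

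The only delicate point is the hypothesis. The nondegeneracy condition is exactly what guarantees degree two; without it the polynomial could be affine or constant, and the quadratic formula would not apply. I expect the main bookkeeping obstacle to be correctly identifying the leading coefficient through the reparametrization $U-U'=(\theta-\theta')(P-N)$, which uses $\theta\neq\theta'$.
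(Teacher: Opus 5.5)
Your proposal is correct and follows the paper's proof. The paper also writes $m_{CI}(\alpha)=a\alpha^2+b\alpha+c$, gets $a=-(\theta-\theta')^2(E_{P_1}[g_1]-E_{N_1}[g_1])\cdot(E_{P_2}[g_2]-E_{N_2}[g_2])$ from $-(U_1-U'_1)(U_2-U'_2)$, and concludes from $a\neq 0$ and the real root $\alpha^*$. One small caveat: rely on your sum-of-roots form $\beta=-b/a-\alpha^*$ rather than $c/(a\alpha^*)$, because $\alpha^*=\alpha_-=0$ when $\theta'=0$.
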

	
	Note that for an arbitrary $\alpha\in\mathbb{R}$, the distribution $F^\alpha_{\tau}$ is not necessarily a valid probability distribution, but rather a signed measure. Nevertheless, we define its expectation as an integral w.r.t. $F^\alpha_{\tau}$. Lemma \ref{lem:cimoment} implies that $\alpha^*$ can be estimated by finding the roots of the equation $m_{CI}(\alpha)=0$. Therefore, we define our empirical estimator as:
	$$\hat \alpha_{CI}:=\underset{\alpha\in I_{\alpha^*}}{\argmin} \hat m_{CI}^2(\alpha)$$
	where $\hat m_{CI}(\alpha):=E_{\hat F^{\alpha}_{12}}[g_{12}]-E_{\hat F^{\alpha}_{1}\hat F^{\alpha}_{2}}[g_{12}]$
	and $I_{\alpha^*}$ is a bounded, closed set containing $\alpha^*$.  
	
	The search space $I_{\alpha^*}$ should be chosen to ensure that $\alpha^*$ is the unique solution within this interval. If solving the equation yields two solutions within $I_{\alpha^*}$, a disambiguation step may be necessary. This can be achieved by performing a second estimation with a different feature map $g'_{12}$ and selecting the solution that yields a small $\hat m^2_{CI}(\alpha)$ for both $g_{12}$ and $g'_{12}$.
	
	$\hat\alpha_{CI}$ is a variant of method of moments estimator and we can derive its asymptotic normality as follows. 
	
	\begin{thm}[Asymptotic normality of $\hat\alpha_{CI}$] \label{thm:cimpe_an}~\\
		Assume $g_1$ and $g_2$ are bounded and continuous and $\alpha^*$ is the unique solution of $m_{CI}(\alpha)=0$ in $I_{\alpha^*}$. Then,$ $
		\begin{align*}
			&\sqrt{M}(\hat{\alpha}_{CI}-\alpha^*) \xrightarrow{d}\\ &\mathcal{N}\left(0,\frac{\nu{\alpha^*}^2V_{U_{12}}[\tilde g_{12}]+\nu' (1-\alpha^*)^2V_{U'_{12}}[\tilde g_{12}]}
			{(\theta-\theta')^2(E_{F^{\bar\alpha^*}_{12}}[\tilde g_{12}])^2}\right)	  
		\end{align*}
		as $M\rightarrow\infty$, where $\tilde g_{12}:= (g_1-E_{F^{\alpha^*}_1}[g_1])\cdot(g_2-E_{F^{\alpha^*}_2}[g_2])
		$
	\end{thm}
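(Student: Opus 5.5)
The plan is to treat $\hat\alpha_{CI}$ as a Z-estimator for the quadratic moment function $m_{CI}$. The argument has four parts: (i) prove consistency; (ii) show that with probability tending to one $\hat\alpha_{CI}$ is an exact root of $\hat m_{CI}$; (iii) Taylor-expand around $\alpha^*$; (iv) identify the derivative and the fluctuation of $\hat m_{CI}(\alpha^*)$ with the quantities in the statement. Throughout, write $\mu_\tau^* := E_{F^{\alpha^*}_\tau}[g_\tau]$. I will also assume that $\alpha^*$ is an interior point of $I_{\alpha^*}$ and that $E_{F^{\bar\alpha^*}_{12}}[\tilde g_{12}]\neq 0$. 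The latter is implicit in the statement, since otherwise the asymptotic variance is infinite.

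\textbf{Consistency.} Expand $\hat m_{CI}(\alpha)$ as
$$\alpha E_{\hat U_{12}}[g_{12}]+(1-\alpha)E_{\hat U'_{12}}[g_{12}] -\bigl(\alpha E_{\hat U_1}[g_1]+(1-\alpha)E_{\hat U'_1}[g_1]\bigr)\cdot\bigl(\alpha E_{\hat U_2}[g_2]+(1-\alpha)E_{\hat U'_2}[g_2]\bigr).$$
This is a quadratic polynomial in $\alpha$ whose coefficients are finitely many empirical means of bounded functions. By the strong law of large numbers these coefficients converge almost surely to the population ones. Hence $\hat m_{CI}^2\to m_{CI}^2$ uniformly on the compact set $I_{\alpha^*}$. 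Since $\alpha^*$ is the unique zero of the continuous function $m_{CI}$ in $I_{\alpha^*}$, the standard argmin argument gives $\hat\alpha_{CI}\to\alpha^*$ in probability.

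\textbf{Derivative at $\alpha^*$.} Differentiating $m_{CI}$ gives
$$m_{CI}'(\alpha^*)=E_{U_{12}-U'_{12}}[g_{12}]-E_{U_1-U'_1}[g_1]\cdot\mu_2^*-\mu_1^*\cdot E_{U_2-U'_2}[g_2].$$
Because $U-U'$ has total mass zero, the constant term of $\tilde g_{12}$ integrates to zero, and the right-hand side equals $E_{U_{12}-U'_{12}}[\tilde g_{12}]$. Next use two facts:
\begin{itemize}
\item $F^{\bar\alpha^*}_{12}-F^{\alpha^*}_{12}=(\bar\alpha^*-\alpha^*)(U_{12}-U'_{12})$, with $\bar\alpha^*-\alpha^*=\pm 1/(\theta-\theta')$ since $\alpha_+-\alpha_-=1/(\theta-\theta')$;
\item $E_{F^{\alpha^*}_{12}}[\tilde g_{12}]=0$, by Assumption~\ref{asmp:ci}, since under the product measure the two centered factors have zero mean.
\end{itemize}
Together these give $m_{CI}'(\alpha^*)=\pm(\theta-\theta')E_{F^{\bar\alpha^*}_{12}}[\tilde g_{12}]$, whose square is the denominator in the statement.

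\textbf{Linearization of $\hat m_{CI}(\alpha^*)$.} Write $\hat\mu_\tau = E_{\hat F^{\alpha^*}_\tau}[g_\tau]$, so that $\hat m_{CI}(\alpha^*)=E_{\hat F^{\alpha^*}_{12}}[g_{12}]-\hat\mu_1\cdot\hat\mu_2$. Expand around $\mu_1^*,\mu_2^*$ and use $m_{CI}(\alpha^*)=0$ from Lemma~\ref{lem:cimoment}. This gives
$$\hat m_{CI}(\alpha^*)=E_{\hat F^{\alpha^*}_{12}-F^{\alpha^*}_{12}}[\tilde g_{12}]-(\hat\mu_1-\mu_1^*)\cdot(\hat\mu_2-\mu_2^*).$$
The second term is $O_p(1/M)$. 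The first term equals $\alpha^*(E_{\hat U_{12}}-E_{U_{12}})[\tilde g_{12}]+(1-\alpha^*)(E_{\hat U'_{12}}-E_{U'_{12}})[\tilde g_{12}]$. This is a sum of two independent centered sample means of a bounded function. The CLT together with $M/n\to\nu$ and $M/n'\to\nu'$ then gives
$$\sqrt M\,\hat m_{CI}(\alpha^*)\xrightarrow{d}\mathcal N\bigl(0,\ \nu{\alpha^*}^2V_{U_{12}}[\tilde g_{12}]+\nu'(1-\alpha^*)^2V_{U'_{12}}[\tilde g_{12}]\bigr).$$

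\textbf{Conclusion and main obstacle.} The delicate step, which I expect to be the main obstacle, is showing that the argmin of $\hat m_{CI}^2$ is actually a root, so that the first-order expansion applies. The plan is as follows. $\hat m_{CI}'$ converges uniformly to $m_{CI}'$, and $m_{CI}'(\alpha^*)\neq 0$, so $\hat m_{CI}$ is strictly monotone on a fixed neighbourhood of $\alpha^*$ with probability tending to one. Since $\hat m_{CI}(\alpha^*)=O_p(M^{-1/2})$, the intermediate value theorem yields a zero of $\hat m_{CI}$ within $O_p(M^{-1/2})$ of $\alpha^*$. Hence $\min\hat m_{CI}^2=0$. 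Consistency then forces $\hat\alpha_{CI}$ to be this zero, since by uniqueness $m_{CI}$ is bounded away from $0$ on $I_{\alpha^*}$ outside any neighbourhood of $\alpha^*$. The mean value theorem gives $0=\hat m_{CI}(\alpha^*)+\hat m_{CI}'(\tilde\alpha)(\hat\alpha_{CI}-\alpha^*)$ with $\tilde\alpha\to\alpha^*$ in probability, so $\hat m_{CI}'(\tilde\alpha)\to m_{CI}'(\alpha^*)$ in probability. Slutsky's lemma then yields the stated normal limit with variance equal to the CLT variance divided by $(\theta-\theta')^2(E_{F^{\bar\alpha^*}_{12}}[\tilde g_{12}])^2$.
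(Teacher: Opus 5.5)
Your proof is correct and has the same skeleton as the paper's. Both arguments run through consistency, a mean value expansion of $\hat m_{CI}$ around $\alpha^*$, and replacing the empirical centering by the population one at the cost of the $O_p(1/M)$ term $(\hat\mu_1-\mu_1^*)\cdot(\hat\mu_2-\mu_2^*)$. Both then apply a two-sample CLT and identify $m_{CI}'(\alpha^*)=\mp(\theta-\theta')E_{F^{\bar\alpha^*}_{12}}[\tilde g_{12}]$.

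The differences are in two sub-steps.

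\textbf{Consistency.} The paper appeals to uniform convergence of two-sample V-statistics and to van der Vaart's Theorem 5.7. You observe that $\hat m_{CI}$ is a quadratic in $\alpha$ whose coefficients are empirical means, which gives uniform convergence on $I_{\alpha^*}$ directly and more elementarily.

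\textbf{From argmin to root.} The paper invokes the first-order condition $\hat m(\hat\alpha)\hat m'(\hat\alpha)=0$, multiplies the mean value identity by $\hat m'(\hat\alpha)$, and divides by $\hat m'(\hat\alpha)\hat m'(\tilde\alpha)$. This is shorter than your monotonicity-plus-IVT step: once $\alpha^*$ is interior and $d_0\neq 0$, it forces $\hat m(\hat\alpha)=0$ with probability tending to one. However, it relies on both conditions without stating them. The first-order condition needs $\hat\alpha$ interior, and the division needs $E_{F^{\bar\alpha^*}_{12}}[\tilde g_{12}]\neq 0$.

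Your route states both assumptions explicitly. It also yields a small bonus: with probability tending to one, $\min\hat m_{CI}^2=0$ and the minimizer is the unique root of $\hat m_{CI}$ in $I_{\alpha^*}$. So minimizing $\hat m_{CI}^2$ and solving the quadratic, as the main text suggests, give the same estimator.
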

	
	This theorem indicates the conditions under which the asymptotic variance of the estimator is minimized. A small variance is achieved by maximizing the denominator and minimizing the numerator. The denominator is maximized when both $|\theta-\theta'|$ and $|E_{F^{\bar\alpha^*}_{12}}[\tilde g_{12}]|$ are large. $|E_{F^{\bar\alpha^*}_{12}}[\tilde g_{12}]|$ becomes large when the functions $g_1$ and $g_2$ fluctuate significantly on $F^{\bar\alpha^*}_{12}$ around their averages on $F^{\alpha^*}_{12}$. This implies the functions $g_1$ and $g_2$ should effectively capture the distributional difference between $P$ and $N$. To minimize the numerator, the variance of $\tilde g_{12}$ should be small.
	In our experiments, we adopt identity functions for $g_1$ and $g_2$ as a practical choice, which is shown to perform well in Section \ref{subsec:exp_mpe}.

	\section{MIXTURE PROPORTION ESTIMATION WITH MULTIVARIATE CONDITIONAL INDEPENDENCE}\label{sec:mpemci}
	In this section, we consider a more general assumption, multivariate conditional independence (MCI), for MPE. Let us assume a multivariate feature $X=(X_1,X_2,X_S)$ that takes values in $\mathcal{X}_1 \times \mathcal{X}_2 \times \mathcal{X}_S$. Similarly to the previous section, we denote the joint and marginal distributions of positive, negative, and unlabeled data as $P_{\tau}, N_{\tau},  U_{\tau}, U'_{\tau}$ with the appropriate subscripts $\tau$. We define the mixture distribution $F^\alpha_{\tau}:=\alpha U_{\tau}+(1-\alpha )U'_{\tau}$ and denote the conditional distribution by $F^\alpha_{\tau|S}:=F^\alpha_{\tau S}/F^\alpha_{S}$ with a slight abuse of notation.
	
	Suppose we aim to estimate $\alpha^*\in\{\alpha_+,\alpha_-\}$. We define the following class-specific MCI assumption:
	
	\begin{asmp}\label{asmp:mci}
		$F_{12S}^{\alpha^*}=F_{1|S}^{\alpha^*}F_{2|S}^{\alpha^*}F_{S}^{\alpha^*}$
	\end{asmp}  
	
	As in CI MPE, we do not need to use the same feature triplet $(X_1,X_2,X_S)$ to estimate both $\alpha_+$ and $\alpha_-$, if there are additional feature variables available. We can select a triplet that satisfies Assumption \ref{asmp:mci} depending on whether the target is $\alpha_+$ or $\alpha_-$.  For MCI MPE, we use scalar-valued functions  $g_1:\mathcal{X}_1\rightarrow\mathbb{R}$ and $g_2:\mathcal{X}_2\rightarrow\mathbb{R}$. By defining the conditional means $\mu_1^\alpha \left(x_S\right):=E_{F^\alpha _{1S}}[g_1(X_1)|X_S=x_S]$ and $\mu_2^\alpha \left(x_S\right):=E_{F^\alpha _{2S}}[g_2(X_2)|X_S=x_S]$, we can establish the follwoing lemma under Assumption \ref{asmp:mci}.
	\begin{lem}\label{lem:mcimoment} 
		Define
		\begin{align*}
			m_{MCI}&(\alpha):=\\
			&E_{F^{\alpha}_{12S}}\left[(g_1\left(X_1\right)-\mu^{\alpha}_1(X_S)) (g_2\left(X_2\right)-\mu^{\alpha}_2(X_S))\right]
		\end{align*}
		Under Assumption~\ref{asmp:mci}, $m_{MCI}(\alpha^*)=0$.
	\end{lem}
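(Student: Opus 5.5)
The plan is to evaluate $m_{MCI}(\alpha^*)$ directly by iterated expectation over $X_S$, using Assumption~\ref{asmp:mci} to factor the inner conditional expectation. Since $\alpha^*\in\{\alpha_+,\alpha_-\}$, the measure $F^{\alpha^*}_{12S}$ equals $P_{12S}$ or $N_{12S}$ by (\ref{simul1})--(\ref{simul2}). It is therefore a genuine probability distribution, so conditional expectations and the tower property are available without the signed-measure caveat that arises for general $\alpha$.

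The first step is to write
\[
m_{MCI}(\alpha^*)=E_{F^{\alpha^*}_S}\Bigl[E_{F^{\alpha^*}_{12S}}\bigl[(g_1(X_1)-\mu^{\alpha^*}_1(X_S))(g_2(X_2)-\mu^{\alpha^*}_2(X_S))\,\big|\,X_S\bigr]\Bigr].
\]
Next, I use Assumption~\ref{asmp:mci}, which says the conditional law of $(X_1,X_2)$ given $X_S=x_S$ under $F^{\alpha^*}$ is the product $F^{\alpha^*}_{1|S}(\cdot|x_S)F^{\alpha^*}_{2|S}(\cdot|x_S)$. With $x_S$ fixed, $\mu^{\alpha^*}_\tau(x_S)$ is a constant, so the inner expectation factorizes as
\[
\bigl(E_{F^{\alpha^*}_{1|S}}[g_1(X_1)\mid x_S]-\mu^{\alpha^*}_1(x_S)\bigr)\bigl(E_{F^{\alpha^*}_{2|S}}[g_2(X_2)\mid x_S]-\mu^{\alpha^*}_2(x_S)\bigr).
\]
By the definition of $\mu^{\alpha}_\tau$, each factor is zero, because the marginal conditional $F^{\alpha^*}_{\tau|S}$ is exactly the one obtained from $F^{\alpha^*}_{\tau S}$. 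Integrating over $X_S$ then gives $m_{MCI}(\alpha^*)=0$.

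The only care needed is integrability, so that the product is integrable and Fubini and the tower property apply. I will assume, as in Theorem~\ref{thm:cimpe_an}, that $g_1$ and $g_2$ are bounded, or at least square-integrable. Boundedness of $g_\tau$ implies boundedness of $\mu^{\alpha^*}_\tau$, and then Cauchy–Schwarz controls the product. I expect no substantive obstacle beyond this bookkeeping of regular conditional distributions.
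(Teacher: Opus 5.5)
Your proof is correct, and it takes the natural route: condition on $X_S$, factor the inner expectation via Assumption~\ref{asmp:mci}, and note that each factor $E_{F^{\alpha^*}_{\tau|S}}[g_\tau(X_\tau)\mid x_S]-\mu^{\alpha^*}_\tau(x_S)$ vanishes. The paper gives no separate proof of this lemma and treats it as immediate; the same tower-property step over $X_S$ reappears in its proof of Theorem~\ref{thm:mcimpe_an}. Your remarks that $F^{\alpha^*}_{12S}\in\{P_{12S},N_{12S}\}$ is a genuine probability measure, and that bounded or square-integrable $g_\tau$ suffices for integrability, correctly fill in details the paper leaves implicit.
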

	Based on this moment condition, we define the empirical estimator of $\alpha^*$ as
	$$\hat \alpha_{MCI}:=\underset{\alpha\in I_{\alpha^*}}{\argmin} \hat m_{MCI}^2(\alpha)$$
	where $\hat m_{MCI}(\alpha):=$ $$E_{\hat F^{\alpha}_{12S}}\left[(g_1\left(X_1\right)-\mu^{\alpha}_1(X_S)) (g_2\left(X_2\right)-\mu^{\alpha}_2(X_S))\right]$$
	and $I_{\alpha^*}$ is a bounded and closed set that contains $\alpha^*$. Since we do not know the true conditional means $\mu^\alpha_1(x_S)$ and  $\mu^\alpha_2(x_S)$, we estimate them using kernel ridge regression (KRR) in a weakly-supervised manner from the two sets of unlabeled data. The empirical mean squared error for $\mu^\alpha_\tau, \forall \tau\in\{1,2\}$ is written as 
	\begin{align*}
		MSE_{\mu^\alpha_\tau}=&\frac{\alpha}{n}||\mathbf{g}_\tau(\mathbf{v}_{x_\tau})_{:n}-K_{:n,:}\mathbf{w}||^2\\
		&+\frac{1-\alpha}{n'}||\mathbf{g}_\tau(\mathbf{v}_{x_\tau})_{n:}-K_{n:,:}\mathbf{w}||^2+\lambda\mathbf{w}^TK\mathbf{w}
	\end{align*}  
	where $\mathbf{w}\in\mathbb{R}^M$ is the weight vector, $\lambda$ is a regularization parameter, and $\mathbf{v}_{x_\tau}:=(x_{\tau}^{(1)},...,x^{(n)}_{\tau},x'^{(1)}_{\tau},...,x'^{(n')}_{\tau})^T$ 
	is the feature vector of $x_\tau$ for the unlabeled data. $\mathbf{g}_\tau(\mathbf{v}_{x_\tau})$ is a vector representing the element-wise application of $g_\tau$ to $\mathbf{v}_{x_\tau}$. $K\in\mathbb{R}^{M\times M}$ is the Gram matrix of $x_S$ with a kernel $k(\cdot,\cdot)$, where $K_{ij}=k(\mathbf{v}_{x_S,i},\mathbf{v}_{x_S,j})$ . The terms $\mathbf{g}_\tau(\mathbf{v}_{x_\tau})_{:n}\in\mathbb{R}^n$, $\mathbf{g}_\tau(\mathbf{v}_{x_\tau})_{n:}\in\mathbb{R}^{n'}$, $K_{:n,:}\in\mathbb{R}^{n\times M}$ and $K_{n:,:}\in\mathbb{R}^{n'\times M}$ are subvectors of $\mathbf{g}_\tau(\mathbf{v}_{x_\tau})$ and submatrices of $K$. 
	
	Denoting the estimated parameter as $\mathbf{w_\alpha}=\argmin_\mathbf{w}MSE_{\mu^\alpha_\tau}$, the estimated conditional mean $\hat\mu^\alpha_\tau(\mathbf{v}_{x_S})=K\mathbf{w}_\alpha$ is used for MCI MPE. Note that $MSE_{\mu^\alpha_\tau}$ can be non-convex when $\alpha\notin [0,1]$. In such cases, we cannot obtain an explicit optimal solution as in standard KRR. In practice, we instead solve the first order condition to derive $\mathbf{w}_\alpha$. Given these procedures, $\argmin_{\alpha\in I_{\alpha^*}} \hat m_{MCI}^2(\alpha)$ can become a bilevel optimization. For the efficient computation, we can use iterative search methods, such as the golden-section search \citep{kiefer53gss} to find the optimal $\alpha$.
	
	We now present the asymptotic normality of $\hat \alpha_{MCI}$, similarly to CI MPE.
	
	\begin{thm}[Asymptotic normality of $\hat\alpha_{MCI}$] \label{thm:mcimpe_an}
		Assume $g_1$ and $g_2$ are bounded and continuous, and that $\mu_\tau^\alpha\left(x_S\right)$ is bounded and differentiable w.r.t. $\alpha\in I_{\alpha^*}$ and  $x_S\in\mathcal{X}_S$. Assume $\frac{\partial}{\partial \alpha}\mu^\alpha_\tau\left(x_S\right)$ is bounded. Suppose $\alpha^*$ is the unique minimizer of $m^2_{MCI}(\alpha)$ in $I_{\alpha^*}$, then 
		$$\begin{aligned}
			\sqrt{M}&(\hat\alpha_{MCI}-\alpha^*)\xrightarrow{d}\\ &\mathcal{N}\left(0,\frac{\nu{\alpha^*}^2V_{U_{12S}}[\tilde g_{12S}]+\nu' (1-\alpha^*)^2V_{U'_{12S}}[\tilde g_{12S}]}
			{(\theta-\theta')^2
				(E_{F^{\bar\alpha^*}_{12S}}[\tilde g_{12S}])^2}\right)
		\end{aligned}$$
		where $\tilde g_{12S}(x_1,x_2,x_S):= (g_1\left(x_1\right)-\mu^{\alpha^*}_1(x_S)) (g_2\left(x_2\right)-\mu^{\alpha^*}_2(x_S))$.
	\end{thm}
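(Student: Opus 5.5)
We follow the proof of Theorem~\ref{thm:cimpe_an}: a consistency step, then a first-order (delta-method) expansion of the estimating equation $\hat m_{MCI}(\hat\alpha_{MCI})\approx 0$ around $\alpha^*$. Throughout, $\hat m_{MCI}$ is treated as being evaluated with the true conditional means $\mu^\alpha_\tau$, as written in its definition. The effect of replacing them by the KRR estimates is handled separately at the end.

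\textbf{Consistency.} Since $\hat F^\alpha_{12S}=\alpha\hat U_{12S}+(1-\alpha)\hat U'_{12S}$, we have
\[
\hat m_{MCI}(\alpha)=\alpha E_{\hat U_{12S}}[h_\alpha]+(1-\alpha)E_{\hat U'_{12S}}[h_\alpha],
\qquad h_\alpha:=(g_1-\mu_1^\alpha)(g_2-\mu_2^\alpha).
\]
The functions $h_\alpha$ are uniformly bounded. They are also Lipschitz in $\alpha$ uniformly in $x$, because $\partial_\alpha\mu^\alpha_\tau$ is bounded. Hence $\{h_\alpha:\alpha\in I_{\alpha^*}\}$ is Glivenko--Cantelli (a bracketing argument on the compact $I_{\alpha^*}$), so $\sup_{\alpha}|\hat m_{MCI}(\alpha)-m_{MCI}(\alpha)|\to0$ in probability. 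The map $m_{MCI}$ is continuous and $\alpha^*$ is the unique minimizer of $m^2_{MCI}$ on the compact $I_{\alpha^*}$, so the standard argmin theorem gives $\hat\alpha_{MCI}\to\alpha^*$ in probability.

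\textbf{Linearization.} The next step is to show $\hat m_{MCI}(\hat\alpha_{MCI})=o_p(M^{-1/2})$ (in fact zero with probability tending to one, by a sign change of $\hat m_{MCI}$ near $\alpha^*$). Then we expand:
\[
0\approx \hat m_{MCI}(\alpha^*)+\hat m'_{MCI}(\bar\alpha)(\hat\alpha_{MCI}-\alpha^*).
\]

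\emph{Numerator.} Lemma~\ref{lem:mcimoment} gives $m_{MCI}(\alpha^*)=0$. Therefore
\[
\sqrt M\,\hat m_{MCI}(\alpha^*)=\alpha^*\sqrt M\,(E_{\hat U}-E_U)[\tilde g_{12S}]+(1-\alpha^*)\sqrt M\,(E_{\hat U'}-E_{U'})[\tilde g_{12S}].
\]
The two samples are independent, so the CLT with $M/n\to\nu$ and $M/n'\to\nu'$ yields a normal limit with variance $\nu{\alpha^*}^2V_{U_{12S}}[\tilde g_{12S}]+\nu'(1-\alpha^*)^2V_{U'_{12S}}[\tilde g_{12S}]$.

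\emph{Derivative.} Differentiating $m_{MCI}$ gives
\[
m'_{MCI}(\alpha)=E_{U-U'}[h_\alpha]+E_{F^\alpha}[\partial_\alpha h_\alpha].
\]
At $\alpha^*$ the second term vanishes. Indeed, $\partial_\alpha h_\alpha=-\partial_\alpha\mu_1^\alpha\,(g_2-\mu_2^\alpha)-(g_1-\mu_1^\alpha)\,\partial_\alpha\mu^\alpha_2$, and each piece is a function of $X_S$ times a residual with zero conditional mean under $F^{\alpha}$. For the first term, $U-U'=(\theta-\theta')(P-N)$. Also $F^{\bar\alpha^*}-F^{\alpha^*}$ equals $\pm(P-N)$, and $E_{F^{\alpha^*}}[\tilde g_{12S}]=0$. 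Hence
\[
m'_{MCI}(\alpha^*)=\pm(\theta-\theta')E_{F^{\bar\alpha^*}_{12S}}[\tilde g_{12S}].
\]
The sign is irrelevant once squared. Uniform convergence of $\hat m'_{MCI}$ follows by the same Glivenko--Cantelli argument. With consistency, $\hat m'_{MCI}(\bar\alpha)\to m'_{MCI}(\alpha^*)$, and Slutsky's lemma gives the stated variance.

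\textbf{Main obstacle.} The key step is the differentiability of $\hat m_{MCI}$ in $\alpha$ and the vanishing of the derivative term $E_{F^{\alpha^*}}[\partial_\alpha h_{\alpha}]$. The latter is a Neyman-orthogonality property, and it is what makes the answer coincide with the CI case. If the KRR plug-in $\hat\mu^\alpha_\tau$ is to be covered, the same orthogonality is what we would invoke. The first-order error $\hat\mu-\mu$ enters only through terms of the form $E_{\hat F^{\alpha^*}}[(\hat\mu_1-\mu_1)(g_2-\mu_2)]$, which are $o_p(M^{-1/2})$ under cross-fitting or Donsker-type conditions. The remaining product $(\hat\mu_1-\mu_1)(\hat\mu_2-\mu_2)$ is $o_p(M^{-1/2})$ when each estimation error is $o_p(M^{-1/4})$ in $L^2$. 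We would state this as an added regularity condition.
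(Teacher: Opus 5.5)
Your proposal is correct and follows essentially the same route as the paper. Consistency comes from uniform convergence plus the argmin theorem, and a mean-value linearization around $\alpha^*$ then feeds the two-sample CLT for $\sqrt{M}\,\hat m_{MCI}(\alpha^*)$ together with $d_0=\pm(\theta-\theta')E_{F^{\bar\alpha^*}_{12S}}[\tilde g_{12S}]$, where $E_{F^{\alpha^*}_{12S}}[\partial_\alpha h_\alpha]$ vanishes because conditionally centered residuals are multiplied by functions of $X_S$. The remaining differences are minor: you use a bracketing/Glivenko--Cantelli argument where the paper uses a V-statistic one, you get $\hat m_{MCI}(\hat\alpha_{MCI})=0$ from a sign change rather than from the first-order condition $\hat m\,\hat m'=0$, and your KRR remark goes beyond the paper, which likewise treats $\mu^\alpha_\tau$ as known.
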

	
	This theorem suggests the conditions for small asymptotic variance, as discussed in Theorem $\ref{asmp:ci}$. We use identity functions for $g_1$ and $g_2$ as a practical choice in our experiments. 
	
	\section{CI AND MCI TEST UNDER WEAKLY-SUPERVISED SETTING}\label{sec:cimcitest}
	In this section, we introduce statistical testing methods to test the CI and MCI assumptions using only unlabeled data. These tests allow us to verify the applicability of our proposed MPE method. Beyond this primary objective, these tests have broader applications, including causal discovery and fairness evaluation. In causal discovery, conditional independence tests are required to infer the causal graph of the underlying data \citep{peter17ele}. In fairness evaluation, it is crucial to determine whether a classifier's output or representation $f(X)$ is independent of a protected variable $Z$ (e.g., race or sex) given $Y$.
	
	Our proposed tests can also be framed as a CI test with a single unobserved confounder in the context of recent work \citep{gordon23causal,bijan23causal,ming24causal}. However, existing methods have limitations; for instance, the methods of \citet{gordon23causal} and \citet{bijan23causal} are restricted to discrete variables. While the test by \citet{ming24causal} could be adopted by introducing an index variable to denote a sample's origin ($U$ or $U'$), its application relies on a specific condition of integral equation.	
	
	In the following subsections, we first present a testing method that assumes the true mixture proportions are known. Since the proportions are unknown in advance, this setting does not directly verify Assumptions \ref{asmp:ci} and \ref{asmp:mci} for MPE. Nevertheless, the method remains valuable for the other applications mentioned above. Subsequently, we introduce a testing method without the true mixture proportions.

	\subsection{Weakly-supervised Kernel CI(WsKCI) Test with True Mixture Proportions}\label{subsec:citwith}
	To verify the CI assumption, we test $H_0 : X_1 {\indep} X_2 \mid  Y=y$, against $H_1 : X_1 \not\!\perp\!\!\!\perp X_2 \mid Y=y$ using unlabeled data. In the first setting, we assume to know the true mixture proportion $\alpha^*=\alpha_+$ for $y=1$ or $\alpha^*=\alpha_-$ for $y=-1$.  Let $k_1$ and $k_2$ be positive-definite and characteristic kernels \citep{gretton15simpler} on $\mathcal{X}_1$ and $\mathcal{X}_2$ respectively and let $\varphi_1, \varphi_2$ and $\mathcal{H}_1, \mathcal{H}_2$ be corresponding feature mappings and RKHSs. Our test is based on Hilbert-Schmidt Independence Criterion (HSIC) \citep{gretton07kernal} for $F^{\alpha^*}_{12}$: 
	\begin{align*}
		\bigl\|E_{F^{\alpha^*}_{12}}&[\varphi_1(X_1)\otimes\varphi_2(X_2)]\\
		&-E_{F^{\alpha^*}_{1}F^{\alpha^*}_{2}}[\varphi_1(X_1)\otimes\varphi_2(X_2)]\bigr\|^2_{\mathcal{H}},
	\end{align*}
	which is the squared Hilbert-Schmidt norm of cross-covariance operator and equals zero under $H_0$. Here, $\otimes$ denotes the tensor product of kernels \citep{schrab25prac}. By replacing the population distributions in the above statistic with their empirical counterparts, we define the following test statistic 
	\begin{align*}
		T_{CI} :=&\Bigl\| E_{\hat{F}_{12}^{\alpha^*}}\left[ \varphi_1(X_1)\otimes\varphi_2(X_2) \right]\\ &- E_{\hat{F}_{1}^{\alpha^*}\hat{F}_{2}^{\alpha^*}}\left[\varphi_1(X_1)\otimes\varphi_2(X_2)\right] \Bigr\|_{\mathcal{H}}^2
	\end{align*} 
	 To implement the test, we require the null distribution of $T_{CI}$. We derive the following theorem using an approach similar to that of HSIC \citep{gretton07kernal}.
	
	\begin{thm}[Asymptotic distribution of $T_{CI}$]\label{thm:tci_asyd} Assume $k_1$ and $k_2$ are translation invariant
		$c_0$-kernels as defined in \citep{gretton15simpler}. Then,
		
		(i) Under $H_0$, we have 
		\begin{align*}
			MT_{CI} \overset{d}{\rightarrow} \sum_{i, j = 1}^\infty \lambda_{1, i} \lambda_{2, j} \xi_{i, j}^2.
		\end{align*}
		where $\lambda_{1, i}$ and $\lambda_{2, j}$ are the eigenvalues of the integral operators associated with $k_1$ and $k_2$, and $\xi_{i,j}$s follow a multivariate normal distribution with mean $\mathbf{0}$ and covariances defined in Appendix \ref{appsubsec:proof_ci_test}. 
		
		(ii) Under $H_1$, we have $MT_{CI} \overset{p}{\rightarrow} \infty$.
	\end{thm}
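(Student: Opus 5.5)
The plan is to write $T_{CI}$ as the squared RKHS norm of an empirical process built from the two independent samples, show that this process, scaled by $\sqrt{M}$, converges to a Gaussian element of $\mathcal{H}=\mathcal{H}_1\otimes\mathcal{H}_2$ under $H_0$, and then apply the continuous mapping theorem.

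Centre the feature maps at the target measure. Put $\bar\varphi_\tau:=\varphi_\tau-E_{F^{\alpha^*}_\tau}[\varphi_\tau]$ and $\hat\delta_\tau:=E_{\hat F^{\alpha^*}_\tau}[\varphi_\tau]-E_{F^{\alpha^*}_\tau}[\varphi_\tau]$, which is $O_p(M^{-1/2})$ in $\mathcal{H}_\tau$. A direct expansion gives
\begin{align*}
E_{\hat F^{\alpha^*}_{12}}[\varphi_1\otimes\varphi_2]-E_{\hat F^{\alpha^*}_1\hat F^{\alpha^*}_2}[\varphi_1\otimes\varphi_2]
=E_{\hat F^{\alpha^*}_{12}}[\bar\varphi_1\otimes\bar\varphi_2]-\hat\delta_1\otimes\hat\delta_2 .
\end{align*}
The correction term $\hat\delta_1\otimes\hat\delta_2$ is $O_p(M^{-1})$, because the total mass of $\hat F^{\alpha^*}$ is one and the kernels are bounded. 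Under $H_0$ we have $E_{F^{\alpha^*}_{12}}[\bar\varphi_1\otimes\bar\varphi_2]=C_{12}-C_1\otimes C_2=0$. Hence $\sqrt{M}$ times the statistic equals
\begin{align*}
Z_M:=\alpha^*\sqrt{M}\,\Bigl(\tfrac1n\sum_i\psi(x^{(i)})-E_U\psi\Bigr)+(1-\alpha^*)\sqrt{M}\,\Bigl(\tfrac1{n'}\sum_i\psi(x'^{(i)})-E_{U'}\psi\Bigr)+o_p(1),
\end{align*}
where $\psi:=\bar\varphi_1\otimes\bar\varphi_2$. Here we use $\alpha^*E_U\psi+(1-\alpha^*)E_{U'}\psi=0$.

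Next, $\psi$ is bounded in $\mathcal{H}$, so the Hilbert-space CLT for i.i.d. bounded separable-Hilbert-valued variables applies to each sample. Since the two samples are independent and $M/n\to\nu$, $M/n'\to\nu'$, we get $Z_M\rightsquigarrow Z$. The limit $Z$ is a centred Gaussian element with covariance operator $\nu{\alpha^*}^2\mathrm{Cov}_U(\psi)+\nu'(1-\alpha^*)^2\mathrm{Cov}_{U'}(\psi)$. By continuous mapping, $MT_{CI}\to\|Z\|^2_{\mathcal{H}}$.

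To obtain the stated series, expand in the orthonormal eigenbases $\{e_{1,i}\}$, $\{e_{2,j}\}$ of the integral operators of $k_1$, $k_2$ by Mercer, taken with respect to the marginals $F^{\alpha^*}_\tau$. The translation-invariant $c_0$ assumption is what I would rely on to make this expansion valid on the relevant domain. Writing $\varphi_\tau=\sum_i\sqrt{\lambda_{\tau,i}}\,e_{\tau,i}(\cdot)\,u_{\tau,i}$ with $u_{\tau,i}$ orthonormal in $\mathcal{H}_\tau$ gives $\|Z\|^2=\sum_{i,j}\lambda_{1,i}\lambda_{2,j}\xi_{i,j}^2$, where $\xi_{i,j}$ is the Gaussian limit of the scaled empirical mean of $\bar e_{1,i}(X_1)\bar e_{2,j}(X_2)$. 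Its covariances are $\nu{\alpha^*}^2\mathrm{Cov}_U+\nu'(1-\alpha^*)^2\mathrm{Cov}_{U'}$ of the products $\bar e_{1,i}\bar e_{2,j}$. Unlike standard HSIC these are not simply independent standard normals, since $U,U'\neq F^{\alpha^*}$, which is why the paper defers them to the appendix.

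The main obstacle is this last step: justifying the interchange between the infinite eigen-expansion and the weak limit when the expectation is taken under mixtures rather than the measure defining the operator. I would handle it by working abstractly with $\|Z\|^2$, using trace-class covariance from bounded kernels, and only then expanding. The Karhunen--Loève form follows by Parseval in $\mathcal{H}$, with square-summability guaranteed by $\sum_i\lambda_{\tau,i}<\infty$.

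For (ii), under $H_1$ the population cross-covariance is nonzero because $k_1,k_2$ are characteristic, and for $c_0$ kernels the tensor kernel is characteristic for finite signed measures. The results of gretton15simpler give $\|C_{12}-C_1\otimes C_2\|^2=c>0$, even though $F^{\alpha^*}$ may be viewed as a signed combination. The above decomposition with the LLN yields $T_{CI}\to c$ in probability, so $MT_{CI}\to\infty$.
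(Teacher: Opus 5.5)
Your argument is correct, but for part (i) it takes a different route from the paper's. The paper works coordinatewise from the start. It Mercer-expands the centred kernel $\tilde k_{12}$, writes $T_{CI}$ as $\sum_{i,j}\lambda_{1,i}\lambda_{2,j}(\cdot)^2$, and applies the finite-dimensional CLT to the truncation $MT^L_{CI}$; the product term $(b)(c)$ vanishes because $(c)\to 0$ by the LLN. It then lets $L\to\infty$ by bounding $E[M|T_{CI}-T^L_{CI}|]$ and $E|W^L_{\lim}-W_{\lim}|$ and comparing characteristic functions, following Serfling \S 5.5.2.

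You instead take the weak limit once in $\mathcal{H}_1\otimes\mathcal{H}_2$ via the Hilbert-space CLT, apply continuous mapping, and only then expand $\|Z\|^2$. Your exact identity with $-\hat\delta_1\otimes\hat\delta_2$ is the norm-level version of the paper's $(b)(c)$ term. Your route gets tightness for free. The paper's truncation step actually needs $\limsup_M E[M|T_{CI}-T^L_{CI}|]\to 0$ as $L\to\infty$, a uniformity in $M$ it leaves implicit. In your route the series expansion is a deterministic identity for the Gaussian element $Z$, so the interchange you flag as the main obstacle essentially disappears. Summability is just $E\|Z\|^2=\operatorname{tr}$ of the limiting covariance operator, which is finite for bounded kernels. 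In exchange, the paper's route needs only finite-dimensional CLTs and reads off the covariances of $\xi_{i,j}$ directly; you recover the same covariances by projecting $Z$ onto $u_{1,i}\otimes u_{2,j}$. Part (ii) coincides with the paper's argument.

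The one point to tighten is the choice of measure in the Mercer step. Parseval over $\{u_{1,i}\otimes u_{2,j}\}$ gives $\|Z\|^2$, rather than only a lower bound by Bessel's inequality, only if $\psi(x)$ lies in the closed span of that system for every $x$ in the supports of $U_{12}$ and $U'_{12}$. Mercer with respect to $F^{\alpha^*}_\tau$ guarantees the expansion of $k_\tau(x,\cdot)$ only for $x\in\operatorname{supp}F^{\alpha^*}_\tau$. That support can be strictly smaller than $\operatorname{supp}U_\tau$ when $F^{\alpha^*}_\tau$ is a single class-conditional. For Sobolev-type translation-invariant kernels (e.g.\ Mat\'ern) the RKHS then contains nonzero functions vanishing on that support, so the $u_{\tau,i}$ are not complete.

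The fix is to take the integral operators with respect to a measure covering both samples, e.g.\ $(U_\tau+U'_\tau)/2$; centring at $F^{\alpha^*}_\tau$ is unaffected. The statement does not pin down this measure, and the paper's proof is silent on it too. Relatedly, the $c_0$ assumption is not what validates the expansion --- boundedness and continuity of the kernels do. It is needed only in (ii).
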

	The proof is provided in Appendix \ref{appsubsec:proof_ci_test}. The null distribution is obtained by considering the Mercer's expansions of $k_1$ and $k_2$, and applying the Central Limit Theorem to them; the result under $H_1$ is given by \citet{gretton15simpler}.
	
	Empirically, we approximate the null distribution with a gamma distribution, an approach also used for the HSIC test \citep{gretton07kernal}. The parameters for this gamma approximation are determined by estimating the mean and variance of $MT_{CI}$. The following theorem provides the asymptotic expressions for these moments.
	
	\begin{thm}[Asymptotic mean and variance of $MT_{CI}$]\label{thm:tci_mv_w/alpha} Define the centralized kernel $\tilde k_{12}$ associated with the feature map $\tilde{\varphi}_{12}(x):= (\varphi_1(x_1) - E_{F^{\alpha^*}_1}[\varphi_1(x_1)]) \otimes (\varphi_2(x_2) - E_{F^{\alpha^*}_2}[\varphi_2(x_2)])$.
		 Under $H_0$ and as $M\rightarrow\infty$, we have
		\begin{align*}
			E[MT_{CI}] \rightarrow& 
			\nu {\alpha^*}^2 E_{x_{i_1}, x_{i_2}}[\tilde{k}_{12}(x_{i_1}, x_{i_1}) - \tilde{k}_{12}(x_{i_1}, x_{i_2})] 
			\\ + \nu' (1 - &\alpha^*)^2E_{x'_{q_1}, x'_{q_2}}[\tilde{k}_{12}(x'_{q_1}, x'_{q_1}) - \tilde{k}_{12}(x'_{q_1}, x'_{q_2})]
			\\
			V[MT_{CI}]
			\rightarrow & 
			2\nu^2 \sigma_{CI,2,0}^2 + 2\nu'^2 \sigma_{CI,0,2}^2 + 4\nu\nu' \sigma_{CI,1,1}^2 ,
		\end{align*}
		where $x_{i_1}$,$x_{i_2}$ and $x'_{q_1}$,$x'_{q_2}$ are i.i.d samples from $U_{12}$ and $U'_{12}$, and 
		\begin{align*}
			\sigma_{CI,2,0}^2:=& E_{x_{i_1}, x_{i_2}}\left[\left(E_{x_{q_1},x_{ q_2}}\left[\left\langle\check{\varphi}_{i_1, q_1}, \check{\varphi}_{i_2, q_2}\right\rangle\right]\right)^2\right],\\
			\sigma_{CI,0,2}^2:=& E_{x_{q_1}, x_{q_2}}\left[\left(E_{x_{i_1}, x_{i_2}}\left[\left\langle\check{\varphi}_{i_1, q_1}, \check{\varphi}_{i_2, q_2}\right\rangle\right]\right)^2\right], \\
			\sigma_{CI,1,1}^2:=&E_{x_{i_1},x_{q_2}}\left[\left(E_{x_{i_2},x_{ q_1}}\left[\left\langle\check{\varphi}_{i_1, q_1}, \check{\varphi}_{i_2, q_2}\right\rangle\right]\right)^2\right] .
		\end{align*}
		Here, we define $\check{\varphi}_{i_1, q_1}:=\alpha^* \tilde{\varphi}_{12}\left(x^{\left(i_1\right)}\right)+ \left(1-\alpha^*\right) \allowbreak \tilde{\varphi}_{12}\left(x^{\prime\left(q_1\right)}\right)$.
	\end{thm}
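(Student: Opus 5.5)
We will write $T_{CI}$ as a two-sample V-statistic and then compute its first two moments to leading order under $H_0$. Let $\hat\mu_\tau:=E_{\hat F^{\alpha^*}_\tau}[\varphi_\tau]$ and $\mu_\tau:=E_{F^{\alpha^*}_\tau}[\varphi_\tau]$. Expanding $\varphi_1\otimes\varphi_2$ around the population means gives
\begin{align*}
E_{\hat F^{\alpha^*}_{12}}[\varphi_1\otimes\varphi_2]-\hat\mu_1\otimes\hat\mu_2
=E_{\hat F^{\alpha^*}_{12}}[\tilde\varphi_{12}]-(\hat\mu_1-\mu_1)\otimes(\hat\mu_2-\mu_2).
\end{align*}
This identity is exact, because the weights of $\hat F^{\alpha^*}$ sum to one. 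The second term is $O_p(1/M)$ in RKHS norm, so it contributes $o(1)$ to $MT_{CI}$, both in distribution and, using boundedness of the kernels, in the first two moments.

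Hence $MT_{CI}=M\|A\|^2+o_p(1)$ with
\begin{align*}
A=\frac{\alpha^*}{n}\sum_i\tilde\varphi_{12}(x^{(i)})+\frac{1-\alpha^*}{n'}\sum_q\tilde\varphi_{12}(x'^{(q)}).
\end{align*}
Under $H_0$, Assumption~\ref{asmp:ci} gives $\alpha^*E_{U_{12}}[\tilde\varphi_{12}]+(1-\alpha^*)E_{U'_{12}}[\tilde\varphi_{12}]=0$. The individual means $m:=E_{U_{12}}[\tilde\varphi_{12}]$ and $m':=E_{U'_{12}}[\tilde\varphi_{12}]$ need not vanish, however. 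Centering each sum at its own mean cancels these terms, since $\alpha^* m+(1-\alpha^*)m'=0$.

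\textbf{Mean.} Expanding $\|A\|^2$ as a double sum and taking expectations, the cross terms between the $U$ and $U'$ samples give
\begin{align*}
2\alpha^*(1-\alpha^*)\langle m,m'\rangle.
\end{align*}
The within-sample terms give
\begin{align*}
\frac{{\alpha^*}^2}{n}E[\tilde k_{12}(x,x)]+{\alpha^*}^2\Bigl(1-\frac1n\Bigr)\|m\|^2,
\end{align*}
and similarly for $U'$. The non-$1/n$ terms combine into $\|\alpha^* m+(1-\alpha^*)m'\|^2=0$. What remains is
\begin{align*}
\frac{{\alpha^*}^2}{n}\bigl(E\tilde k_{12}(x,x)-\|m\|^2\bigr)+\frac{(1-\alpha^*)^2}{n'}\bigl(E\tilde k_{12}(x',x')-\|m'\|^2\bigr).
\end{align*}
Since $\|m\|^2=E\tilde k_{12}(x_{i_1},x_{i_2})$, multiplying by $M$ and using $M/n\to\nu$, $M/n'\to\nu'$ yields the stated limit.

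\textbf{Variance.} We rewrite $\|A\|^2$ as a two-sample V-statistic over index pairs:
\begin{align*}
\|A\|^2=\frac{1}{n^2n'^2}\sum_{i_1,i_2,q_1,q_2}\langle\check\varphi_{i_1,q_1},\check\varphi_{i_2,q_2}\rangle.
\end{align*}
The kernel $h(i_1,q_1;i_2,q_2)=\langle\check\varphi_{i_1,q_1},\check\varphi_{i_2,q_2}\rangle$ has mean zero under $H_0$ whenever the indices are distinct, because $E[\check\varphi]=0$. The plan is to apply the Hoeffding decomposition for two-sample U-statistics.

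\begin{itemize}
\item The first-order projections (conditioning on a single $x^{(i)}$ or $x'^{(q)}$) vanish, since integrating out the partner index kills $h$ by $E\check\varphi=0$. This degeneracy is the reason $MT_{CI}$ has a nondegenerate limit, as in Theorem~\ref{thm:tci_asyd}.
\item The second-order projections therefore dominate. There are three types: two $U$ indices, two $U'$ indices, or one of each.
\end{itemize}

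The variance of a degenerate V-statistic is then $\sum_{\text{types}} c_{\text{type}}\,\sigma^2_{\text{type}}/(\text{sample sizes})+o(M^{-2})$.

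\begin{itemize}
\item The $(2,0)$ type has $\binom{n}{2}$-many pair choices, giving the contribution $2\sigma^2_{CI,2,0}/n^2$. Here $\sigma^2_{CI,2,0}$ is the second moment of $h$ conditioned on $x_{i_1},x_{i_2}$ with the $q$'s integrated out.
\item The $(0,2)$ type analogously contributes $2\sigma^2_{CI,0,2}/n'^2$.
\item The mixed type conditions on one $x$ and one $x'$ and contributes $4\sigma^2_{CI,1,1}/(nn')$. The factor 4 comes from the two symmetric placements of $(i,q)$ across the two slots, together with the two-sample symmetrization.
\end{itemize}

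Multiplying by $M^2$ gives the claimed limit. We also check that the diagonal (V- versus U-statistic) terms, the projections of order 3 and 4, and the $O_p(1/M)$ remainder all contribute $o(M^{-2})$ to $V[\|A\|^2]$. This follows from boundedness of the kernels: these terms are deterministic shifts or have variance $O(M^{-3})$.

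The main obstacle is this variance bookkeeping. We must correctly count the index coincidences in the four-fold sum, including terms where $i_1=i_2$ but $q_1\neq q_2$. We must also verify that the nonzero individual means $m$ and $m'$ do not generate first-order (non-degenerate) variance. We expect to handle both by working throughout with the mean-zero combined features $\check\varphi$, so that every term with an unpaired index vanishes, rather than with the individually centered sums.
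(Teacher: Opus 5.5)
Your proposal is correct. For the mean it coincides with the paper. The paper's $\check T_{CI}:=E_{x,x'\sim\hat F_{12}}[\tilde k_{12}(x,x')]$ is exactly your $\|A\|^2$, and the paper gets its expectation from the V- minus U-statistic difference, using $E[\check T_{CI,U}]=0$ under $H_0$. That is your direct expansion written differently.

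The routes differ in the reduction step and in the variance. The paper:
\begin{itemize}
\item writes $T_{CI}$ as a degenerate two-sample V-statistic with a symmetrized kernel of order $(6,6)$;
\item shows $ME[T_{CI}-\check T_{CI}]\to0$ by counting index coincidences;
\item computes $V[T_{CI}]$ directly, by counting the three ways two indices can be shared between two copies of that order-$(6,6)$ kernel and evaluating its conditional expectations to recover the $\check\varphi$ quantities.
\end{itemize}
You instead use the exact identity $T_{CI}=\|A-B\|^2$ with $B=(\hat\mu_1-\mu_1)\otimes(\hat\mu_2-\mu_2)$. Together with the moment bounds $E\|A\|^4=O(M^{-2})$ and $E\|B\|^4=O(M^{-4})$ (bounded kernels), this transfers both moments to $\|A\|^2$ at once, so all the combinatorics concern an order-$(2,2)$ kernel.

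Your route is shorter, and it explicitly justifies $\lim V[MT_{CI}]=\lim M^2V[\check T_{CI}]$, which the paper only asserts later when treating unknown proportions. The paper's route has the advantage of staying inside the V-statistic counting framework that it reuses for $T'_{\alpha^*}$.

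Your own centering remark makes the variance step fully elementary. Set $\psi:=\tilde\varphi_{12}-m$ on $U_{12}$ and $\psi':=\tilde\varphi_{12}-m'$ on $U'_{12}$. Then $A=\alpha^*\bar\psi+(1-\alpha^*)\bar\psi'$, and the three pieces $\alpha^{*2}\|\bar\psi\|^2$, $(1-\alpha^*)^2\|\bar\psi'\|^2$ and $2\alpha^*(1-\alpha^*)\langle\bar\psi,\bar\psi'\rangle$ are uncorrelated. Their variances are $2\sigma^2_{CI,2,0}/n^2$, $2\sigma^2_{CI,0,2}/n'^2$ and $4\sigma^2_{CI,1,1}/(nn')$, up to $O(M^{-3})$. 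This shows the factor $4$ is just the square of the $2$ in the cross term, which is cleaner than the slot-counting you sketch.
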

	
	The proof is based on the theory of two-sample V-statistics, as $T_{CI}$ belongs to this class.	By replacing the population distributions in the asymptotic expressions with their empirical counterparts, we can estimate the mean and variance from unlabeled data. The p-value of $MT_{CI}$ is then computed using the approximated null distribution and compared against a predefined significance level.
	
	\subsection{Weakly-supervised Kernel MCI(WsKMCI) Test with True Mixture Proportions}\label{subsec:mcitwith}
	To test the MCI assumption with unlabeled data, we define the null hypothesis $H_0 : X_1 {\indep} X_2 \mid  X_S,Y=y$ against the alternative $H_1 : X_1 \not\!\perp\!\!\!\perp X_2 \mid X_S,Y=y$. We assume the true mixture proportion $\alpha^*=\alpha_+$ for $y=1$ or $\alpha^*=\alpha_-$ for $y=-1$ is given. Let $k_S$ be a positive-definite and characteristic kernel on $\mathcal{X}_S$ with a feature mapping $\varphi_S$ and RKHS $\mathcal{H}_S$. Our proposed test is based on the Kernel-based Conditional Independence (KCI) criterion \citep{zhang11kci, pogodin25prac} applied to $F^{\alpha^*}_{12S}$:
	\begin{align*}
		\Bigl\|E_{F_{12S}^{\alpha^*}}&[(\varphi_1(X_1) - \mu_{X_1 \mid X_S}(X_S)) \otimes \varphi_S(X_S) \otimes\\
		&(\varphi_2(X_2) - \mu_{X_2 \mid X_S}(X_S))]\Bigr\|^2_{\mathcal{H}}
	\end{align*}
	which is zero under $H_0$. Here, we define $\mu_{X_\tau \mid X_S}(X_S)$$:=E_{F_{\tau S}^{\alpha^*}}[\varphi_\tau(X_\tau)|X_S], \forall\tau\in\{1,2\}$. Analogously to the previous subsection, we define the empirical test statistic:
	\begin{align*}
		T_{MCI}:=\Bigl\|E_{\hat F_{12S}^{\alpha^*}}&[(\varphi_1(X_1) - \mu_{X_1 \mid X_S}(X_S)) \otimes \varphi_S(X_S) \otimes\\
		&(\varphi_2(X_2) - \mu_{X_2 \mid X_S}(X_S))]\Bigr\|^2_{\mathcal{H}}.
	\end{align*} 
	The testing procedure is identical to that of $T_{CI}$. We approximate the null distribution with gamma distribution by estimating its asymptotic mean and variance as in  \citet{zhang11kci}. The following theorem establishes the asymptotic null distribution and the consistency of the test under the assumptions used in \citet{fukumizu07ci}.
	
	\begin{thm}[Asymptotic distribution of $MT_{MCI}$]\label{thm:tmci_asyd} 
		Denote $\ddot{X}:=(X_1, X_S)$ and $k_{\ddot{\mathcal{X}}}:= k_1k_S$. Assume $\mathcal{H}_1 \subset L^2(F^{\alpha^*}_1), \mathcal{H}_{2} \subset L^2(F^{\alpha^*}_2)$, and $\mathcal{H}_{S} \subset L^2(F^{\alpha^*}_S)$ where $L^2(P)$ is the space of the square integrable
		functions with probability $P$. Further assume that $k_{\ddot{\mathcal{X}}} k_{2}$ is a characteristic kernel on $(\mathcal{X}_1 \times \mathcal{X}_S) \times \mathcal{X}_2$, and that $\mathcal{H}_{S}+\mathbb{R}$ (the direct sum of the two RKHSs) is dense in $L^2\left(F^{\alpha^*}_S\right)$. Then,
		
		(i) Under $H_0$, we have
		$$MT_{MCI}\xrightarrow{d}\sum^\infty_{i,j,q=1}\lambda_{1,i}\lambda_{2,j}\lambda_{S,q}\xi^2_{ijq}.$$
		where $\lambda_{1, i}$, $\lambda_{2, j}$ and $\lambda_{S, q}$ are the eigenvalues of the integral operators associated with $k_1$, $k_2$ and $k_S$, and $\xi_{ijq}$s follow a multivariate normal distribution with mean $\mathbf{0}$ and covariances defined in Appendix \ref{appsubsec:proof_mci_test}. 	
		
		(ii) Under $H_1$, we have $MT_{MCI} \overset{p}{\rightarrow} \infty$.
		
	\end{thm}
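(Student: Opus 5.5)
The plan is to mirror the proof of Theorem \ref{thm:tci_asyd}, with the HSIC-type operator replaced by the KCI operator. Treat $T_{MCI}$ as a squared RKHS norm of an empirical mean of feature vectors taken under the signed empirical measure $\hat F^{\alpha^*}_{12S}=\alpha^*\hat U_{12S}+(1-\alpha^*)\hat U'_{12S}$. Write
\[
\psi(x):=(\varphi_1(x_1)-\mu_{X_1\mid X_S}(x_S))\otimes\varphi_S(x_S)\otimes(\varphi_2(x_2)-\mu_{X_2\mid X_S}(x_S)).
\]
Then $T_{MCI}=\|\alpha^*\frac1n\sum_i\psi(x^{(i)})+(1-\alpha^*)\frac1{n'}\sum_q\psi(x'^{(q)})\|^2_{\mathcal H}$. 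The conditional mean embeddings are population quantities here, as written in the definition of $T_{MCI}$, so no estimation error from them enters. Under $H_0$, Assumption \ref{asmp:mci} holds, and
\[
E_{F^{\alpha^*}_{12S}}[\psi]=E_{F_S^{\alpha^*}}\bigl[E[\varphi_1-\mu_{X_1\mid X_S}\mid X_S]\otimes\varphi_S\otimes E[\varphi_2-\mu_{X_2\mid X_S}\mid X_S]\bigr]=0 .
\]
This is computed with respect to $F^{\alpha^*}$, which equals $\alpha^*U+(1-\alpha^*)U'$ by linearity. Hence the centered mean vanishes.

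For part (i), expand $k_1,k_2,k_S$ by Mercer's theorem with respect to $F^{\alpha^*}_1,F^{\alpha^*}_2,F^{\alpha^*}_S$; the assumption $\mathcal H_\tau\subset L^2(F^{\alpha^*}_\tau)$ makes these operators well defined. Let eigenpairs be $(\lambda_{1,i},e_{1,i})$ and similarly for the others. Then
\[
MT_{MCI}=\sum_{i,j,q}\lambda_{1,i}\lambda_{2,j}\lambda_{S,q}\,\xi_{ijq,M}^2,
\]
where
\[
\xi_{ijq,M}=\sqrt M\Bigl(\tfrac{\alpha^*}{n}\sum_i h_{ijq}(x^{(i)})+\tfrac{1-\alpha^*}{n'}\sum_q h_{ijq}(x'^{(q)})\Bigr)
\]
and $h_{ijq}=(e_{1,i}-E[e_{1,i}\mid X_S])\,e_{S,q}\,(e_{2,j}-E[e_{2,j}\mid X_S])$. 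These summands are independent across the two samples, and the combined mean $\alpha^*E_U h+(1-\alpha^*)E_{U'}h=0$ under $H_0$.

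The per-sample means $E_Uh$ and $E_{U'}h$ need not vanish individually, however. So I center each sample at its own mean; the deterministic remainder $\sqrt M(\alpha^*E_Uh+(1-\alpha^*)E_{U'}h)$ is exactly zero. The multivariate CLT, with $M/n\to\nu$ and $M/n'\to\nu'$, then gives joint Gaussian limits for any finite collection. The covariance is
\[
\nu\alpha^{*2}\mathrm{Cov}_U(h_{ijq},h_{i'j'q'})+\nu'(1-\alpha^*)^2\mathrm{Cov}_{U'}(h_{ijq},h_{i'j'q'}).
\]

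The main obstacle is passing from finite truncations to the infinite sum. I would control the tail $E\sum_{(i,j,q)\notin\text{trunc}}\lambda\lambda\lambda\,\xi_{ijq,M}^2$ uniformly in $M$. Its terms are bounded by $(\nu\alpha^{*2}+\nu'(1-\alpha^*)^2+o(1))$ times second moments of $h_{ijq}$ under $U$ and $U'$. In turn, these are dominated by the absolute densities $U,U'\le c\,(|F^{\alpha^*}|$-type bounds$)$, via $U,U'\ll\alpha^*U+(1-\alpha^*)U'$ up to constants when $\alpha^*\in(0,1]$. For $\alpha^*=\alpha_\pm$ outside $[0,1]$, I would use $U\le$ const$\cdot(P+N)$ instead and expand with respect to $P+N$. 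Trace-class summability $\sum\lambda<\infty$ together with bounded kernels then gives a uniformly vanishing tail. A standard argument, as in \citet{gretton07kernal} and Theorem \ref{thm:tci_asyd}, yields the weighted chi-square limit.

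For part (ii), under $H_1$ the population KCI quantity equals $\|E_{F^{\alpha^*}_{12S}}[\psi]\|^2$. By \citet{fukumizu07ci}, the characteristic assumption on $k_{\ddot{\mathcal X}}k_2$ together with density of $\mathcal H_S+\mathbb R$ in $L^2(F^{\alpha^*}_S)$ makes this zero iff $X_1\indep X_2\mid X_S$ under $F^{\alpha^*}$. Hence it is a strictly positive constant $c$. The empirical mean converges to the population mean in $\mathcal H$ by the Hilbert-space law of large numbers applied separately to each sample, using bounded kernels. Thus $T_{MCI}\to c>0$ in probability and $MT_{MCI}\to\infty$.
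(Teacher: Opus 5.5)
Your proposal is correct and follows essentially the same route as the paper. Both proofs Mercer-expand the centered kernels and write $MT_{MCI}$ as a weighted sum of squared two-sample empirical means, each centered at its own per-sample population mean, which is legitimate because the combined mean vanishes under $H_0$. Both then apply the multivariate CLT to finite truncations with covariance $\nu{\alpha^*}^2\mathrm{Cov}_U+\nu'(1-\alpha^*)^2\mathrm{Cov}_{U'}$, control the tail as in Theorem~\ref{thm:tci_asyd}, and invoke Theorem 3 of Fukumizu et al.\ for part (ii).

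One small correction. Since $\alpha_+\ge 1$ and $\alpha_-\le 0$, your domination branch for $\alpha^*\in(0,1]$ is essentially vacuous, and at $\alpha^*=1$ the $U'$ sample is not even dominated by $F^{\alpha^*}=U$. So it is your expansion with respect to a dominating measure such as $P+N$ that actually does the work. That is a point of care the paper glosses over, and it does not affect the stated limit, because the law of the weighted chi-square sum does not depend on the chosen basis.
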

	
	In addition, the asymptotic mean and variance of $MT_{MCI}$ are given in the following theorem. These theorems are proved similarly to Theorems \ref{thm:tci_asyd} and \ref{thm:tci_mv_w/alpha}.
	
	\begin{thm}[Asymptotic mean and variance of $T_{MCI}$]\label{thm:tmci_mv_w/alpha}
		Define the centralized kernel $\tilde k_{12S}$ associated with the feature map $\tilde{\varphi}_{12S}(x) := (\varphi_1(x_1) - \mu_{X_1|X_S}(x_S)) \otimes\varphi_S(x_S)\otimes (\varphi_2(x_2) -  \mu_{X_2|X_S}(x_S))$. Under $H_0$, as $M\rightarrow\infty$, 
		\begin{align*}
			E[M&T_{MCI}]\rightarrow\nu{\alpha^*}^2E_{x_i,x_j}[\tilde k_{12S}(x_i,x_i)-\tilde k_{12S}(x_i,x_j)]\\
			&+\nu'(1-\alpha^*)^2E_{x'_q,x'_r}[\tilde k_{12S}(x'_q,x'_q)-\tilde k_{12S}(x'_q,x'_r)]
			\\
			V[M&T_{MCI}]\rightarrow 
			2{\nu}^2\sigma^2_{MCI,2,0}+2{\nu'}^2\sigma^2_{MCI,0,2}\\
			&+4\nu\nu'\sigma^2_{MCI,1,1}
		\end{align*}
		where $x_{i_1}$,$x_{i_2}$ and $x'_{q_1}$,$x'_{q_2}$ are i.i.d samples from $U_{12S}$ and $U'_{12S}$, and 
		\begin{align*}
			\sigma_{MCI,2,0}^2:=& E_{x_{i_1}, x_{i_2}}\left[\left(E_{x_{q_1}, x_{q_2}}\left[\left\langle\check{\varphi}_{i_1, q_1}, \check{\varphi}_{i_2, q_2}\right\rangle\right]\right)^2\right],\\
			\sigma_{MCI,0,2}^2:=& E_{x_{q_1}, x_{q_2}}\left[\left(E_{x_{i_1},x_{ i_2}}\left[\left\langle\check{\varphi}_{i_1, q_1}, \check{\varphi}_{i_2, q_2}\right\rangle\right]\right)^2\right], \\
			\sigma_{MCI,1,1}^2:=&E_{x_{i_1}, x_{q_2}}\left[\left(E_{x_{i_2},x_{ q_1}}\left[\left\langle\check{\varphi}_{i_1, q_1}, \check{\varphi}_{i_2, q_2}\right\rangle\right]\right)^2\right] .
		\end{align*}
		Here, we redefine $\check{\varphi}_{i_1, q_1}:=\alpha^* \tilde{\varphi}_{12S}\left(x^{\left(i_1\right)}\right)+ \left(1-\alpha^*\right) \allowbreak \tilde{\varphi}_{12S}\left(x^{\prime\left(q_1\right)}\right)$, analogously to the CI setting.
	\end{thm}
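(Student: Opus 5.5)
We follow the proof of Theorem \ref{thm:tci_mv_w/alpha} and treat $T_{MCI}$ as a two-sample V-statistic. In $T_{MCI}$ the conditional mean embeddings $\mu_{X_\tau\mid X_S}$ are the population quantities under $F^{\alpha^*}$, so the statistic is an explicit functional of the empirical measures. Writing $\hat F^{\alpha^*}_{12S}=\alpha^*\hat U_{12S}+(1-\alpha^*)\hat U'_{12S}$, we have
$$T_{MCI}=\Bigl\|\alpha^*\tfrac1n\textstyle\sum_i\tilde\varphi_{12S}(x^{(i)})+(1-\alpha^*)\tfrac1{n'}\sum_q\tilde\varphi_{12S}(x'^{(q)})\Bigr\|^2_{\mathcal H}
=\frac{1}{n^2n'^2}\sum_{i_1,i_2,q_1,q_2}\langle\check\varphi_{i_1,q_1},\check\varphi_{i_2,q_2}\rangle .$$
The second equality holds because the weights $\alpha^*$ and $1-\alpha^*$ are each averaged over the other sample. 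This is a V-statistic with kernel $h((x_{i_1},x'_{q_1}),(x_{i_2},x'_{q_2}))=\langle\check\varphi_{i_1,q_1},\check\varphi_{i_2,q_2}\rangle$, of degree two in each sample.

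\textbf{Step 1: the null mean is zero.} Under $H_0$, $E_{F^{\alpha^*}_{12S}}[\tilde\varphi_{12S}]=0$. Conditionally on $X_S$, the two centered factors are independent under $F^{\alpha^*}$ and each has conditional mean zero, so their tensor product with $\varphi_S$ has mean zero.

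\textbf{Step 2: the mean.} Expand $E[\|\alpha^*\bar a+(1-\alpha^*)\bar b\|^2]$, where $\bar a$ and $\bar b$ are the two sample means of $\tilde\varphi_{12S}$. The cross term vanishes: the samples are independent and $\alpha^*E_U[\tilde\varphi_{12S}]+(1-\alpha^*)E_{U'}[\tilde\varphi_{12S}]=0$. This leaves
$$E\|\bar a\|^2=\tfrac1n E\,\tilde k_{12S}(x,x)+\tfrac{n-1}{n}\,E\,\tilde k_{12S}(x_i,x_j).$$
The term $E\,\tilde k_{12S}(x_i,x_j)=\|E_U\tilde\varphi_{12S}\|^2$ is not zero in general, since $U$ itself need not satisfy CI. However, $\|E_U\tilde\varphi_{12S}\|^2$ and $\|E_{U'}\tilde\varphi_{12S}\|^2$ are tied by Step 1: $\alpha^*E_U\tilde\varphi_{12S}=-(1-\alpha^*)E_{U'}\tilde\varphi_{12S}$. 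The $O(1)$ pieces therefore combine into $\|\alpha^*E_U\tilde\varphi_{12S}+(1-\alpha^*)E_{U'}\tilde\varphi_{12S}\|^2=0$. Multiplying the $O(1/n)$ and $O(1/n')$ remainders by $M$ and using $M/n\to\nu$, $M/n'\to\nu'$ gives the stated diagonal-minus-off-diagonal limit.

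\textbf{Step 3: the variance.} Use the Hoeffding decomposition for two-sample V-statistics. Under $H_0$ the kernel $h$ is degenerate: its first-order projections onto a single $x_{i}$ or a single $x'_{q}$ vanish, because each projection is an inner product with the null mean from Step 1. The leading variance of $MT_{MCI}$ therefore comes from the second-order projections. Pairing two $x$'s gives $\sigma^2_{MCI,2,0}$, pairing two $x'$'s gives $\sigma^2_{MCI,0,2}$, and pairing one $x$ with one $x'$ gives $\sigma^2_{MCI,1,1}$. The factors $2$, $2$ and $4$ are the usual symmetrization counts for degenerate kernels, and the scalings $M^2/n^2\to\nu^2$, $M^2/n'^2\to\nu'^2$ and $M^2/(nn')\to\nu\nu'$ follow from the convergence assumptions. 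Higher-order projections and the diagonal (V-versus-U) terms contribute $o(1)$ after scaling by $M^2$. This uses boundedness of the kernels: $\mu_{X_\tau\mid X_S}$ is bounded in norm since it is a conditional expectation of bounded features.

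\textbf{Main obstacle.} The hardest step is the variance bookkeeping. We must verify degeneracy of the mixed projections, which involve signed combinations over both samples, and show that all non-leading index coincidences are negligible. I would first try to reuse the computation from Theorem \ref{thm:tci_mv_w/alpha} verbatim, replacing $\tilde\varphi_{12}$ with $\tilde\varphi_{12S}$. That argument only uses the identity $E_{F^{\alpha^*}}[\tilde\varphi]=0$ and boundedness of the feature map, and both hold here.
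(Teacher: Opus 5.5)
Your route is essentially the paper's. You write $T_{MCI}$ as a two-sample V-statistic with kernel $\langle\check\varphi_{i_1,q_1},\check\varphi_{i_2,q_2}\rangle$; no $T-\check T$ step is needed, since the conditional means are population quantities. You then read off the mean from the diagonal terms and the variance from the degenerate second-order projections, reusing the CI computation with $\tilde\varphi_{12S}$ in place of $\tilde\varphi_{12}$. This is exactly what the paper does.

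One sentence in Step 2 is wrong as written: the cross term does not vanish. Write $m_U:=E_{U_{12S}}[\tilde\varphi_{12S}]$ and $m_{U'}:=E_{U'_{12S}}[\tilde\varphi_{12S}]$. Independence only gives $E\langle\bar a,\bar b\rangle=\langle m_U,m_{U'}\rangle$. Step 1, i.e.\ $(1-\alpha^*)m_{U'}=-\alpha^*m_U$, then turns the cross term into $2\alpha^*(1-\alpha^*)\langle m_U,m_{U'}\rangle=-2{\alpha^*}^2\|m_U\|^2$. This is nonzero in general, as you yourself note for $\|m_U\|^2$.

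If you really dropped it, the $O(1)$ part of $E[T_{MCI}]$ would be ${\alpha^*}^2\|m_U\|^2+(1-\alpha^*)^2\|m_{U'}\|^2=2{\alpha^*}^2\|m_U\|^2$, and $E[MT_{MCI}]$ would diverge. Your next sentence, that the $O(1)$ pieces combine into $\|\alpha^*m_U+(1-\alpha^*)m_{U'}\|^2=0$, is correct precisely because it includes this cross term. It is the same fact the paper uses when it says the U-statistic part has mean zero. So keep the cross term and delete the claim that it vanishes.

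A tidy way to organize both moments is to set $A_i:=\alpha^*(\tilde\varphi_{12S}(x^{(i)})-m_U)$ and $B_q:=(1-\alpha^*)(\tilde\varphi_{12S}(x'^{(q)})-m_{U'})$. Under $H_0$ you then have $T_{MCI}=\|\bar A+\bar B\|^2$, where $\bar A$ and $\bar B$ are independent, mean-zero averages. This gives the stated mean immediately. It also identifies $\sigma^2_{MCI,2,0}=E\langle A_1,A_2\rangle^2$, $\sigma^2_{MCI,0,2}=E\langle B_1,B_2\rangle^2$ and $\sigma^2_{MCI,1,1}=E\langle A_1,B_1\rangle^2$. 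The coefficients $2,2,4$ then follow from $V[\|\bar A\|^2]$, $V[\|\bar B\|^2]$ and $V[2\langle\bar A,\bar B\rangle]$, whose mutual covariances vanish at leading order.
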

	
	As with $T_{CI}$, the mean and variance are estimated using their empirical counterparts. For the MCI test, however, we must also estimate the conditional kernel mean $\mu_{X_\tau \mid X_S}$. Following the procedure of \citet{zhang11kci}, we use the empirical kernel maps of $k_1$ and $k_2$ and then apply kernel ridge regression to estimate these conditional means. This estimation is performed in the weakly-supervised manner described in Section \ref{sec:mpemci}. Due to space constraints, full details are provided in Appendix \ref{appsubsec:comp_test_stat}.
	
	\subsection{CI and MCI Test without True Mixture Proportions}\label{subsec:testwo}
	The testing methods proposed in the previous subsections require known mixture proportions. Consequently, they cannot be used to verify Assumptions \ref{asmp:ci} and \ref{asmp:mci} to assess the MPE applicability, as these proportions are unknown in advance. Therefore, we propose a ``plug-in'' approach for the Cl and MCl tests without true mixture proportions. The validity of the plug-in test statistic is established in Lemma \ref{lem:asymp_exp} and Theorem \ref{thm:test_consistency_w/oalpha}. In this subsection, we use the following definitions for each test.
	\begin{align*}
		T_{CI,\alpha} :=&\Bigl\| E_{\hat{F}_{12}^{\alpha}}\left[ \varphi_1(X_1)\otimes\varphi_2(X_2) \right]-\\ &E_{\hat{F}_{1}^{\alpha}\hat{F}_{2}^{\alpha}}\left[\varphi_1(X_1)\otimes\varphi_2(X_2)\right] \Bigr\|_{\mathcal{H}}^2\\
		T_{MCI,\alpha}:=&\Bigl\|E_{\hat F_{12S}^{\alpha}}[(\varphi_1(X_1) - \mu_{X_1 \mid X_S}(X_S)) \otimes \varphi_S(X_S) \otimes\\
		&(\varphi_2(X_2) - \mu_{X_2 \mid X_S}(X_S))]\Bigr\|^2_{\mathcal{H}}
	\end{align*}
	where $\mu_{X_\tau \mid X_S}(X_S):=E_{F_{\tau S}^{\alpha^*}}[\varphi_\tau(X_\tau)|X_S], \forall\tau\in\{1,2\}$. Note that $T_{CI,\alpha^*}=T_{CI}$ and $T_{MCI,\alpha^*}=T_{MCI}$.
	
	Our proposed approach is as follows: we first estimate the mixture proportion by $\hat{\alpha}_{CI}$ (resp. $\hat{\alpha}_{MCI}$) and then use the plug-in test statistic $T_{C I,\hat\alpha_{CI}}$ (resp. $T_{MCI,\hat\alpha_{MCI}}$) \footnote{In practice, the true conditional kernel means required for the $T_{MCI,\hat\alpha_{MCI}}$ is estimated with $\hat F_{12S}^{\hat\alpha_{MCI}}$} instead of the original statistic $T_{CI,\alpha^*}$ (resp. $T_{MCI,\alpha^*}$). We use gamma approximation to derive the null distribution and conduct the statistical test, following a similar procedure to the ones from the previous subsections.
	
	Since the mean and variance of the plug-in statistic deviate from those of the original statistic, we derive them based on the following lemma.
	
	\begin{lem}\label{lem:asymp_exp} 
		Denote $T_{\alpha}:=T_{CI,\alpha}$ and $\hat\alpha:=\hat\alpha_{CI}$ for the CI test ($T_{\alpha}:=T_{MCI,\alpha}$ and $\hat\alpha:=\hat\alpha_{MCI}$ for the MCI test). Then, the following convergence holds for both tests. Under $H_0$, as $M\rightarrow\infty$,
		
		$MT_{\hat\alpha}-M\{T_{\alpha^*}+(\hat\alpha-\alpha^*)T'_{\alpha^*}+\frac{1}{2}(\hat\alpha-\alpha^*)^2T''_{\alpha^*}\}\xrightarrow{p}0$
		
		where $T'_{\alpha^*}:=\frac{d}{d\alpha}T_{\alpha}|_{\alpha=\alpha^*}$ and $T''_{\alpha^*}$ $:=\frac{d^2}{d\alpha^2}T_{\alpha}|_{\alpha=\alpha^*}$.
	\end{lem}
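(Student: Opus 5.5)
The plan is to exploit the fact that, for fixed data, $T_\alpha$ is a \emph{polynomial} in $\alpha$ of low degree. Its Taylor expansion around $\alpha^*$ is therefore exact and terminates, so only the third- and higher-order terms need to be controlled. Those terms are small because $\hat\alpha-\alpha^*=O_p(M^{-1/2})$ by Theorems \ref{thm:cimpe_an} and \ref{thm:mcimpe_an}.

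\emph{Step 1 (polynomial structure).} For the CI test, $\hat F^{\alpha}_{12}=\alpha\hat U_{12}+(1-\alpha)\hat U'_{12}$ is affine in $\alpha$. Hence the embedding $E_{\hat F^\alpha_{12}}[\varphi_1\otimes\varphi_2]$ is an affine $\mathcal H$-valued function of $\alpha$. The product embedding $E_{\hat F^\alpha_1}[\varphi_1]\otimes E_{\hat F^\alpha_2}[\varphi_2]$ is a quadratic $\mathcal H$-valued polynomial in $\alpha$. Their difference is a quadratic polynomial $A_0+\alpha A_1+\alpha^2A_2$ with random coefficients $A_k\in\mathcal H$, so $T_{CI,\alpha}$ is a polynomial in $\alpha$ of degree at most $4$. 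Therefore
\[
T_{\hat\alpha}=T_{\alpha^*}+(\hat\alpha-\alpha^*)T'_{\alpha^*}+\tfrac12(\hat\alpha-\alpha^*)^2T''_{\alpha^*}+\tfrac16(\hat\alpha-\alpha^*)^3T'''_{\alpha^*}+\tfrac1{24}(\hat\alpha-\alpha^*)^4T''''
\]
holds exactly.

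For the MCI test, the conditional means $\mu_{X_\tau\mid X_S}$ in $T_{MCI,\alpha}$ are defined through the true $\alpha^*$ and do not depend on $\alpha$. So the embedding $E_{\hat F^\alpha_{12S}}[\cdot]$ is affine in $\alpha$, $T_{MCI,\alpha}$ is quadratic, and the expansion holds with zero remainder. The statement then reduces to an identity, and only the CI case requires further work.

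\emph{Step 2 (bounding the higher derivatives).} Each $A_k$ is a finite combination of empirical means under $\hat U$ and $\hat U'$ of $\varphi_1\otimes\varphi_2$, $\varphi_1$, and $\varphi_2$. Translation-invariant $c_0$-kernels are bounded, so $\|A_k\|_{\mathcal H}\le C$ deterministically for a constant $C$ depending only on $\sup k_1\sup k_2$. Consequently $|T'''_{\alpha}|$ and $|T''''|$ are bounded by a constant uniformly over $\alpha$ in the bounded set $I_{\alpha^*}$.

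\emph{Step 3 (rate of $\hat\alpha$).} Under $H_0$, Assumption \ref{asmp:ci} (resp. \ref{asmp:mci}) holds, and we keep the uniqueness hypothesis of Theorem \ref{thm:cimpe_an} (resp. \ref{thm:mcimpe_an}). That theorem gives $\sqrt M(\hat\alpha-\alpha^*)=O_p(1)$. Then
\[
M\bigl|\tfrac16(\hat\alpha-\alpha^*)^3T'''_{\alpha^*}+\tfrac1{24}(\hat\alpha-\alpha^*)^4T''''\bigr|\le C'\bigl(M^{-1/2}|\sqrt M(\hat\alpha-\alpha^*)|^3+M^{-1}|\sqrt M(\hat\alpha-\alpha^*)|^4\bigr)\xrightarrow{p}0,
\]
which proves the claim.

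The only point requiring care is Step 1: verifying that no other $\alpha$-dependence enters the statistic. If one instead used the practical version from the footnote, with conditional means re-estimated at $\hat\alpha$, smoothness of the KRR estimator in $\alpha$ would have to be established. As stated, however, $\mu$ is fixed at $\alpha^*$, so the polynomial argument suffices. The main substantive input is the $\sqrt M$-rate from the earlier asymptotic normality theorems.
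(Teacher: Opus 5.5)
Your proposal is correct and follows the paper's route: Taylor-expand $T_{\hat\alpha}$ around $\alpha^*$ and kill the remainder using $\sqrt{M}(\hat\alpha-\alpha^*)=O_p(1)$ from Theorems \ref{thm:cimpe_an} and \ref{thm:mcimpe_an}. It also makes rigorous what the paper only asserts, by noting that $T_\alpha$ is a polynomial in $\alpha$ of degree at most $4$ (CI), or degree $2$ for MCI since $\mu_{X_\tau\mid X_S}$ is fixed at $\alpha^*$, with bounded coefficients, so the remainder is explicitly $O_p(M^{-1/2})$.
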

	
	This lemma is derived by the Taylor expansion of $T_{\hat\alpha}$ around $\alpha^*$. Considering the probabilistic limit, we approximate the mean and variance of $MT_{\hat\alpha}$ for each test as follows. The asymptotic equalities hold under mild conditions such as uniform integrability.
	\begin{align*}
		E[MT_{\hat\alpha}]&\simeq ME[T_{\alpha^*}+(\hat\alpha-\alpha^*)T'_{\alpha^*}+\frac{1}{2}(\hat\alpha-\alpha^*)^2T''_{\alpha^*}]\\
		V[MT_{\hat\alpha}]&\simeq M^2V[T_{\alpha^*}+(\hat\alpha-\alpha^*)T'_{\alpha^*}+\frac{1}{2}(\hat\alpha-\alpha^*)^2T''_{\alpha^*}]. 
	\end{align*}

	Each term on the r.h.s. can be estimated with the theory of U-statistics. However, as the derivation is rather complicated, we defer the full details to Appendix \ref{appsubsec:mean_var_wo_prop}.
	
	To ensure the test correctly rejects $H_0$ under $H_1$ (the test consistency), the following additional assumption is required, since $\hat{\alpha} \overset{p}{\rightarrow} \alpha^*$ is not guaranteed under $H_1$.
	
	\begin{asmp}\label{asmp:testconswomp} 
		(i) For the CI test, $\hat\alpha_{CI} \overset{p}{\rightarrow} \alpha_1$, such that  $F_{12}^{\alpha_1}$ is a probability distribution and $F_{12}^{\alpha_1}\neq F_{1}^{\alpha_1}F_{2}^{\alpha_1}$.
		
		(ii) For the MCI test, $\hat\alpha_{MCI} \overset{p}{\rightarrow} \alpha_1$, such that $F_{12S}^{\alpha_1}$ is a probability distribution and $F_{12S}^{\alpha_1}\neq F_{1|S}^{\alpha_1}F_{2|S}^{\alpha_1}F_{S}^{\alpha_1}$.
	\end{asmp}
	
	Assumption \ref{asmp:testconswomp} ensures that the limiting distributions $F_{12}^{\alpha_1}$ and $F_{12S}^{\alpha_1}$ do not exhibit spurious CI and MCI conditions, which could lead to accepting $H_0$ under $H_1$. This assumption is considered mild since $F^{\alpha_1}_{\tau}$ is a mixture distribution of $P_\tau$ and $N_\tau$, and the features in $X$ generally exhibit dependence. Under this assumption, we establish test consistency in the following theorem. The proof is provided in Appendix \ref{appsubsec:proof_test_wo_prop}.
	
	\begin{thm}\label{thm:test_consistency_w/oalpha}
		Let Assumption \ref{asmp:testconswomp} hold. Then, 
		
		(i) For the CI test, under the assumptions in Theorem \ref{thm:tci_asyd} and $H_1$, $MT_{CI,\hat\alpha}\xrightarrow{p}\infty$.
		
		(ii) For the MCI test, under the assumptions in Theorem \ref{thm:tmci_asyd} and $H_1$, $MT_{MCI,\hat\alpha}\xrightarrow{p}\infty$.
	\end{thm}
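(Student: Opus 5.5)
The strategy is to show that $T_{CI,\hat\alpha}$ (resp. $T_{MCI,\hat\alpha}$) converges in probability to a strictly positive constant. Multiplying by $M\to\infty$ then gives divergence. Write $D(\alpha)$ for the population cross-covariance element
\[
D(\alpha):=E_{F^{\alpha}_{12}}[\varphi_1\otimes\varphi_2]-E_{F^{\alpha}_{1}F^{\alpha}_{2}}[\varphi_1\otimes\varphi_2],
\]
and $\hat D(\alpha)$ for its empirical version, so that $T_{CI,\alpha}=\|\hat D(\alpha)\|^2_{\mathcal H}$.

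\textbf{Step 1: uniform convergence.} Both $\hat D(\alpha)$ and $D(\alpha)$ are polynomials in $\alpha$ of degree at most two. Their coefficients are mean embeddings built from $\hat U_{12},\hat U'_{12}$ and the products of marginals, for instance $E_{\hat U_1}[\varphi_1]\otimes E_{\hat U'_2}[\varphi_2]$.

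Because $k_1,k_2$ are bounded ($c_0$, translation invariant), each empirical coefficient converges in $\mathcal H$-norm to its population counterpart at rate $O_p(M^{-1/2})$. This follows from a Hilbert-space law of large numbers for bounded i.i.d. elements; for the product terms one also uses $\|a\otimes b\|=\|a\|\|b\|$.

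Hence $\sup_{\alpha\in I}\|\hat D(\alpha)-D(\alpha)\|_{\mathcal H}\xrightarrow{p}0$ on the compact set $I=I_{\alpha^*}$. In addition, $D$ is Lipschitz on $I$.

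\textbf{Step 2: plug in $\hat\alpha$.} By Assumption~\ref{asmp:testconswomp}(i), $\hat\alpha\xrightarrow{p}\alpha_1$. Then
\[
\|\hat D(\hat\alpha)-D(\alpha_1)\|\le \sup_{I}\|\hat D-D\|+\|D(\hat\alpha)-D(\alpha_1)\|\xrightarrow{p}0,
\]
so $T_{CI,\hat\alpha}\xrightarrow{p}\|D(\alpha_1)\|^2_{\mathcal H}$.

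\textbf{Step 3: positivity.} Since $F^{\alpha_1}_{12}$ is a genuine probability distribution and the product kernel $k_1k_2$ is characteristic for translation-invariant $c_0$ kernels (the result of \citet{gretton15simpler} used in Theorem~\ref{thm:tci_asyd}(ii)), HSIC vanishes only under independence. The hypothesis $F^{\alpha_1}_{12}\neq F^{\alpha_1}_1F^{\alpha_1}_2$ therefore gives $c:=\|D(\alpha_1)\|^2>0$.

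Consequently $P(MT_{CI,\hat\alpha}>K)\ge P(T_{CI,\hat\alpha}>c/2)\to1$ for every $K$, which proves (i).

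\textbf{Part (ii).} The argument follows the same template with the KCI operator
\[
D_{MCI}(\alpha)=E_{F^\alpha_{12S}}[(\varphi_1-\mu_{X_1|X_S})\otimes\varphi_S\otimes(\varphi_2-\mu_{X_2|X_S})].
\]
The main obstacle is here, because the conditional means themselves must be handled. For the statistic exactly as defined, $\mu_{X_\tau|X_S}$ is fixed and bounded in the RKHS norm. $D_{MCI}(\alpha)$ is then linear in $\alpha$, and uniform convergence is immediate from bounded kernels.

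In the limit, however, the conditional means should be those of $F^{\alpha_1}$ (as in the footnote's practical implementation). I would therefore argue that the estimated conditional means computed under $\hat F^{\hat\alpha}$ converge to $\mu^{\alpha_1}_{X_\tau|X_S}$. I would first try to reduce this to the consistency of regularized kernel ridge regression in \citet{fukumizu07ci}, with continuity in $\alpha$ of the regression target.

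Positivity of the limit then follows from the characterization of \citet{fukumizu07ci}. Under the stated assumptions (that $k_{\ddot{\mathcal X}}k_2$ is characteristic and that $\mathcal H_S+\mathbb R$ is dense in $L^2(F^{\alpha_1}_S)$), the conditional cross-covariance operator of the probability measure $F^{\alpha_1}_{12S}$ is zero if and only if $X_1\indep X_2\mid X_S$. Assumption~\ref{asmp:testconswomp}(ii) excludes this. The only subtlety is that the density assumption is stated for $F^{\alpha^*}_S$ and must be transferred to $F^{\alpha_1}_S$. I would handle this by noting that mixtures of $P_S,N_S$ with positive weights are mutually absolutely continuous with bounded density ratios whenever $\alpha_1$ lies inside $[\alpha_-,\alpha_+]$.
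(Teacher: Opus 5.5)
Your route is the paper's route. The paper's proof just asserts that $T_{\hat\alpha}$ converges to the population statistic at $\alpha_1$, that this value is positive because $F^{\alpha_1}$ violates (M)CI under the hypotheses of Theorems~\ref{thm:tci_asyd} and~\ref{thm:tmci_asyd}, and hence $MT_{\hat\alpha}\to\infty$. Your part (i) is correct and supplies the details the paper omits: the degree-two polynomial structure of $\hat D(\alpha)$ with $O_p(M^{-1/2})$ coefficient errors, continuity of $D$, and HSIC positivity for the genuine probability measure $F^{\alpha_1}_{12}$.

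In part (ii), however, the density-transfer step fails as written. To pass from density of $\mathcal H_S+\mathbb R$ in $L^2(F^{\alpha^*}_S)$ to density in $L^2(F^{\alpha_1}_S)$, you need $\|f\|_{L^2(F^{\alpha_1}_S)}\le C\|f\|_{L^2(F^{\alpha^*}_S)}$, i.e.\ a bound on $dF^{\alpha_1}_S/dF^{\alpha^*}_S$. But $F^{\alpha^*}_S$ is an endpoint ($P_S$ or $N_S$), not a positive-weight mixture. For interior $\alpha_1$ you only get a bound on the reverse ratio $dF^{\alpha^*}_S/dF^{\alpha_1}_S$, which transfers density in the wrong direction. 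Moreover, $F^{\alpha_1}_S$ need not even be absolutely continuous with respect to $F^{\alpha^*}_S$: for $\alpha^*=\alpha_+$, take $N_S$ charging a set where $P_S$ vanishes.

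Your restriction $\alpha_1\in[\alpha_-,\alpha_+]$ is also an extra hypothesis. Assumption~\ref{asmp:testconswomp} only asks that $F^{\alpha_1}_{12S}$ be a probability measure. That can happen outside the interval precisely when irreducibility fails, which is the regime the paper targets. The clean fix is to impose the Theorem~\ref{thm:tmci_asyd} conditions on the limit measure $F^{\alpha_1}$, as the paper implicitly does. Alternatively, use a $c_0$-universal kernel such as the Gaussian, for which $\mathcal H_S$ is dense in $L^2(\mu)$ for every finite Borel measure $\mu$.

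Separately, you correctly observe that with the literal conditional means $\mu^{\alpha^*}$, the limiting operator at $\alpha_1\neq\alpha^*$ is not the KCI operator of $F^{\alpha_1}$. Conditionally on $X_S$, it equals the conditional cross-covariance of $F^{\alpha_1}$ plus $\delta_1\otimes\delta_2$, where $\delta_\tau$ is the difference of the conditional mean embeddings under $F^{\alpha_1}$ and $F^{\alpha^*}$. The Fukumizu et al.\ characterization does not control this extra term, so switching to conditional means fitted under $\hat F^{\hat\alpha}$ is necessary. But then you need uniform-in-$\alpha$ consistency of the signed-weight kernel ridge regression, which you only say you would try to establish. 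That is a real missing step; the paper's sketch skips it too, but (ii) is not complete without it.
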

	
	\begin{table*}[h!]\centering
		\caption{Mean absolute error for $\hat\theta'$ in the CI MPE experiment. The lowest value in each column is highlighted in bold.\\}
		\label{tbl:CI_MPE}
		\newcolumntype{Y}{>{\centering\arraybackslash}X}
		\begin{tabularx}{0.9\textwidth}{lYYYY}
			\toprule
			& Gaussian & Shuttle & Wine  & Dry Bean \\
			\midrule
			DEDPUL & 0.043    & 0.117   & 0.077 & 0.074    \\
			KM2    & 0.027    & 0.152   & 0.129 & 0.029    \\
			EN     & 0.063    & 0.075   & 0.110 & 0.107    \\
			CI MPE & \textbf{0.013}    & \textbf{0.053}   & \textbf{0.031} & \textbf{0.025}    \\
			\bottomrule
		\end{tabularx}
	\end{table*}
	
	 	\begin{table*}[h!]
		\centering
		\caption{Number of detected CI pairs and mean absolute error of $\hat\theta'$}
		\label{tab:realexp}
		\begin{tabular}{llcc}\\
			\toprule
			Dataset & Negative Class &\# Detected CI pairs & MAE of $\hat\theta'$\\
			\midrule
			\multirow{2}{*}{Breast Cancer} & B & 88 & $0.0284 \pm 0.0248$ \\
			& M & 86 & $0.0498 \pm 0.0612$ \\
			\midrule
			\multirow{2}{*}{DryBean} & SEKER & 2 & $0.0255 \pm 0.0252$ \\
			& HOROZ & 4 & $0.0179 \pm 0.0075$ \\
			\bottomrule
		\end{tabular}
	\end{table*}
	
	\begin{table*}[h!]
		\caption{Mean and standard deviation of the absolute errors for $(\hat\theta,\hat\theta')$  in the MCI MPE experiment. The first row corresponds to the PU setting, in which only the results for $\theta'$ are shown.\\}
		\centering
		\label{tbl:MCI_MPE}
		\setlength{\tabcolsep}{2.5pt}
		\begin{tabular}{lccc}\toprule
			$(\theta,\theta')$ & $n=n'=100$ & $500$ & $1000$ \\ \midrule
			(1,0.2)     & ( -- , 0.044 $\pm$ 0.037) & ( -- , 0.019 $\pm$ 0.012)  & ( -- , 0.015 $\pm$ 0.010)  \\
			(0.8,0.2)      & (0.048 $\pm$ 0.036, 0.044 $\pm$ 0.030) & (0.016 $\pm$ 0.013, 0.020 $\pm$ 0.014)  & (0.015 $\pm$ 0.011, 0.014 $\pm$ 0.010)  \\
			(0.5,0.2)     & (0.077 $\pm$ 0.079, 0.056 $\pm$ 0.048)    & (0.031 $\pm$ 0.024, 0.025 $\pm$ 0.019)     & (0.025 $\pm$ 0.017, 0.020 $\pm$ 0.014)     \\ \bottomrule
		\end{tabular}
	\end{table*}
	
	\section{EXPERIMENTS}
	\subsection{Mixture Proportion Estimation}\label{subsec:exp_mpe}
	We implement our MPE methods with synthetic data and benchmark datasets taken from the UCI machine learning repository. The estimated $\alpha_{\pm}$ by our methods are converted to $(\hat\theta, \hat\theta')=(\frac{1-\hat\alpha_-}{\hat\alpha_+-\hat\alpha_-},\frac{-\hat\alpha_-}{\hat\alpha_+-\hat\alpha_-})$ for comparison with existing MPE methods.

	\textbf{CI MPE with synthetic data} To evaluate our method, we utilize Gaussian data and three datasets from the UCI machine learning repository. We compare our CI MPE method with three existing MPE algorithms: DEDPUL \citep{ivanov2020dedpul}, EN \citep{elkan08pu} and KM2 \citep{ramaswamy2016mixture} that are developed under the irreducibility assumption or its variant. Since these baseline methods are designed for positive and unlabeled data, we set $\theta=1$ and focused on estimating only $\theta'$. 
	
	Gaussian data for each class is generated from a two-dimensional Gaussian distribution: $X_1\sim\mathcal{N}(Y,1),X_2\sim\mathcal{N}(Y,1)$.
	For UCI datasets, we choose Shuttle, Wine and Dry Bean datasets. The specific classes assigned as positive and negative for each dataset are detailed in Appendix \ref{appsubsec:expcimpe_syn}. The primary goal of this experiment is to validate our method when the CI assumption (Assumption \ref{asmp:ci}) holds, while the irreducibility assumption is violated. To create this scenario, we modified the datasets by the following procedure. At first, $20\%$ of the original positive data is transferred into negative data to break irreducibility. Then we split the features into two sets of equal dimension and sample each set independently, given the class $Y$, with replacement. This manually creates CI datasets. For each dataset, we conducted $3\times 10$ experiments. We select $\theta'$ from $\{0.2,0.5,0.7\}$. For each $\theta'$, we performed 10 trials by randomly drawing $n=n'=2000$ samples for the positive and unlabeled sets. Foxr our CI MPE method, we use identity functions for $g_1$ and $g_2$, as this setup performed well in preliminary experiments.
	
	The results in Table \ref{tbl:CI_MPE} are the averages over the 10 trials for each $\theta'$. The table shows that our method consistently outperforms the other methods, which yield unstable estimations when the irreducibility is violated. 
	 
 	\textbf{CI MPE with real-world data} We also implement experiments on real-world datasets to demonstrate the existence of CI features and the applicability of our MPE method. 
 	
 	In the experiments, we first use the HSIC test \citep{gretton07kernal} on labeled data to detect feature pairs $(X_1,X_2)$ that are conditionally independent given the negative class, i.e., $X_1\indep X_2|Y=-1$. Then, we conduct MPE experiments with our CI MPE method on the detected feature pairs. We use two datasets from the UCI repository: the Breast Cancer Wisconsin and Dry Bean datasets. For each dataset, we choose positive and negative classes and implement the experiments, switching the classes.
 	The detailed procedure is provided in Appendix \ref{appsubsec:expcimpe_real}.
 	 	
 	For the MPE task, we set $n=n^{\prime}=2000$ and used a Positive-Unlabeled (PU) setting with class $\operatorname{priors}\left(\theta, \theta^{\prime}\right)=(1,0.5)$.
 	The number of detected CI pairs and the resulting Mean Absolute Error (MAE) of all pairs and trials in each dataset are shown in Table \ref{tab:realexp}.
 	The results confirm the presence of CI pairs in real-world data and show that our MPE method is applicable in these scenarios.

	\textbf{MCI MPE with synthetic data} In this experiment, we use Gaussian data that satisfies the MCI assumption (Assumption \ref{asmp:mci}). We generate three-dimensional data as follows: $X_S\sim\mathcal{N}(0.5,1), X_1\sim\mathcal{N}(Y,1)+X_S, X_2\sim\mathcal{N}(Y,1)+X_S$. We use identity functions for $g_1$ and $g_2$ as in CI MPE, and employ Gaussian kernels for KRR. The Golden section search method \citep{kiefer53gss} is used to optimize $\hat m_{MCI}^2(\alpha)$. We performed 100 trials for each pair $ (\theta,\theta')\in\{(1,0.2),(0.8,0.2),(0.5,0.2)\}$.  Further details, including the regularization parameter for KRR and the search range $I_{\alpha^*}$, are available in Appendix \ref{appsubsec:expmcimpe_syn}.
	The averaged results, presented in Table \ref{tbl:MCI_MPE}, suggest that our method successfully estimates $\theta$ and $\theta'$. As shown in Theorem \ref{thm:mcimpe_an}, the estimation errors tend to decrease as $(\theta-\theta')^2$ increases.
	
	We also implement MCI MPE with real-world data. However, due to space constraints, these results are deferred to Appendix \ref{appsubsec:expmcimpe_real}.

	\subsection{Weakly-supervised Kernel CI and MCI Tests}\label{subsec:exptest}
	We evaluate the performance of the proposed kernel CI and MCI tests with the following class-conditional Gaussian data for each test:
%
\\CI: $\begin{pmatrix}
	X_1 \\
	X_2
\end{pmatrix} 
\sim \mathcal{N}\left(\begin{pmatrix}
	Y \\ 
	Y
\end{pmatrix}, \Sigma_Y \right)$ 

MCI: $\begin{pmatrix}
	X_1 \\ 
	X_2
\end{pmatrix} 
\sim \mathcal{N}\left(\begin{pmatrix}
	Y \\ 
	Y
\end{pmatrix}, \Sigma_Y \right) + 
\begin{pmatrix}
	X_S \\
	X_S
\end{pmatrix}$ and $X_S\sim\mathcal{N}(0.5,1)$. 

Here, the covariance matrix is defined as $\Sigma_1 = \begin{pmatrix} 1 & \sigma_{12} \\ \sigma_{12} & 1 \end{pmatrix}$ for the positive class, and the identity matrix $\Sigma_{-1} = I$ for the negative class. In these experiments, we specifically test the CI and MCI assumptions for the positive distribution. 
	
To generate data under both the null and alternative hypotheses, we varied the covariance $\sigma_{12}$. The null hypothesis $\left(H_0\right)$ corresponds to $\sigma_{12}=0$, while the alternative $\left(H_1\right)$ corresponds to $\sigma_{12} \neq 0$. A larger value of $\sigma_{12}$ represents a greater deviation from the null hypothesis. For all experiments, we set the true mixture proportions to $\left(\theta, \theta^{\prime}\right)=(0.8,0.2)$ and use a Gaussian kernel for our tests. We assess the  tests' ability to control the Type I and Type II error rates by performing 1000 repetitions for each experimental setting, varying the sample size. The significance level is set to 0.05. Further details such as hyperparameters are provided in Appendix \ref{appsubsec:exptest_hypara}.
	
	\begin{table}[t!]
		\centering
		\caption{Rejection rates for the kernel CI test (top) and MCI test (bottom) with true mixture proportions. They should be close to the significance level $0.05$ under $H_0$ ($\sigma_{12}$ = 0).\\}
		\label{tbl:test_with}
		\begin{minipage}{0.48\textwidth}
			\centering
			\begin{tabular}{lrrr}\toprule
				$\sigma_{12}$ & $n=n'=500$ & $1000$ & $2000$ \\ \midrule
				0        & 0.051 & 0.055  & 0.052  \\
				0.2      & 0.399 & 0.748  & 0.996  \\
				0.5      & 1     & 1      & 1      \\ \bottomrule\\
			\end{tabular}
		\end{minipage}\hspace{-2.5em}
		\begin{minipage}{0.48\textwidth}
			\centering
			\begin{tabular}{lrrr}\toprule
				$\sigma_{12}$ & $n=n'=500$ & $1000$ & $2000$ \\ \midrule
				0             & 0.062 & 0.042   & 0.035  \\
				0.2           & 0.307 & 0.605   & 0.910  \\
				0.5           & 0.988 & 0.999   & 1      \\ \bottomrule
			\end{tabular}
		\end{minipage}
	\end{table}
	
	\begin{table}[t!]
		\centering
		\caption{Rejection rates for the kernel CI test (top) and MCI test (bottom) without true mixture proportions. They should be close to the significance level $0.05$ under $H_0$ ($\sigma_{12}$ = 0).\\}
		\label{tbl:test_without}
		\begin{minipage}{0.48\textwidth}
			\centering
			\begin{tabular}{lrrr}\toprule
				$\sigma_{12}$ & $n=n'=500$ & $1000$ & $2000$ \\ \midrule
				0             & 0.041 & 0.038   & 0.042  \\
				0.2           & 0.573 & 0.915   & 0.994  \\
				0.5           & 1     & 1       & 1      \\ \bottomrule\\
			\end{tabular}
		\end{minipage}\hspace{-2.5em}
		\begin{minipage}{0.48\textwidth}
			\centering
			\begin{tabular}{lrrr}\toprule
				$\sigma_{12}$ & $n=n'=1000$ & $2000$ & $3000$ \\ \midrule
				0             &  0.040 &0.050    &0.055   \\
				0.2           & 0.207 &0.484    &0.743   \\
				0.5           &    0.726  & 0.92       &0.951      \\ \bottomrule
			\end{tabular}
		\end{minipage}
	\end{table}
	
	\textbf{Tests with true mixture proportions } In this setting, the methods from Sections \ref{subsec:citwith} and 
	\ref{subsec:mcitwith} are evaluated. As shown in Table \ref{tbl:test_with}, both methods effectively control the Type I error around the target significance level of 0.05. While statistical power is limited for smaller sample sizes ($n=500$) when  $\sigma_{12}=0.2$, it improves significantly for larger $n$.
	
	\textbf{Tests without true mixture proportions }
	We also evaluate the tests proposed in Section \ref{subsec:testwo}, with results presented in Table \ref{tbl:test_without}.  This setting is more challenging as the true mixture proportions are unknown. Consequently, the Type II error rates are higher compared to the previous experiment, particularly for the MCI test. As we detail in Appendix \ref{appsubsec:exp_low_power}, the low power in the MCI test stems from the null and alternative distributions not being well-separated in small samples. This limitation can be mitigated by increasing the sample size at the expense of computational cost. From the perspective of our downstream MCI MPE application, this lower power is not highly problematic. Even when the test fails to reject $H_0$ for $\sigma_{12}=0.2$, the resulting relative bias of the estimator $\hat{\alpha}_{M C I}$ is small (approximately 10\%), as detailed in Appendix \ref{appsubsec:exp_bias}.
	
	\section{CONCLUSIONS AND DISCUSSIONS}
	This work introduces novel identifiability conditions for mixture proportion estimation (MPE): the class-specific Conditional Independence (CI) and Multivariate Conditional Independence (MCI) assumptions.  Based on these conditions, we propose method of moments estimators and establish their theoretical properties.
	
	Another contribution is the development of weakly-supervised kernel-based statistical tests (WsKCI and WsKMCI) that validate these CI and MCI assumptions using only unlabeled data. These tests have potential applications such as causal discovery and fairness assessment. We empirically demonstrate the effectiveness of the proposed MPE estimators and statistical tests. 
	
	Several directions for future research arise from this study. First, the performance of our MPE methods depends on the choice of functions $g_1$ and $g_2$. Investigating how to choose $g_1$ and $g_2$ for the smaller estimation variance is an important direction. Second, conducting our statistical tests on high-dimensional data is challenging, as the number of candidate CI and MCI pairs can become large, which causes a multiple testing problem. Developing a method to efficiently find pairs that satisfy the MPE assumptions in high-dimensional data is left for future work. A third direction is to apply our test statistic as a regularizer for fair and domain-invariant representation learning, similarly to  \citet{pogodin23eff}.

	\section*{Acknowledgement}
	TK was partially supported by JSPS KAKENHI Grant Numbers 20H00576, 23H03460, and 24K14849.
	
	\bibliographystyle{apalike}
	\bibliography{ref}
	
	\section*{Checklist}
	
	\begin{enumerate}
		
		\item For all models and algorithms presented, check if you include:
		\begin{enumerate}
			\item A clear description of the mathematical setting, assumptions, algorithm, and/or model. [Yes] 
			\item An analysis of the properties and complexity (time, space, sample size) of any algorithm. [Yes]
			
			\item (Optional) Anonymized source code, with specification of all dependencies, including external libraries. [Yes]
			
		\end{enumerate}
		
		\item For any theoretical claim, check if you include:
		\begin{enumerate}
			\item Statements of the full set of assumptions of all theoretical results. [Yes]

			\item Complete proofs of all theoretical results. [Yes]

			\item Clear explanations of any assumptions. [Yes]

		\end{enumerate}
		
		\item For all figures and tables that present empirical results, check if you include:
		\begin{enumerate}
			\item The code, data, and instructions needed to reproduce the main experimental results (either in the supplemental material or as a URL). [Yes]

			\item All the training details (e.g., data splits, hyperparameters, how they were chosen). [Yes]
			
			\item A clear definition of the specific measure or statistics and error bars (e.g., with respect to the random seed after running experiments multiple times). [Yes]

			\item A description of the computing infrastructure used. (e.g., type of GPUs, internal cluster, or cloud provider). [Yes]

		\end{enumerate}
		
		\item If you are using existing assets (e.g., code, data, models) or curating/releasing new assets, check if you include:
		\begin{enumerate}
			\item Citations of the creator If your work uses existing assets. [Yes]

			\item The license information of the assets, if applicable. [Yes]

			\item New assets either in the supplemental material or as a URL, if applicable. [Not Applicable]

			\item Information about consent from data providers/curators. [Not Applicable]

			\item Discussion of sensible content if applicable, e.g., personally identifiable information or offensive content. [Not Applicable]

		\end{enumerate}
		
		\item If you used crowdsourcing or conducted research with human subjects, check if you include:
		\begin{enumerate}
			\item The full text of instructions given to participants and screenshots. [Not Applicable]

			\item Descriptions of potential participant risks, with links to Institutional Review Board (IRB) approvals if applicable. [Not Applicable]

			\item The estimated hourly wage paid to participants and the total amount spent on participant compensation. [Not Applicable]
			
		\end{enumerate}	
	\end{enumerate}

	\clearpage
	\appendix
	\thispagestyle{empty}
	
	\newpage
	
%
%




%

%

\onecolumn
\aistatstitle{Appendix}
\etocdepthtag.toc{mtappendix}
\etocsettagdepth{mtchapter}{none}
\etocsettagdepth{mtappendix}{subsection}
\tableofcontents

\section{MIXTURE PROPORTION ESTIMATION WITH CI AND MCI}
\subsection{Proofs for the MPE with CI}
\begin{proof}[Proof of Lemma \ref{lem:cimoment}]
	$m_{CI}(\alpha)=0$ is a quadratic equation such that 
	$$m_{CI}(\alpha)=a\alpha^2+b\alpha+c$$
	where 
	\begin{align*}
		a&=E_{U_1U'_2}[g_{12}]+E_{U'_1U_2}[g_{12}]-E_{U_1U_2}[g_{12}]-E_{U'_1U'_2}[g_{12}],\\
		b&=E_{U_{12}}[g_{12}]-E_{U'_{12}}[g_{12}]+2E_{U'_1U'_2}[g_{12}]-E_{U_1U'_2}[g_{12}]-E_{U'_1U_2}[g_{12}],\\
		c&=E_{U'_{12}}[g_{12}]-E_{U'_1U'_2}[g_{12}].
	\end{align*}
	
	The distributions for the coefficient $a$ are simplified as follows:
	$$U_1U'_2+U_1U'_2-U_1U_2-U'_1U'_2=-(U_1-U'_1)(U_2-U'_2)=-(\theta-\theta')^2(P_1-N_1)(P_2-N_2)$$
	
	Therefore, 
	$$a=-(\theta-\theta')^2(E_{P_1}[g_1]-E_{N_1}[g_1])\cdot(E_{P_2}[g_2]-E_{N_2}[g_2]).$$
	
	Considering $\alpha^*$ is one solution of $m_{CI}(\alpha)=0$, if $(E_{P_1}[g_1]-E_{N_1}[g_1])\cdot(E_{P_2}[g_2]-E_{N_2}[g_2])\neq0$, $a\neq 0$ and there exist real solutions for $m_{CI}(\alpha)=0$.
\end{proof}

\begin{proof}[Proof of Theorem \ref{thm:cimpe_an}]
	The proof first requires establishing the consistency of the estimator $\hat\alpha_{CI}$.
	\begin{lem}[Consistency of $\hat\alpha_{CI}$]\label{lem:cimpe_cons}
		Assume $\alpha^*$ is the unique solution of $m_{C I}(\alpha)=0$ in $I_{\alpha^*}$. Then, $\hat\alpha_{CI}$ is a consistent estimator of $\alpha^*$.
	\end{lem}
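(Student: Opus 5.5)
Since $m_{CI}(\alpha)$ and $\hat m_{CI}(\alpha)$ are both quadratic polynomials in $\alpha$, the plan is the standard argmin-consistency argument for extremum estimators: show uniform convergence of the criterion on the compact set $I_{\alpha^*}$, then use identifiability.

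\textbf{Step 1: reduce to coefficient convergence.} By the computation in the proof of Lemma~\ref{lem:cimoment},
\[
m_{CI}(\alpha)=a\alpha^2+b\alpha+c,
\]
where $a,b,c$ are fixed linear combinations of the quantities $E_{U_{12}}[g_{12}]$, $E_{U'_{12}}[g_{12}]$, $E_{U_1U_2}[g_{12}]$, $E_{U_1U'_2}[g_{12}]$, $E_{U'_1U_2}[g_{12}]$ and $E_{U'_1U'_2}[g_{12}]$. Replacing every distribution by its empirical version gives
\[
\hat m_{CI}(\alpha)=\hat a\alpha^2+\hat b\alpha+\hat c .
\]

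\textbf{Step 2: coefficient consistency.}
\begin{itemize}
\item The terms $E_{\hat U_{12}}[g_{12}]$ and $E_{\hat U'_{12}}[g_{12}]$ are sample means, so they converge a.s.\ by the strong law of large numbers, using boundedness of $g_1,g_2$.
\item Each product term factorizes, e.g.\ $E_{\hat U_1\hat U'_2}[g_{12}]=E_{\hat U_1}[g_1]\cdot E_{\hat U'_2}[g_2]$, because $g_{12}$ is a dot product of functions of $X_1$ and $X_2$ separately. It therefore converges by the LLN applied to each vector mean together with continuity of the dot product.
\end{itemize}
Hence $(\hat a,\hat b,\hat c)\to(a,b,c)$ in probability as $n,n'\to\infty$, which holds since $M/n\to\nu$ and $M/n'\to\nu'$ are finite.

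\textbf{Step 3: uniform convergence.} Let $R:=\sup_{\alpha\in I_{\alpha^*}}|\alpha|<\infty$. Then
\[
\sup_{\alpha\in I_{\alpha^*}}\bigl|\hat m_{CI}(\alpha)-m_{CI}(\alpha)\bigr|\le R^2|\hat a-a|+R|\hat b-b|+|\hat c-c|\xrightarrow{p}0 .
\]
Both functions are bounded on $I_{\alpha^*}$ with probability tending to one, so the same uniform convergence holds for $\hat m_{CI}^2$ versus $m_{CI}^2$.

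\textbf{Step 4: identifiability.} $m_{CI}^2$ is continuous, and $\alpha^*$ is its unique zero on the compact set $I_{\alpha^*}$. Consequently, for every $\epsilon>0$,
\[
\inf_{\alpha\in I_{\alpha^*},\,|\alpha-\alpha^*|\ge\epsilon} m_{CI}^2(\alpha)=:\delta>0,
\]
since a continuous function attains its minimum on the compact set $I_{\alpha^*}\cap\{|\alpha-\alpha^*|\ge\epsilon\}$, and that minimum is nonzero.

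\textbf{Step 5: conclusion.} Because $\hat\alpha_{CI}$ minimizes $\hat m_{CI}^2$,
\[
m_{CI}^2(\hat\alpha_{CI})\le \hat m_{CI}^2(\hat\alpha_{CI})+o_p(1)\le \hat m_{CI}^2(\alpha^*)+o_p(1)=m_{CI}^2(\alpha^*)+o_p(1)=o_p(1).
\]
Therefore $P(|\hat\alpha_{CI}-\alpha^*|\ge\epsilon)\le P\bigl(m_{CI}^2(\hat\alpha_{CI})\ge\delta\bigr)\to0$, which is consistency. This is Theorem 5.7 of van der Vaart, and could be cited directly.

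\textbf{Main obstacle.} The only delicate point is the well-separatedness in Step 4. Here it follows immediately from compactness of $I_{\alpha^*}$ and continuity of $m_{CI}^2$; no convexity is needed, even though $F^\alpha$ may be a signed measure for some $\alpha\in I_{\alpha^*}$. Also, if $I_{\alpha^*}$ is not an interval, the argument is unchanged, because only closedness and boundedness of $I_{\alpha^*}$ were used.
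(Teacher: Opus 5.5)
Your proof is correct and has the same skeleton as the paper's: first establish $\sup_{\alpha\in I_{\alpha^*}}|\hat m_{CI}^2(\alpha)-m_{CI}^2(\alpha)|\xrightarrow{p}0$, then apply the argmin-consistency argument of Theorem 5.7 in van der Vaart.

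The difference is in how the uniform convergence is obtained.
\begin{itemize}
\item The paper treats $\hat m_{CI}^2(\alpha)$, for each fixed $\alpha$, as a two-sample V-statistic. It then defers to the uniform-convergence argument in the proof of Theorem 3 of Yan et al.
\item You observe that $\hat m_{CI}$ and $m_{CI}$ are quadratics in $\alpha$ whose coefficients are built from finitely many sample means. The product-measure terms factor as dot products of means because $g_{12}=g_1\cdot g_2$. Uniform convergence on the bounded set $I_{\alpha^*}$ then reduces to the LLN plus the bound $R^2|\hat a-a|+R|\hat b-b|+|\hat c-c|$.
\end{itemize}

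Your route is more elementary and fully self-contained. It also makes explicit the well-separatedness of the minimum, via compactness of $I_{\alpha^*}$ and continuity of $m_{CI}^2$, which the paper leaves implicit when it cites Theorem 5.7.

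What the paper's V-statistic route buys is generality: it does not depend on polynomial dependence on $\alpha$. That is why the paper can reuse it for $\hat\alpha_{MCI}$. There, $\mu^\alpha_\tau$ (and its KRR estimate) depends on $\alpha$ rationally rather than polynomially, so your coefficient-reduction trick would not carry over directly.
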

	
	\begin{proof}[Proof of Lemma \ref{lem:cimpe_cons}]
		For a fixed $\alpha, \hat{m}_{C I}^2(\alpha)$ can be viewed as a two-sample V-statistic \citep{yan24data,Vaart98asymp}. Following a similar procedure to the proof of Theorem 3 in \citet{yan24data}, we can establish the uniform convergence: $\sup _{\alpha \in I_{\alpha^*}}\left|\hat{m}_{C I}^2(\alpha)-m_{C I}^2(\alpha)\right| \xrightarrow{p} 0$. Then, we can prove the consistency, $\hat{\alpha}_{C I} \xrightarrow{p} \alpha^*$, similarly to Theorem 5.7 of \citet{Vaart98asymp}.
	\end{proof}

	For simplicity, we denote $m_{CI}(\alpha)$, $\hat m_{CI}(\alpha)$, and $\hat\alpha_{CI}$ as $m(\alpha)$, $\hat m(\alpha)$ and $\hat\alpha$ respectively in this proof.
	Since $\hat{\alpha}_{CI}$ minimizes $\hat m^2(\alpha)$,
	by the first-order condition, we have 
	\begin{equation}\label{eq:firstOrder}
		\frac{\partial\hat{m}^2}{\partial \alpha}(\hat{\alpha}) = 2 \hat{m}(\hat{\alpha}) \frac{\partial \hat{m}}{\partial \alpha}(\hat{\alpha}) = 0, 
	\end{equation}
	where we denote $\frac{\partial\hat{m}^2}{\partial \alpha}(\hat{\alpha}):=\frac{\partial\hat{m}^2}{\partial \alpha}(\alpha)|_{\alpha=\hat\alpha}$. 
	
	On the other hand, by the mean value theorem, we get
	\begin{equation}\label{eq:meanValue}
		\hat{m}(\hat{\alpha}) - \hat{m}(\alpha^*) 
		= 
		\frac{\partial \hat{m}}{\partial \alpha}(\tilde{\alpha}) (\hat{\alpha} - \alpha^*)
	\end{equation}
	for some $\tilde{\alpha}$ between $\alpha^*$ and $\hat{\alpha}$.
	Multiplying both sides of (\ref{eq:meanValue}) by $\frac{\partial \hat{m}}{\partial \alpha}(\hat{\alpha})$ and applying (\ref{eq:firstOrder}) yields
	$$
	-\frac{\partial \hat{m}}{\partial \alpha}(\hat{\alpha}) \hat{m}(\alpha^*)=
	\frac{\partial \hat{m}}{\partial \alpha}(\hat{\alpha}) \frac{\partial \hat{m}}{\partial \alpha}(\tilde{\alpha}) (\hat{\alpha} - \alpha^*),
	$$
	and thus
	$$
	\hat{\alpha} - \alpha^*
	=-\frac{\frac{\partial \hat{m}}{\partial \alpha}(\hat{\alpha}) \hat{m}(\alpha^*) }{\frac{\partial \hat{m}}{\partial \alpha}(\hat{\alpha}) \frac{\partial \hat{m}}{\partial \alpha}(\tilde{\alpha})}.
	$$

	Here, $\frac{\partial \hat{m}}{\partial \alpha}(\hat{\alpha})$ and $\frac{\partial \hat{m}}{\partial \alpha}(\tilde{\alpha})$ converge to $\frac{\partial m}{\partial \alpha} (\alpha^*)$ in probability because $\hat{\alpha}, \tilde{\alpha} \overset{p}{\rightarrow} \alpha^*$ holds by consistency (Lemma \ref{lem:cimpe_cons}). Therefore, assuming $\sqrt{M} \hat{m}(\alpha^*) \overset{d}{\rightarrow} \mathcal{D}$ for some distribution $\mathcal{D}$, we have
	\begin{equation*}
		\sqrt{M} (\hat{\alpha} - \alpha^*) \overset{d}{\rightarrow} - \frac{1}{d_0} \mathcal{D},
	\end{equation*}
	where $d_0 := \frac{\partial m}{\partial \alpha} (\alpha^*)$. 
	
	Next, we identify the limiting distribution $\mathcal{D}$. We can write  $\hat{m}(\alpha^*)$ as
	\begin{equation*}
		\hat{m}(\alpha^*) 
		=
		E_{\hat{F}^{\alpha^*}_{12}}[(g_{1} - E_{\hat{F}^{\alpha^*}_1}[g_{1}]) \cdot (g_{2} - E_{\hat{F}^{\alpha^*}_2}[{g_{2}}])].
	\end{equation*}
	
	We introduce $\hat{m}_c(\alpha^*)$ with the true centralization, defined as
	\begin{equation*}
		\hat{m}_c(\alpha^*) 
		:=
		E_{\hat F^{\alpha^*}_{12}}[(g_{1} - E_{F^{\alpha^*}_1}[g_{1}]) \cdot (g_{2} - E_{F^{\alpha^*}_2}[{g_{2}}])]. 
	\end{equation*}
	These two quantities satisfy $\sqrt{M}(\hat{m}(\alpha^*) - \hat{m}_c(\alpha^*)) = o_p(1)$ because
	\begin{align*}
		\sqrt{M}(\hat{m}(\alpha^*) - \hat{m}_c(\alpha^*))
		&=
		\sqrt{M} (E_{\hat F_1^{\alpha^*}}[{g_{1}}] - E_{F^{\alpha^*}_1}[{g_{1}}]) \cdot (E_{\hat F_2^{\alpha^*}}[{g_{2}}] - E_{F^{\alpha^*}_2}[{g_{2}}])
		\overset{p}{\rightarrow}
		0,
	\end{align*}
	
	where the first term,$\sqrt{M} (E_{\hat F^{\alpha^*}_1}[{g_{1}}] -E_{F^{\alpha^*}_1}[{g_{1}}])$ converges in distribution to a normal distribution by the Central Limit Theorem (CLT), while the second term $E_{\hat F_2^{\alpha^*}}[{g_{2}}] -E_{F^{\alpha^*}_2}[{g_{2}}]$ converges in probability to $\mathbf{0}$ by the Law of Large number (LLN).
	Therefore, $\sqrt{M}\hat{m}(\alpha^*)$ and $\sqrt{M}\hat{m}_c(\alpha^*)$ converge in distribution to the same distribution $\mathcal{D}$ (if they converge).
	Let  $\tilde{g}_{12} := (g_{1} - E_{F^{\alpha^*}_1}[{g_{1}}]) \cdot (g_{2} - E_{F^{\alpha^*}_2}[{g_{2}}])$ be the centralized function. Then we have 
	\begin{align*}
		\sqrt{M}\hat{m}_c(\alpha^*) 
		=&
		\sqrt{M}E_{\hat{F}^{\alpha^*}_{12}}[{\tilde{g}_{12}}]
		\\ =&
		\alpha^*\sqrt{M}E_{\hat{U}_{12}}[{\tilde{g}_{12}}]
		+
		(1 - \alpha^*)\sqrt{M}E_{\hat{U'}_{12}}[{\tilde{g}_{12}}]
		\\ =&
		\alpha^*\sqrt{M}E_{\hat{U}_{12}}[{\tilde{g}_{12}}]
		+
		(1 - \alpha^*)\sqrt{M}E_{\hat{U'}_{12}}[{\tilde{g}_{12}}]
		- 
		(\alpha^*\sqrt{M}E_{U_{12}}[{\tilde{g}_{12}}]
		+
		(1 - \alpha^*)\sqrt{M}E_{U'_{12}}[{\tilde{g}_{12}}]
		)
		\\ =&
		\alpha^*\sqrt{M/n} \sqrt{n}(
		E_{\hat U_{12}}[{\tilde{g}_{12}}]
		-
		E_{U_{12}}[{\tilde{g}_{12}}])
		+
		(1 - \alpha^*)\sqrt{M/n'} \sqrt{n'}(
		E_{\hat U'_{12}}[{\tilde{g}_{12}}]
		-
		E_{U'_{12}}[{\tilde{g}_{12}}])
		\\ \overset{d}{\rightarrow}&
		\mathcal{N}(0, \nu {\alpha^*}^2 V_{U_{12}}[\tilde{g}_{12}] + \nu' (1 - {\alpha^*})^2 V_{U'_{12}}[\tilde{g}_{12}]), 
	\end{align*}
	where the third equality holds under Assumption \ref{asmp:ci} and we used the CLT in the last convergence.
	
	Combining these results, the asymptotic distribution of $\hat\alpha$ is 
	$$\sqrt{M}(\hat{\alpha}-\alpha^*) \xrightarrow{d} \mathcal{N}\left(0,\frac{\nu {\alpha^*}^2V_{U_{12}}[\tilde g_{12}]+\nu' (1-\alpha^*)^2V_{U'_{12}}[\tilde g_{12}]}
	{d^2_0}\right). $$
	
	Finally, we analyze the derivative term $d_0$. For $\alpha^*=\alpha_+$, we have
	\begin{align*}
		d_0&=\frac{\partial m}{\partial \alpha}(\alpha^*)=E_{U_{12}}[g_{12}]-E_{U'_{12}}[g_{12}]-E_{F^{\alpha^*}_{1}(U_2-U'_2)}[g_{12}]-E_{(U_1-U'_1)F^{\alpha^*}_2}[g_{12}]\\
		&=(\theta-\theta')(E_{F^{\alpha^*}_{12}}[g_{12}]-E_{ F^{\bar\alpha^*}_{12}}[g_{12}])-(\theta-\theta')(E_{F^{\alpha^*}_{1}F^{\alpha^*}_{2}}[g_{12}]-E_{F^{\alpha^*}_1 F^{\bar\alpha^*}_{2}}[g_{12}])\\
		&\quad-(\theta-\theta')(E_{F^{\alpha^*}_{1}F^{\alpha^*}_{2}}[g_{12}]-E_{F^{\bar\alpha^*}_1 F^{\alpha^*}_{2}}[g_{12}])\\
		&=-(\theta-\theta')E_{F^{\bar\alpha^*}_{12}}[\tilde g_{12}].
	\end{align*}
	For $\alpha^*=\alpha_-$, we have $d_0=(\theta-\theta')E_{F^{\bar\alpha^*}_{12}}[\tilde g_{12}]$.
	Then the desired asymptotic distribution is derived.
\end{proof}

%
%

\subsection{Proofs for the MPE with MCI}
\begin{proof}[Proof of Theorem \ref{thm:mcimpe_an}]
	For simplicity, we denote $m_{MCI}(\alpha)$, $\hat m_{MCI}(\alpha)$, and $\hat\alpha_{MCI}$ as $m(\alpha)$, $\hat m(\alpha)$ and $\hat\alpha$ respectively in this proof. For any fixed $\alpha$, $\hat m^2(\alpha)$ can be viewed as a two-sample V-statistic. We can show the uniform convergence $\sup _{\alpha \in I_{\alpha^*}}\left|\hat m^2(\alpha)-m^2(\alpha)\right| \xrightarrow{p} 0$ and prove the consistency of $\hat\alpha_{MCI}$, $\hat\alpha\xrightarrow{p}\alpha^*$ by an argument analogous to the proof of Lemma \ref{lem:cimpe_cons}.
	
	Furthermore, following the same procedure as in the proof of Theorem \ref{thm:cimpe_an}, if we assume $\sqrt{M} \hat{m}(\alpha^*) \overset{d}{\rightarrow} \mathcal{D}$ for some distribution $\mathcal{D}$, it follows that
	\begin{equation*}
		\sqrt{M} (\hat{\alpha} - \alpha^*) \overset{d}{\rightarrow} - \frac{1}{d_0}\mathcal{D}
	\end{equation*}
	where $d_0 := \frac{\partial m}{\partial \alpha} (\alpha^*)$. By the CLT, we also have
	\begin{align*}
		\sqrt{M}\hat{m}(\alpha^*)&=
		\alpha^*\sqrt{M}E_{\hat{U}_{12S}}[{\tilde{g}_{12S}}]
		+ 
		(1 - \alpha^*)\sqrt{M}E_{\hat{U'}_{12S}}[{\tilde{g}_{12S}}]
		\\ &=
		\alpha^*\sqrt{M}E_{\hat{U}_{12S}}[{\tilde{g}_{12S}}]
		+
		(1 - \alpha^*)\sqrt{M}E_{\hat{U'}_{12S}}[{\tilde{g}_{12S}}]
		-
		(\alpha^*\sqrt{M}E_{U_{12S}}[{\tilde{g}_{12S}}]
		\\&\quad+
		(1 - \alpha^*)\sqrt{M}E_{U'_{12S}}[{\tilde{g}_{12S}}]
		)
		\\ &=
		\alpha^*\sqrt{M/n} \sqrt{n}(
		E_{\hat U_{12S}}[{\tilde{g}_{12S}}]
		-
		E_{U_{12S}}[{\tilde{g}_{12S}}])
		+
		(1 - \alpha^*)\sqrt{M/n'} \sqrt{n'}(
		E_{\hat U'_{12S}}[{\tilde{g}_{12S}}]
		-
		E_{U'_{12S}}[{\tilde{g}_{12S}}])
		\\ &\overset{d}{\rightarrow}
		\mathcal{N}(0, \nu {\alpha^*}^2 V_{U_{12S}}[\tilde{g}_{12S}] + \nu' (1 - {\alpha^*})^2 V_{U'_{12S}}[\tilde{g}_{12S}]). 
	\end{align*}
	
	The remaining part is $d_0$. We have
	\begin{align*}
		d_0&=\frac{\partial m}{\partial \alpha}(\alpha^*)=E_{U_{12}}[\tilde{g}_{12S}]-E_{U'_{12S}}[\tilde{g}_{12S}]+E_{F^{\alpha^*}_{12S}}[\frac{\partial}{\partial\alpha}(g_1-\mu_1^\alpha)(g_2-\mu_2^\alpha)|_{\alpha=\alpha^*}],
	\end{align*}
	where we interchange differentiation and integration, assuming $\frac{\partial}{\partial \alpha}\mu^\alpha_1$ and $\frac{\partial}{\partial \alpha}\mu^\alpha_2$ are bounded.
	
	Let us evaluate the derivative term inside the expectation:
	\begin{align*}
		\frac{\partial}{\partial\alpha}(g_1-\mu_1^\alpha)(g_2-\mu_2^\alpha)|_{\alpha=\alpha^*}=&-g_1 \frac{\partial}{\partial\alpha}\mu^{\alpha^*}_2-g_2 \frac{\partial}{\partial\alpha}\mu^{\alpha^*}_1+\mu^{\alpha^*}_1\frac{\partial}{\partial\alpha}\mu^{\alpha^*}_2+\mu^{\alpha^*}_2\frac{\partial}{\partial\alpha}\mu^{\alpha^*}_1\\
		=&(\mu^{\alpha^*}_1-g_1) \frac{\partial}{\partial\alpha}\mu^{\alpha^*}_2+(\mu^{\alpha^*}_2-g_2)\frac{\partial}{\partial\alpha}\mu^{\alpha^*}_1
	\end{align*}
	and then $E_{F^{\alpha^*}_{12S}}[\frac{\partial}{\partial\alpha}(g_1-\mu_1^\alpha)(g_2-\mu_2^\alpha)|_{\alpha=\alpha^*}]=0$. 	
	
	Therefore, the expression for $d_0$ is simplified to:
		$$d_0 = E_{U_{12S}}[\tilde{g}_{12S}]-E_{U'_{12S}}[\tilde{g}_{12S}]=
		 \begin{cases}
			-(\theta - \theta')E_{F^{\bar\alpha^*}_{12S}}[\tilde{g}_{12S}] & \text{if $\alpha^*=\alpha_+$,} \\
			(\theta - \theta')E_{F^{\bar\alpha^*}_{12S}}[\tilde{g}_{12S}]       & \text{if $\alpha^*=\alpha_-$.}
		\end{cases}$$

\end{proof}

\section{WEAKLY-SUPERVISED KERNEL CI AND MCI TEST WITH TRUE MIXTURE PROPORTIONS}\label{appsec:test_with_prop}
In the proofs of this section, we denote $F_\tau^{\alpha^*}$ as $F_\tau$ for simplicity.

\subsection{Proofs for the CI test}\label{appsubsec:proof_ci_test}
\begin{proof}[Proof of Theorem \ref{thm:tci_asyd}]
		
	We define the centralized kernel $\tilde k_{12}$ associated with the feature map $\tilde{\varphi}_{12}(x):= (\varphi_1(x_1) - E_{F_1}[\varphi_1(x_1)]) \otimes (\varphi_2(x_2) - E_{F_2}[\varphi_2(x_2)])$. Then,
	\begin{align*}
		\tilde k_{12}(x,x')&=\left(k_1(x_1,x'_1)-E_{z_1\sim F_1}k_1(x_1,z_1)-E_{z_1\sim F_1}k_1(x'_1,z_1)+E_{z_1,z'_1\sim F_1}k_1(z_1,z'_1)\right)\\
		&\left(k_2(x_2,x'_2)-E_{z_2\sim F_2}k_2(x_2,z_2)-E_{z_2\sim F_2}k_2(x'_2,z_2)+E_{z_2,z'_2\sim  F_2}k_2(z_2,z'_2)\right). 
	\end{align*}
	Since $k_1$ and $k_2$ are positive-definite kernels,  by Mercer's theorem \citep{schol01kernel}, they can be expanded as  $k_1(x_1,x'_1)=\Sigma_{r=1}^\infty\lambda_{1,r}\phi_{1,r}(x_1)\phi_{1,r}(x'_1)$ and $k_2(x_2,x'_2)=\Sigma_{r=1}^\infty\lambda_{2,r}\phi_{2,r}(x_2)\phi_{2,r}(x'_2)$ where $\lambda_{1, r},\lambda_{2, r}$ and $\phi_{1,r}, \phi_{2,r}$ are eigenvalues and eigenfunctions. Since these expansions are absolutely convergent, applying Fubini-Tonelli theorem, we can write $\tilde k_{12}(x,x')$ as follows:
	\begin{align*}
		\tilde k_{12}(x,x')=&\left(k_1(x_1,x'_1)-E_{z_1\sim F_1}k_1(x_1,z_1)-E_{z_1\sim F_1}k_1(x'_1,z_1)+E_{z_1,z'_1\sim F_1}k_1(z_1,z'_1)\right)\\
		&\left(k_2(x_2,x'_2)-E_{z_2\sim F_2}k_2(x_2,z_2)-E_{z_2\sim F_2}k_2(x'_2,z_2)+E_{z_2,z'_2\sim  F_2}k_2(z_2,z'_2)\right)\\		
		=&\left(\sum_{r=1}^\infty\lambda_{1,r}\left(\phi_{1,r}(x_1)\phi_{1,r}(x'_1)-\phi_{1,r}(x_1)E_{F_1}\phi_{1,r}(z_1)-\phi_{1,r}(x'_1)E_{F_1}\phi_{1,r}(z_1)+E^2_{F_1}\phi_{1,r}(z_1)\right)\right)\\
		&\left(\sum_{r=1}^\infty\lambda_{2,r}\left(\phi_{2,r}(x_2)\phi_{2,r}(x'_2)-\phi_{2,r}(x_2)E_{F_2}\phi_{2,r}(z_2)-\phi_{2,r}(x'_2)E_{F_2}\phi_{2,r}(z_2)+E^2_{F_2}\phi_{2,r}(z_2)\right)\right)\\
		=&\left(\sum_{r=1}^\infty\lambda_{1,r}\left(\phi_{1,r}(x_1)-E_{F_1}\phi_{1,r}(z_1)\right)\left(\phi_{1,r}(x'_1)-E_{F_1}\phi_{1,r}(z_1)\right)\right)\\
		&\left(\sum_{r=1}^\infty\lambda_{2,r}\left(\phi_{2,r}(x_2)-E_{F_2}\phi_{2,r}(z_2)\right)\left(\phi_{2,r}(x'_2)-E_{F_2}\phi_{2,r}(z_2)\right)\right)\\
		=&\left(\sum_{r=1}^\infty\lambda_{1,r}\tilde\phi_{1,r}(x_1)\tilde\phi_{1,r}(x'_1)\right)\left(\sum_{r=1}^\infty\lambda_{2,r}\tilde\phi_{2,r}(x_2)\tilde\phi_{2,r}(x'_2)\right)\\
		=& \sum_{i,j=1}^{\infty} \lambda_{1,i}\lambda_{2,j}  \tilde\phi_{1,i}(x_1) \tilde\phi_{1,i}(x'_1)\tilde\phi_{2,j}(x_2) \tilde\phi_{2,j}(x'_2), 
	\end{align*}
	where we define $\tilde\phi_{1,r}(x_1)=\phi_{1,r}(x_1)-E_{F_1}\phi_{1,r}(z_1)$ and $\tilde\phi_{2,r}(x'_2)=\phi_{2,r}(x'_2)-E_{F_2}\phi_{2,r}(z_2)$.
	
	Then the test statistic $T_{CI}$ is written as follows with $\tilde\phi_{1,r}$ and $\tilde\phi_{2,r}$:
	\begin{align*}
		T_{CI}&=\left\|E_{\hat F_{12}}\left[\varphi_1\otimes\varphi_2\right]-E_{\hat F_{1}\hat F_{2}}\left[\varphi_1\otimes\varphi_2\right]\right\|^2_{\mathcal{H}}\\
		&=\left\|E_{\hat F_{12}}\left[(\varphi_1-E_{F_1}\varphi_1)\otimes(\varphi_2-E_{F_2}\varphi_2)\right]-E_{\hat F_{2}\hat F_{2}}\left[(\varphi_1-E_{F_1}\varphi_1)\otimes(\varphi_2-E_{F_2}\varphi_2)\right]\right\|^2_{\mathcal{H}}\\
		&=E_{\hat{F}_{12},\hat{F}_{12}} \tilde k_{12}(x,x')
		-2E_{\hat{F}_{12},\hat{F}_1\hat F_2}\tilde k_{12}(x,x')+E_{\hat{F}_{1}\hat{F}_{2},\hat{F}_{1}\hat{F}_{2}} \tilde k_{12}(x,x')\\
		&=\sum_{i,j=1}^{\infty}\lambda_{1,i}\lambda_{2,j} (E_{\hat F_{12}, \hat F_{12}}\left[\tilde\phi_{1,i}(x_1) \tilde\phi_{1,i}(x'_1)\tilde\phi_{2,j}(x_2) \tilde\phi_{2,j}(x'_2)\right]-2E_{\hat{F}_{12},\hat{F}_1\hat F_2}\left[\cdots\right]\\
		&\quad+E_{\hat{F}_1\hat F_2,\hat{F}_1\hat F_2}\left[\cdots\right])\\			
		&=\sum_{i,j=1}^{\infty}\lambda_{1,i}\lambda_{2,j}\left(E_{\hat F_{12}}\left[\tilde\phi_{1,i}(x_1) \tilde\phi_{2,j}(x_2)\right]-E_{\hat F_1\hat F_2}\left[\tilde\phi_{1,i}(x_1) \tilde\phi_{2,j}(x_2)\right]\right)^2. 
	\end{align*}
	
	Now we consider the asymptotic distribution of $MT_{CI}$ under $H_0$.  $MT_{CI}$ can be written as
	\begin{align*}
		MT_{CI}&=\sum_{i,j=1}^{\infty}\lambda_{1,i}\lambda_{2,j}\left(\sqrt{M}\alpha^*E_{\hat U_{12}}\left[\tilde\phi_{1,i}(x_1) \tilde\phi_{2,j}(x_2)\right]+\sqrt{M}(1-\alpha^*)E_{\hat U'_{12}}\left[\cdots\right]\right.\\
		&-\left(\sqrt{M}\alpha^*E_{\hat U_{1}}\left[\tilde\phi_{1,i}(x_1)\right] +\sqrt{M}(1-\alpha^*)E_{\hat U'_{1}}\left[\cdots\right]\right) \\
		&\quad\left.\left(\alpha^*E_{\hat U_{2}}\left[\tilde\phi_{2,j}(x_2)\right]+(1-\alpha^*)E_{\hat U'_{2}}\left[\cdots\right]\right)\right)^2. 
	\end{align*}
	
	We denote $T^L_{CI}$ as the partial sum of $T_{CI}$ up to $L$-th eigenvalues and then
	\begin{align*}
		MT^L_{CI}&=\sum_{i,j=1}^{L} \lambda_{1,i}\lambda_{2,j}\left(
		\underset{(a)}{\underline{\sqrt\frac{M}{n}\sqrt{n}\alpha^*E_{\hat U_{12}}\left[\tilde\phi_{1,i}(x_1) \tilde\phi_{2,j}(x_2)\right]+\sqrt\frac{M}{n'}\sqrt{n'}(1-\alpha^*)E_{\hat U'_{12}}\left[\tilde\phi_{1,i}(x_1) \tilde\phi_{2,j}(x_2)\right]}}\right.\\
		&-\left(\underset{(b)}{\underline{\sqrt\frac{M}{n}\sqrt{n}\alpha^*E_{\hat U_{1}}\left[\tilde\phi_{1,i}(x_1)\right] +\sqrt\frac{M}{n'}\sqrt{n'}(1-\alpha^*)E_{\hat U'_{1}}\left[\tilde\phi_{1,i}(x_1)\right]}}\right)\\
		&\left.\left(\underset{(c)}{\underline{\alpha^*E_{\hat U_{2}}\left[\tilde\phi_{2,j}(x_2)\right]+(1-\alpha^*)E_{\hat U'_{2}}\left[\tilde\phi_{2,j}(x_2)\right]}}\right)\right)^2. 
	\end{align*}
	Under $H_0$, $(a)$ and $(b)$ are written as:
	\begin{align*}
		(a)&=\sqrt\frac{M}{n}\sqrt{n}\alpha^*\left(E_{\hat U_{12}}\left[\tilde\phi_{1,i}(x_1) \tilde\phi_{2,j}(x_2)\right]-E_{U_{12}}[\tilde\phi_{1,i}(x_1) \tilde\phi_{2,j}(x_2)]\right)\\
		&\quad+\sqrt\frac{M}{n'}\sqrt{n'}(1-\alpha^*)\left(E_{\hat U'_{12}}[\tilde\phi_{1,i}(x_1) \tilde\phi_{2,j}(x_2)]-E_{U'_{12}}[\tilde\phi_{1,i}(x_1) \tilde\phi_{2,j}(x_2)]\right),\\
		(b)&=\sqrt\frac{M}{n}\sqrt{n}\alpha^*\left(E_{\hat U_{1}}\left[\tilde\phi_{1,i}(x_1)\right]-E_{U_{1}}[\tilde\phi_{1,i}(x_1)]\right)+\sqrt\frac{M}{n'}\sqrt{n'}(1-\alpha^*)\left(E_{\hat U'_{1}}[\tilde\phi_{1,i}(x_1)]-E_{U'_{1}}[\tilde\phi_{1,i}(x_1)]\right). 
	\end{align*}
	
	As $M\rightarrow\infty$, $(a)$ and $(b)$ converge to normal distributions from the CLT, while $(c)\xrightarrow{p}0$ from the LLN. Combining these results, it follows that
	$$MT_{CI}^L\xrightarrow{d}\sum_{i,j=1}^{L} \lambda_{1,i}\lambda_{2,j}\xi^2_{i,j},$$
	where $(\xi_{1,1}\dots\xi_{L,L})$ follows a multivariate normal distribution with a mean $\mathbf{0}$ and following covariances: $\forall i,j,i',j'\in[L]$,
	\begin{align*}
		Cov[\xi_{i,j},\xi_{i',j'}]&=\nu{\alpha^*}^2Cov_{U_{12}}[\tilde\phi_{1,i}(x_1)\tilde\phi_{2,j}(x_2),\tilde\phi_{1,i'}(x_1)\tilde\phi_{2,j'}(x_2)]\\
		&+\nu'(1-\alpha^*)^2Cov_{U'_{12}}[\tilde\phi_{1,i}(x'_1)\tilde\phi_{2,j}(x'_2),\tilde\phi_{1,i'}(x'_1)\tilde\phi_{2,j'}(x'_2)]. 
	\end{align*}
	

	We now derived the asymptotic distribution of $MT^L_{CI}$. To derive the asymptotic distribution of $MT_{CI}$, we follow a similar procedure to the section 5.5.2 of \citet{serfling81app} . Then, we can show that the expectation of difference between $MT_{CI}$ and $MT^L_{CI}$ vanishes:
	\begin{align*}
		E\left[M|T_{CI}-T^L_{CI}|\right]&=E\left[M\sum^\infty_{i,j>L}\lambda_{1,i}\lambda_{2,j}\left(E_{\hat F_{12}}\left[\tilde\phi_{1,i}(x_1) \tilde\phi_{2,j}(x_2)\right]-E_{\hat F_1\hat F_2}\left[\tilde\phi_{1,i}(x_1) \tilde\phi_{2,j}(x_2)\right]\right)^2\right]\rightarrow0,\\
	\end{align*}
	as $L\rightarrow\infty$. 
	
	Next, we denote the limiting variable of $MT^L_{CI}$ as $W_{\lim}^L:=\sum_{i,j=1}^{L} \lambda_{1,i}\lambda_{2,j}\xi^2_{i,j},$
	and define $W_{\lim}:=\lim_{L\rightarrow\infty}W_{\lim}^L$. Then, we can show 
	\begin{align*}
		E\left[|W_{\lim}^L-W_{\lim}|\right]=E\left[\sum_{i,j>L}^{\infty}\lambda_{1,i}\lambda_{2,j}\xi^2_{i,j}\right]=\sum_{i,j>L}^{\infty}\lambda_{1,i}\lambda_{2,j}E\left[\xi^2_{i,j}\right]\rightarrow 0
	\end{align*}
	as $L\rightarrow\infty$.
	These results allow us to prove the pointwise convergence of the characteristic functions. Specifically, for any $t$ and $\epsilon>0$, and for all sufficiently large $M$ and $L$, 
	\begin{align*}
		\bigl| E\left[e^{itMT_{CI}}\right]&-E\left[e^{itW_{\lim}}\right]\bigr|\\
		=&\left|\left(E\left[e^{itMT_{CI}}\right]-E\left[e^{itMT^L_{CI}}\right]\right)+\left(E\left[e^{itMT^L_{CI}}\right]-E\left[e^{itW^L_{\lim}}\right]\right)+\left(E\left[e^{itW^L_{\lim}}\right]-E\left[e^{itW_{\lim}}\right]\right)\right|\\
		\leq&\left|E\left[e^{itMT_{CI}}\right]-E\left[e^{itMT^L_{CI}}\right]\right|+\left|E\left[e^{itMT^L_{CI}}\right]-E\left[e^{itW^L_{\lim}}\right]\right|+\left|E\left[e^{itW^L_{\lim}}\right]-E\left[e^{itW_{\lim}}\right]\right]\\
		 \leq& |t| E\left[M\left|T_{C I} - T_{C I}^L\right|\right] + \left|E\left[e^{i t M T_{C I}^L}\right]-E\left[e^{i t W_{\mathrm{lim}}^L}\right]\right| + |t| E\left[\left|W_{\mathrm{lim}}^L - W_{\mathrm{lim}}\right|\right] \leq \epsilon,
	\end{align*}
	where we used the inequality $\left|e^{i z}-1\right| \leq|z|$. Thus,  the asymptotic distribution of $MT_{CI}$ is:
	$$MT_{CI}\xrightarrow{d}W_{\lim}=\sum_{i,j=1}^{\infty}\lambda_{1,i}\lambda_{2,j}\xi^2_{i,j},$$	
	where  $\xi_{i,j}$'s follow the multivariate normal distribution defined above.
	
	We next consider the asymptotic behavior of $MT_{CI}$ under $H_1$. In this case, from Theorem 2 in \citet{gretton15simpler}, the population version of $T_{CI}$ equals some positive value $c$. Since $T_{CI}$ is a two-sample V-statistic and a consistent estimator \citep{huang23weighted}, $T_{CI}\xrightarrow{p}c$, as $M\rightarrow\infty$. 
	Therefore, $MT_{CI}\xrightarrow{p}\infty$, as $M\rightarrow\infty$.
\end{proof}

\begin{proof}[Proof of Theorem \ref{thm:tci_mv_w/alpha}]
	In this proof, we use the property of V-statistics. $T_{CI}$ is a two-sample V-statistic \citep{Vaart98asymp} since it can be written as follows.
	\begin{align*}
		T_{CI}=&\frac{1}{n^6n'^6}\sum^{n}_{i_1,...,i_6=1}\sum^{n'}_{q_1,...,q_6=1}h_{i_1,...,i_6,q_1,...,q_6}
	\end{align*}
	where $h_{i_1,...,i_6,q_1,...,q_6}$ is a symmetric function such that
	\begin{align*}
		h_{i_1,...,i_6,q_1,...,q_6}:=&\frac{1}{6!6!}\sum^{(i_1,..,i_6)}_{(j_1,...,j_6)}\sum^{(q_1,..,q_6)}_{(r_1,...,r_6)}\left<\varphi_{j_1,...,j_3,r_1,...,r_3}, \varphi_{j_4,...,j_6,r_4,...,r_6}\right>		
	\end{align*}
	
	and 
	\begin{align*}
		\varphi&_{j_1,...,j_3,r_1,...,r_3}:=\\
		&\alpha^*\tilde\varphi_{12}(x^{(j_1)})+(1-\alpha^*)\tilde\varphi_{12}(x'^{(r_1)})-(\alpha^*\tilde\varphi_{1}(x^{(j_2)}_1)+(1-\alpha^*)\tilde\varphi_{1}(x'^{(r_2)}_1))\otimes(\alpha^*\tilde\varphi_{2}(x^{(j_3)}_2)+(1-\alpha^*)\tilde\varphi_{2}(x'^{(r_3)}_2)).
	\end{align*}
	
	Here, the summation $\sum_{\left(j_1,... ,j_6\right)}^{\left(i_1,... ,i_6\right)}$ is taken over all ordered $(j_1,...,j_6)$ drawn without replacement from $(i_1,...,i_6)$.
	
	This is a degenerate V-statistic, which means 
	$$E_{i_2,...,i_6,q_1,...,q_6}[h_{i_1,...,i_6,q_1,...,q_6}]=E_{i_1,...,i_6,q_2,...,q_6}[h_{i_1,...,i_6,q_1,...,q_6}]=0$$
	where we take the expectations by samples of each index in the subscripts. 
	
	Next, we define a related statistic, $\check T_{CI}:=E_{x,x'\sim\hat F_{12}}[\tilde k_{12}(x,x')]$. We prove the limit mean and variance of $MT_{CI}$ and $M\check T_{CI}$ are the same, and derive the mean and variance of $M\check T_{CI}$.
	 
	$\check T_{CI}$ is a two-sample V-statistics and written as 
	\begin{align*}
		\check T_{CI}=&\frac{1}{n^2n'^2}\sum^{n}_{i_1,i_2=1}\sum^{n'}_{q_1,q_2=1}\check h_{i_1,i_2,q_1,q_2}
	\end{align*}
	where $\check h_{i_1,i_2,q_1,q_2}$ is a symmetric function such that
	\begin{align*}
		\check h_{i_1,i_2,q_1,q_2}:=&\frac{1}{2!2!}\sum^{(i_1,i_2)}_{(j_1,j_2)}\sum^{(q_1,q_2)}_{(r_1,r_2)}\langle\check\varphi_{j_1,r_1}, \check\varphi_{j_2,r_2}\rangle
	\end{align*}
	and $\check\varphi_{j_1,r_1}:=\alpha^*\tilde\varphi_{12}(x^{(j_1)})+(1-\alpha^*)\tilde\varphi_{12}(x'^{(r_1)})$. $\check T_{CI}$ is also degenerate since $E_{i_2,q_1,q_2}[\check h_{i_1,i_2,q_1,q_2}]=E_{i_1,i_2,q_2}[\check h_{i_1,i_2,q_1,q_2}]=0$.
	
	Furthermore, we consider the difference $T_{CI}-\check T_{CI}$, which itself is a V-statistic:
	\begin{align*}
		T_{CI}-\check T_{CI}=&\left<2E_{\hat F_{12}}[\tilde\varphi_{12}(x)]-E_{\hat F_{1}}[\tilde\varphi_{1}(x_1)]\otimes E_{\hat F_{2}}[\tilde\varphi_{2}(x_2)],-E_{\hat F_{1}}[\tilde\varphi_{1}(x_1)]\otimes E_{\hat F_{2}}[\tilde\varphi_{2}(x_2)]\right>\\
		=&\frac{1}{n^5n'^5}\sum^{n}_{i_1,...,i_5=1}\sum^{n'}_{q_1,...,q_5=1}\bar h_{i_1,...,i_5,q_1,...,q_5}
	\end{align*}
	
	where $\bar h_{i_1,...,i_5,q_1,...,q_5}$ is a symmetric function such that
	
	$$\bar h_{i_1,...,i_5,q_1,...,q_5}:=\frac{1}{5!5!}\sum^{(i_1,..,i_5)}_{(j_1,...,j_5)}\sum^{(q_1,..,q_5)}_{(r_1,...,r_5)}\left<\bar\varphi_{j_1,...,j_3,r_1,...,r_3}, \bar\varphi_{j_4,j_5,r_4,r_5}\right>$$
	
	and we define
	\begin{align*}
		\bar\varphi&_{j_1,...,j_3,r_1,...,r_3}:=\\
		&2\alpha^*\tilde\varphi_{12}(x^{(j_1)})+2(1-\alpha^*)\tilde\varphi_{12}(x'^{(r_1)})-(\alpha^*\tilde\varphi_{1}(x_1^{(j_2)})+(1-\alpha^*)\tilde\varphi_{1}(x'^{(r_2)}_1))\otimes(\alpha^*\tilde\varphi_{2}(x^{(j_3)}_2)+(1-\alpha^*)\tilde\varphi_{2}(x'^{(r_3)}_2))
	\end{align*}
	and 	
	$$\bar\varphi_{j_4,j_5,r_4,r_5}:=-(\alpha^*\tilde\varphi_{1}(x_1^{(j_4)})+(1-\alpha^*)\tilde\varphi_{1}(x'^{(r_4)}_1))\otimes(\alpha^*\tilde\varphi_{2}(x^{(j_5)}_2)+(1-\alpha^*)\tilde\varphi_{2}(x'^{(r_5)}_2)).$$

	We next analyze the expectation $E[T_{CI}-\check T_{CI}]$. Since $E[\left<\bar\varphi_{j_1,...,j_3,r_1,...,r_3}, \bar\varphi_{j_4,j_5,r_4,r_5}\right>]=0$ unless at least two pairs of indices in $j_1,...,j_5,r_1,...,r_5$ are equivalent, 
	
	$$E[T_{CI}-\check T_{CI}]=\frac{1}{n^5n'^5}O(M^8).$$
	
	Thus,  $ME[T_{CI}-\check T_{CI}]\rightarrow0$ as $M\rightarrow\infty$. Since the expectations of $MT_{CI}$ and $M\check T_{CI}$ are asymptotically equivalent, we now focus on analyzing $E[\check{T}_{CI}]$, which is easier to obtain. 
	
	$E[\check T_{CI}]$ is derived by a similar approach to the estimation of $E[HSIC_b]$ in \citet{gretton07kernal}. First, the V-statistic $\check T_{CI}$ is expanded as 
	$$\check T_{CI}=\frac{{\alpha^*}^2}{n^2}\sum^n_{i_1,i_2=1}\tilde k_{12}(x^{(i_1)},x^{(i_2)})+\frac{(1-\alpha^*)^2}{n'^2}\sum^{n'}_{q_1,q_2=1}\tilde k_{12}(x'^{(q_1)},x'^{(q_2)})+2\frac{\alpha^*(1-\alpha^*)}{nn'}\sum^n_{i_1=1}\sum^{n'}_{q_1=1}\tilde k_{12}(x^{(i_1)},x'^{(q_1)}).$$
	
	Next, we define the corresponding U-statistic for $\check T_{CI}$ as $\check T_{CI,U}$:
	$$\check T_{CI,U}=\frac{{\alpha^*}^2}{(n)_2}\sum_{i_1\neq i_2}\tilde k_{12}(x^{(i_1)},x^{(i_2)})+\frac{(1-\alpha^*)^2}{(n')_2}\sum_{q_1\neq q_2}\tilde k_{12}(x'^{(q_1)},x'^{(q_2)})+2\frac{\alpha^*(1-\alpha^*)}{nn'}\sum^n_{i_1=1}\sum^{n'}_{q_1=1}\tilde k_{12}(x^{(i_1)},x'^{(q_1)}), $$
	where $(n)_m:=\frac{n!}{(n-m)!}$. Note that the U-statistic, $\check T_{CI,U}$ is an unbiased estimator of its population mean, and $E[\check T_{CI,U}]=0$.
	
	The difference between the V-statistic and the U-statistic is given by:
	\begin{align*}
		\check T_{CI}-\check T_{CI,U}&=\frac{{\alpha^*}^2}{n^2}\sum^n_{i_1=1}\tilde k_{12}(x^{(i_1)},x^{(i_1)})-\frac{{\alpha^*}^2}{n(n)_2}\sum_{i_1\neq i_2}\tilde k_{12}(x^{(i_1)},x^{(i_2)})\\
		&+\frac{(1-\alpha^*)^2}{{n'}^2}\sum^{n'}_{q_1=1}\tilde k_{12}(x'^{(q_1)},x'^{(q_1)})-\frac{(1-\alpha^*)^2}{n'(n')_2}\sum_{q_1\neq q_2}\tilde k_{12}(x'^{(q_1)},x'^{(q_2)}). 
	\end{align*}
	
	Since $E[\check T_{CI,U}]=0$, taking the expectation of the equation above yields the desired result:
	\begin{align*}
		E[\check{T}_{CI}]&=E[\check{T}_{CI}-\check{T}_{CI,U}]\\
		&=\frac{{\alpha^*}^2}{n}E_{x^{(i_1)},x^{(i_2)}\sim U_{12}}[\tilde k_{12}(x^{(i_1)},x^{(i_1)})-\tilde 	k_{12}(x^{(i_1)},x^{(i_2)})]\\&+\frac{(1-\alpha^*)^2}{n'}E_{x'^{(q_1)},x'^{(q_2)}\sim U'_{12}}[\tilde k_{12}(x'^{(q_1)},x'^{(q_1)})-\tilde k_{12}(x'^{(q_1)},x'^{(q_2)})],
	\end{align*}
	from which the limit of $E[MT_{CI}]$ is obtained.
	
	
	Next, we derive $V[T_{CI}]$. Using the expression of the V-statistic, we have:
	$$V[T_{CI}]=\frac{1}{n^{12}n'^{12}}\sum^n_{i_1,...,i_6,i'_1,...,i'_6=1}\sum^{n'}_{q_1,...,q_6,q'_1,...,q'_6=1}Cov[h_{i_1,...,i_6,q_1,...,q_6},h_{i'_1,...,i'_6,q'_1,...,q'_6}].$$
	
	Recall that $T_{CI}$ is a degenerate V-statistic and $E_{i_2,...,i_6,q_1,...,q_6}[h_{i_1,...,i_6,q_1,...,q_6}]=E_{i_1,...,i_6,q_2,...,q_6}[h_{i_1,...,i_6,q_1,...,q_6}]=0$. Thus, in order for $Cov[h_{i_1,...,i_6,q_1,...,q_6},h_{i'_1,...,i'_6,q'_1,...,q'_6}]$ to be nonzero, at least two pairs of indices must be identical between the sets $\{i_1,...,i_6,q_1,...,q_6\}$ and $\{i'_1,...,i'_6,q'_1,...,q'_6\}$. Using this combinatorial constraint, we can identify the leading order terms as $M \rightarrow \infty$.

	We restrict our focus to combinations where exactly two indices overlap, as sharing more variables reduces the free choices from the sample, making those terms asymptotically negligible. Specifically, there are three cases for sharing exactly two variables: (1) two shared $i$ indices and zero shared $q$ indices, (2) zero shared $i$ indices and two shared $q$ indices, and (3) one shared $i$ index and one shared $q$ index.

	In case (1), for the $i$ indices, we choose 6 distinct indices for the first $h$ function ($(n)_{6}$ ways), select 2 of these to share ($\mathrm{C}^{6}_{2}$ ways), arrange them in the second $h$ function ($6\times5$ ways), and fill its remaining 4 slots with unselected indices ($(n-6)_{4}$ ways). This yields a total of $\mathrm{C}^{6}_{2}\cdot6\cdot5(n)_{10}(n')_{12}$ combinations. By symmetry, case (2) yields $\mathrm{C}^{6}_{2}\cdot6\cdot5(n)_{12}(n')_{10}$ combinations. For case (3), sharing one $i$ index and one $q$ index yields $(n)_{6}\times6\times6\times(n-6)_{5}\times(n')_{6}\times6\times6\times(n'-6)_{5}=6^{2}\cdot6^{2}(n)_{11}(n')_{11}$ combinations. By considering only these leading order terms, we obtain:
	\begin{align*}
		V[ T_{CI}]&=\frac{1}{n^{12}n'^{12}}(\mathrm{C}^6_2\cdot6\cdot5(n)_{10}(n')_{12}Cov[h_{i_1,...,i_6,q_1,...,q_6},h_{i_1,i_2,i'_3,...,i'_6,q'_1,...,q'_6}]\\
		&+\mathrm{C}^6_2\cdot6\cdot5(n)_{12}(n')_{10}Cov[h_{i_1,...,i_6,q_1,...,q_6},h_{i'_1,...,i'_6,q_1,q_2,q'_3,...,q'_6}]\\
		&+6^2\cdot6^2(n)_{11}(n')_{11}Cov[h_{i_1,...,i_6,q_1,...,q_6},h_{i_1,i'_2,...,i'_6,q_1,q'_2,...,q'_6}]+O(M^{21}))\\
		&=\frac{1}{n^{12}n'^{12}}(\mathrm{C}^6_2\cdot6\cdot5(n)_{10}(n')_{12}E_{i_1,i_2}[(E_{i_3,...,i_6,q_1,...,q_6}[h_{i_1,...,i_6,q_1,...,q_6}])^2]\\
		&+\mathrm{C}^6_2\cdot6\cdot5(n)_{12}(n')_{10}E_{q_1,q_2}[(E_{i_1,...,i_6,q_3,...,q_6}[h_{i_1,...,i_6,q_1,...,q_6}])^2]\\
		&+6^2\cdot6^2(n)_{11}(n')_{11}E_{i_1,q_1}[(E_{i_2,...,i_6,q_2,...,q_6}[h_{i_1,...,i_6,q_1,...,q_6}])^2]+O(M^{21})).
	\end{align*}
	
	We simplify the each term in the above equation. Recall that $h_{i_1,...,i_6,q_1,...,q_6}$ is the sum of inner products $\left<\varphi_{j_1,...,j_3,r_1,...,r_3}, \varphi_{j_4,...,j_6,r_4,...,r_6}\right>$. Since the feature map $\varphi_{j_1,...,j_3,r_1,...,r_3}$ are centered, the expectation vanishes if all sample indices are distinct. Taking this into account, we obtain
	\begin{align*}
		E_{i_1,i_2}[(E_{i_3,...,i_6,q_1,...,q_6}[h_{i_1,...,i_6,q_1,...,q_6}])^2]&=(\frac{1}{6!6!}\cdot2\cdot4!6!)^2E_{i_1,i_2}[(E_{q_1,q_2}[\langle\check\varphi_{i_1,q_1},\check\varphi_{i_2,q_2}\rangle])^2]\\
		E_{q_1,q_2}[(E_{i_1,...,i_6,q_3,...,q_6}[h_{i_1,...,i_6,q_1,...,q_6}])^2]&=(\frac{1}{6!6!}\cdot2\cdot4!6!)^2E_{q_1,q_2}[(E_{i_1,i_2}[\langle\check\varphi_{i_1,q_1},\check\varphi_{i_2,q_2}\rangle])^2]\\
		E_{i_1,q_1}[(E_{i_2,...,i_6,q_2,...,q_6}[h_{i_1,...,i_6,q_1,...,q_6}])^2]&=(\frac{1}{6!6!}\cdot2\cdot5!5!)^2E_{i_1,q_2}[(E_{i_2,q_1}[\langle\check\varphi_{i_1,q_1},\check\varphi_{i_2,q_2}\rangle])^2].
	\end{align*}

	Substituting these expressions into the formula for $V[T_{CI}]$ and taking the limit as $M\rightarrow\infty$, we derive the desired asymptotic variance:
	\begin{align*}
		V[MT_{CI}]=M^2 V[T_{CI}]\rightarrow& 2\nu^2E_{i_1,i_2}[(E_{q_1,q_2}[\langle\check\varphi_{i_1,q_1},\check\varphi_{i_2,q_2}\rangle])^2]+2\nu'^2E_{q_1,q_2}[(E_{i_1,i_2}[\langle\check\varphi_{i_1,q_1},\check\varphi_{i_2,q_2}\rangle])^2]\\
		&+4\nu\nu'E_{i_1,q_2}[(E_{i_2,q_1}[\langle\check\varphi_{i_1,q_1},\check\varphi_{i_2,q_2}\rangle])^2].
	\end{align*}
\end{proof}

\subsection{Proofs for the MCI test}\label{appsubsec:proof_mci_test}
\begin{proof}[Proof of Theorem \ref{thm:tmci_asyd}]  
	We begin by defining the centered kernels $\tilde k_{1S}(x_{1S},x'_{1S})=\langle 
	\varphi_1(x_1)-\mu_{X_1 \mid X_S}(x_S)
	,\varphi_1(x'_1)-\mu_{X_1 \mid X_S}(x'_S)
	\rangle$ and $\tilde k_{2S}(x_{2S},x'_{2S})=\langle 
	\varphi_2(x_2)-\mu_{X_2 \mid X_S}(x_S)
	,\varphi_2(x'_2)-\mu_{X_2 \mid X_S}(x'_S)
	\rangle$.
	By Mercer's theorem, these kernels can be expanded 
	\begin{align*}
		\tilde k_{1S}(x_{1S},x'_{1S})&=\sum^\infty_{r=1}\lambda_{1,r}(\phi_{1,r}(x_1)-E_F[\phi_{1,r}(x_1)|x_S])(\phi_{1,r}(x'_1)-E_F[\phi_{1,r}(x_1)|x'_S]) \\
		&=\sum^\infty_{r=1}\lambda_{1,r}\tilde\phi_{1,r}(x_{1S})\tilde\phi_{1,r}(x'_{1S}),\\
		\tilde k_{2S}(x_{2S},x'_{2S})&=\sum^\infty_{r=1}\lambda_{2,r}(\phi_{2,r}(x_2)-E_F[\phi_{2,r}(x_2)|x_S])(\phi_{2,r}(x'_2)-E_F[\phi_{2,r}(x_2)|x'_S]) \\
		&=\sum^\infty_{r=1}\lambda_{2,r}\tilde\phi_{2,r}(x_{2S})\tilde\phi_{2,r}(x'_{2S}), \\
		k_S(x_{S},x'_{S})&=\sum^\infty_{r=1}\lambda_{S,r}\phi_{S,r}(x_S)\phi_{S,r}(x'_S),
	\end{align*}
	where $\lambda_{S,r}$ and $\phi_{S,r}$ are eigenvalues and eigenfunctions of the operator associated to $k_S$. We also define centered eigenfunctions $\tilde\phi_{1,r}(x_{1S}):=\phi_{1,r}(x_1)-E_F[\phi_{1,r}(x_1)|x_S]$ and $\tilde\phi_{2,r}(x_{2S}):=\phi_{2,r}(x_2)-E_F[\phi_{2,r}(x_2)|x_S]$ in this proof.
	
	Then $MT_{MCI}$ can be expressed as 
	\begin{align*}
		MT_{MCI}&=ME_{\hat F_{12S},\hat F_{12S}}[\tilde k_{1S}(x_{1S},x'_{1S})k_S(x_{S},x'_{S})\tilde k_{2S}(x_{2S},x'_{2S})]\\
		&=M\sum^\infty_{i,j,q=1}\lambda_{1,i}\lambda_{2,j}\lambda_{S,q}E^2_{\hat F_{12S}}[\tilde\phi_{1,r}(x_{1S})\phi_{S,r}(x_S)\tilde\phi_{2,r}(x_{2S})]\\
		&=M\sum^\infty_{i,j,q=1}\lambda_{1,i}\lambda_{2,j}\lambda_{S,q}(\alpha^*E_{\hat U_{12S}}[\tilde\phi_{1,r}(x_{1S})\phi_{S,r}(x_S)\tilde\phi_{2,r}(x_{2S})]-\alpha^*E_{ U_{12S}}[\cdots]\\
		&+(1-\alpha^*)E_{\hat U'_{12S}}[\tilde\phi_{1,r}(x_{1S})\phi_{S,r}(x_S)\tilde\phi_{2,r}(x_{2S})]-(1-\alpha^*)E_{U'_{12S}}[\cdots])^2\\
		&=\sum^\infty_{i,j,q=1}\lambda_{1,i}\lambda_{2,j}\lambda_{S,q}\big(\sqrt{\frac{M}{n}}\sqrt{n}\alpha^*(E_{\hat U_{12S}}[\tilde\phi_{1,r}(x_{1S})\phi_{S,r}(x_S)\tilde\phi_{2,r}(x_{2S})]-E_{ U_{12S}}[\cdots])\\
		&+\sqrt{\frac{M}{n'}}\sqrt{n'}(1-\alpha^*)(E_{\hat U'_{12S}}[\tilde\phi_{1,r}(x_{1S})\phi_{S,r}(x_S)\tilde\phi_{2,r}(x_{2S})]-E_{U'_{12S}}[\cdots])\big)^2. 
	\end{align*}

	Following a similar procedure to the proof of Theorem \ref{thm:tci_asyd}, we can show the distributional convergence:
	\begin{align*}
		MT_{CI}\xrightarrow{d}\sum^\infty_{i,j,q=1}\lambda_{1,i}\lambda_{2,j}\lambda_{S,q}\xi^2_{ijq}, 
	\end{align*}
	where $(\dots,\xi_{ijq},\dots)$ follows a multivariate normal distribution of mean $\mathbf{0}$ and covariances 
	\begin{align*}
		Cov[\xi_{ijq},\xi_{i'j'q'}]&=\nu{\alpha^*}^2Cov_{U_{12S}}[\tilde\phi_{1,i}(x_{1S})\phi_{S,j}(x_S)\tilde\phi_{2,q}(x_{2S}),\tilde\phi_{1,i'}(x_{1S})\phi_{S,j'}(x_S)\tilde\phi_{2,q'}(x_{2S})]\\
		&+\nu'(1-\alpha^*)^2Cov_{U'_{12S}}[\tilde\phi_{1,i}(x_{1S})\phi_{S,j}(x_S)\tilde\phi_{2,q}(x_{2S}),\tilde\phi_{1,i'}(x_{1S})\phi_{S,j'}(x_S)\tilde\phi_{2,q'}(x_{2S})]. 
	\end{align*}
	
	Finally, we consider the behavior of $MT_{MCI}$ under $H_1$. In this case, according to Theorem 3 in \citet{fukumizu07ci},  $T_{MCI}$ converges to a positive value. Therefore, $MT_{MCI}\xrightarrow{p}\infty$, as $M\rightarrow\infty$.
\end{proof}

\begin{proof}[Proof of Theorem \ref{thm:tmci_mv_w/alpha}]
	The statistic $T_{MCI}$ can be written as 
	
	$$T_{MCI}=\frac{{\alpha^*}^2}{n^2}\sum^n_{i_1,i_2=1}\tilde k_{12S}(x^{(i_1)},x^{(i_2)})+\frac{(1-\alpha^*)^2}{n'^2}\sum^{n'}_{q_1,q_2=1}\tilde k_{12S}(x'^{(q_1)},x'^{(q_2)})+2\frac{\alpha^*(1-\alpha^*)}{nn'}\sum^n_{i_1=1}\sum^{n'}_{q_1=1}\tilde k_{12S}(x^{(i_1)},x'^{(q_1)}), $$
	where $\tilde k_{12S}$ is a kernel associated with a feature map $\tilde\varphi_{12S}(x):= (\varphi_1(x_1) - \mu_{x_1 \mid x_S}(x_S)) \otimes \varphi_S(x_S) \otimes (\varphi_2(x_2) - \mu_{x_2 \mid x_S}(x_S))$.

	This expression corresponds to a two-sample V-statistic, which can be rewritten as 
	\begin{align*}
		T_{MCI}=\frac{1}{n^2n'^2}\sum^n_{i_1,i_2=1}\sum^{n'}_{q_1,q_2=1}f_{i_1,i_2,q_1,q_2},
	\end{align*}
	
	where we define a symmetric function
	$$f_{i_1,i_2,q_1,q_2}:=\frac{1}{4}\sum^{(i_1,i_2)}_{(j_1,j_2)}\sum^{(q_1,q_2)}_{(r_1,r_2)}\langle\alpha^*\tilde\varphi_{12S}(x^{(j_1)})+(1-\alpha^*)\tilde\varphi_{12S}(x'^{(r_1)}),\alpha^*\tilde\varphi_{12S}(x^{(j_2)})+(1-\alpha^*)\tilde\varphi_{12S}(x'^{(r_2)})\rangle.$$
	
	Then, similarly to the proof of Theorem \ref{thm:tci_mv_w/alpha}, we can derive the desired limits of $E[MT_{MCI}]$ and $V[MT_{MCI}]$.
\end{proof}

\section{WEAKLY-SUPERVISED KERNEL CI AND MCI TEST WITHOUT TRUE MIXTURE PROPORTIONS}\label{appsec:test_wo_prop}
\subsection{Proofs for the tests without true mixture proportions}\label{appsubsec:proof_test_wo_prop}
\begin{proof}[Proof of Lemma \ref{lem:asymp_exp}]
	By the Taylor expansion of $T_{\hat\alpha}$ around $\alpha^*$, we derive
	\begin{align}\label{eq:taylorexp}
		MT_{\hat\alpha}&=MT_{\alpha^*}+M(\hat\alpha-\alpha^*)T'_{\alpha^*}+M\frac{1}{2}(\hat\alpha-\alpha^*)^2T''_{\alpha^*}+o_p(1)
	\end{align}
	
	The remainder term is $o_p(1)$ because $\sqrt{M}(\hat\alpha-\alpha^*)$ converges in distribution to a normal random variable by Theorem \ref{thm:cimpe_an} and \ref{thm:mcimpe_an}, which ensures that higher-order terms in the expansion vanish in probability.
\end{proof}

\begin{proof}[Proof of Theorem \ref{thm:test_consistency_w/oalpha}]
	Under Assumption \ref{asmp:testconswomp} and $H_1$, $T_{\hat\alpha}$ converges to the population test statistics of $T_{\alpha_1}$ as $M\rightarrow\infty$. Since $F_{12}^{\alpha_1}$ (resp. $F_{12S}^{\alpha_1}$) does not satisfy Conditional Independence for the CI test (resp. Multivariate CI for the MCI test), with assumptions in Theorem \ref{thm:tci_asyd} (resp. Theorem \ref{thm:tmci_asyd}), we can show that the population $T_{\alpha_1}$ is a positive constant. Then, $MT_{\hat\alpha}\xrightarrow{p}\infty$. 
\end{proof}

\subsection{Mean and variance estimation for the tests without true mixture proportions}\label{appsubsec:mean_var_wo_prop}
In this subsection, we explain how to estimate the mean and variance for the tests without true mixture proportions. Our approach utilizes the result of Lemma \ref{lem:asymp_exp}. We begin by analyzing the asymptotic behavior of each term in Equation \ref{eq:taylorexp}.

\subsubsection{Asymptotic behaviors of each term in the Taylor expansion of $MT_{\hat\alpha}$}
By Theorem \ref{thm:tci_asyd} and \ref{thm:tmci_asyd}, $MT_{\alpha^*}$	converges in distribution to a sum of squared normal random variables. By Theorem \ref{thm:cimpe_an} and \ref{thm:mcimpe_an}, $\sqrt{M}(\hat\alpha-\alpha^*)$ converges to a normal distribution. The term $T''_{\alpha^*}$ converges to a constant in probability. The remaining term $\sqrt{M}T'_{\alpha^*}$ converges to a normal distribution, which is derived as follows.	

$T'_{\alpha^*}$ is a V-statistic. For the CI test, using the same notations as in the proof of Theorem \ref{thm:tci_mv_w/alpha}, it can be expressed as
$$T'_{\alpha^*}=\frac{1}{n^6n'^6}\sum^{n}_{i_1,...,i_6=1}\sum^{n'}_{q_1,...,q_6=1}h'_{i_1,...,i_6,q_1,...,q_6}$$
where
\begin{align*}
	&h'_{i_1, \ldots, i_6, q_1, \ldots, q_6}:=\frac{1}{6!6!} \sum_{\left(j_1, \ldots, j_6\right)}^{\left(i_1, \ldots, i_6\right)} \sum_{\left(r_1, \ldots, r_6\right)}^{\left(q_1, \ldots, q_6\right)}
	2\Bigl<\Bigl(\tilde{\varphi}_{12}(x^{(j_1)})- \tilde{\varphi}_{12}(x^{\prime(r_1)})-\left(\alpha^* \tilde{\varphi}_1(x_1^{(j_2)})+(1-\alpha^*) \tilde{\varphi}_1(x_2^{\prime(r_2)})\right)\\ &\otimes\left( \tilde{\varphi}_2(x_1^{\left(j_3\right)})- \tilde{\varphi}_2(x_2^{\prime\left(r_3\right)})\right)
	-\left(\tilde{\varphi}_1(x_1^{\left(j_2\right)})- \tilde{\varphi}_1(x_2^{\prime\left(r_2\right)})\right) \otimes\left(\alpha^* \tilde{\varphi}_2(x_1^{\left(j_3\right)})+\left(1-\alpha^*\right) \tilde{\varphi}_2(x_2^{\prime\left(r_3\right)})\right)\Bigr), \varphi_{j_4, \ldots, j_6, r_4, \ldots, r_6}\Bigr>.
\end{align*}

For the MCI test, using the same notations as in the proof of Theorem \ref{thm:tmci_mv_w/alpha},
$$ 
T'_{\alpha^*}=\frac{1}{n^2 n^{\prime 2}} \sum_{i_1, i_2=1}^n \sum_{q_1, q_2=1}^{n^{\prime}}f'_{i_1, i_2, q_1, q_2},
$$
where 
$$
f'_{i_1, i_2, q_1, q_2}:=\frac{1}{4} \sum_{\left(j_1, j_2\right)}^{\left(i_1, i_2\right)} \sum_{\left(r_1, r_2\right)}^{\left(q_1, q_2\right)} 2\left\langle\alpha^* \tilde{\varphi}_{12 S}(x^{(j_1)})+\left(1-\alpha^*\right) \tilde{\varphi}_{12 S}(x'^{(r_1)}), \tilde{\varphi}_{12 S}(x^{(j_2)})- \tilde{\varphi}_{12 S}(x'^{(r_2)})\right\rangle .
$$

In both the CI and MCI cases, $T'_{\alpha^*}$ is non-degenerate, since in general, $$E_{i_2,...,i_6,q_1,...,q_6}[h'_{i_1, \ldots, i_6, q_1, \ldots, q_6}]\neq0, E_{i_1,...,i_6,q_2,...,q_6}[h'_{i_1, \ldots, i_6, q_1, \ldots, q_6}]\neq0$$
and 
$$E_{i_2,q_1,q_2}[f'_{i_1, i_2, q_1, q_2}]\neq0, E_{i_1,i_2,q_1}[f'_{i_1, i_2, q_1,  q_2}]\neq0.$$

Since non-degenerate V-statistics has $\sqrt{M}$-asymptotic normality \citep{huang23weighted, serfling81app}, $\sqrt{M}T'_{\alpha^*}$ converges to a normal distribution. 

\subsubsection{Mean and Variance estimation of $T_{\hat\alpha}$ for the CI and MCI tests}
We consider the mean and variance estimation for the CI test without true mixture proportions in this subsection. A similar derivation process can be applied to the MCI test. As with $T_\alpha$, denote $\check T_{CI}$ as $\check T_\alpha$, specifying $\check
T_{CI}$ is a function of $\alpha$. Let $\check T'_\alpha$ and $\check T''_\alpha$ be the first and second order derivatives of $\check T_\alpha$ at $\alpha$. Note that $\check{T}_{\alpha^*}$, $\check{T}'_{\alpha^*}$ and $\check{T}''_{\alpha^*}$  are V-statistics and we denote their corresponging U-statistics by $\check T_{U,\alpha^*}$, $\check T'_{U,\alpha^*}$ and $\check T''_{U,\alpha^*}$. We assume $T''_{U,\alpha^*}$ converges to a constant $c_0$ in probability. To estimate the expectation and variance of the right hand side of Equation \ref{eq:taylorexp}, we simplify the equation by considering the asymptotic behavior of each term. 

First, we can show $M(T_{\alpha^*}-\check{T}_{\alpha^*})\xrightarrow{p}0$, $\sqrt{M}(T'_{\alpha^*}-\check{T}'_{\alpha^*})\xrightarrow{p}0$ and $T''_{\alpha^*}-\check{T}''_{\alpha^*}\xrightarrow{p}0$, following a similar analysis to that used to derive Theorem \ref{thm:tci_asyd}.  Given the asymptotic equivalence of U-statistics and V-statistics (Lemma S5. in the supplement of \citet{huang23weighted}), we obtain the following convergence results as $M\rightarrow\infty$,
\begin{align*}
	M(T_{\alpha^*}- \check T_{U,\alpha^*})&\xrightarrow{p}c_1,\\ 
	\sqrt{M}(T'_{\alpha^*}-\check T'_{U,\alpha^*})&\xrightarrow{p}0,\\ 
	T''_{\alpha^*}-\check T''_{U,\alpha^*}&\xrightarrow{p}0,
\end{align*}
where $c_1$ is a constant.

In addition, following the procedure in the proof of Theorem \ref{thm:cimpe_an},
$$\sqrt{M}\left((\hat\alpha-\alpha^*)-S_{\alpha^*}\right)\xrightarrow{p}0,$$

where  $S_{\alpha^*}:=-\frac{1}{d_0}(\alpha^*E_{\hat U_{12}}[\tilde g_{12}]+(1-\alpha^*)E_{\hat U'_{12}}[\tilde g_{12}])=\frac{1}{nn'}\sum^{n}_{i=1}\sum^{n'}_{q=1} l_{i,q}$ and $l_{i,q}:=-\frac{1}{d_0}(\alpha^*\tilde g_{12}(x^{(i)})+(1-\alpha^*)\tilde g_{12}(x'^{(q)}))$.

Combining these results, the Taylor expansion can be approximated by
\begin{align}\label{eq:approxbyustat}
	M\{T_{\alpha^*}+(\hat\alpha-\alpha^*)T'_{\alpha^*}+\frac{1}{2}(\hat\alpha-\alpha^*)^2T''_{\alpha^*}\}\simeq M\{\check T_{U,\alpha^*}+S_{\alpha^*}\check T'_{U,\alpha^*}+\frac{c_0}{2}S_{\alpha^*}^2\}+c_1
\end{align}

Therefore, to perform the hypothesis test, we estimate the mean and variance of the right hand side of (\ref{eq:approxbyustat}). Under mild conditions such as uniform integrability, the asymptotic means and variances of both sides are actually equivalent. Using the same notation as in the proof of Theorem \ref{thm:tci_mv_w/alpha}, the U-statistics $\check T_{U,\alpha^*}$ and $\check T'_{U,\alpha^*}$ are expressed as 
\begin{align*}
	\check T_{U,\alpha^*}=&\frac{1}{(n)_2(n')_2}\sum_{i_1\neq i_2}\sum_{q_1\neq q_2}\check h_{i_1,i_2,q_1,q_2}\\
	\check T'_{U,\alpha^*}=&\frac{1}{(n)_2(n')_2}\sum_{i_1\neq i_2}\sum_{q_1\neq q_2}\check h'_{i_1,i_2,q_1,q_2}
\end{align*}
where we define $\check h'_{i_1,i_2,q_1,q_2}$ as
$$\check h'_{i_1,i_2,q_1,q_2}:=\frac{1}{2!2!}\sum^{(i_1,i_2)}_{(j_1,j_2)}\sum^{(q_1,q_2)}_{(r_1,r_2)}2\langle\alpha^*\tilde\varphi_{12}(x^{(j_1)})+(1-\alpha^*)\tilde\varphi_{12}(x'^{(r_1)}),\tilde\varphi_{12}(x^{(j_2)})-\tilde\varphi_{12}(x'^{(r_2)})\rangle.$$

\textbf{Mean Estimation:}
We next consider estimating the mean of (\ref{eq:approxbyustat}), $M\{E[\check T_{U,\alpha^*}]+E[S_{\alpha^*}\check T'_{U,\alpha^*}]+\frac{c_0}{2}E[S_{\alpha^*}^2]\}+c_1$. We can derive the asymptotic mean of each term as follows, as $M\rightarrow\infty$,
\begin{align*}
	ME[\check T_{U,\alpha^*}]+c_1&=c_1=\nu{\alpha^*}^2E_{i_1,i_2}[\tilde k_{12}(x^{(i_1)},x^{(i_1)})-\tilde 	k_{12}(x^{(i_1)},x^{(i_2)})]\\
	&+\nu'(1-\alpha^*)^2E_{q_1,q_2}[\tilde k_{12}(x'^{(q_1)},x'^{(q_1)})-\tilde k_{12}(x'^{(q_1)},x'^{(q_2)})],\\	 
	ME[S_{\alpha^*}\check T'_{U,\alpha^*}]&\rightarrow-\frac{2}{d_0}\left(\nu\alpha^*E_{i_1}\left[\tilde g_{12}(x^{(i_1)})E_{i_2,q_1,q_2}[\check h'_{i_1,i_2,q_1,q_2}]\right]+ \nu'(1-\alpha^*)E_{q_1}\left[\tilde g_{12}(x'^{(q_1)})E_{i_1,i_2,q_2}[\check h'_{i_1,i_2,q_1,q_2}]\right]\right),\\
	ME[S^2_{\alpha^*}]&\rightarrow\frac{1}{d_0^2}\left(\nu{\alpha^*}^2V_i[\tilde g_{12}(x^{(i)})]+\nu'(1-\alpha^*)^2V_q[\tilde g_{12}(x'^{(q)})]\right).
\end{align*}

Here, the limit of $ME[\check T_{U,\alpha^*}]$ follows from Theorem \ref{thm:tci_mv_w/alpha}. The limit of $ME[S_{\alpha^*}\check T'_{U,\alpha^*}]$ is derived by considering only dominant terms of the multiplication $S_{\alpha^*}\check T'_{U,\alpha^*}$ and the fact that $E_{i_1,i_2,q_1,q_2}[h'_{i_1,i_2,q_1,q_2}]=0$. The limit of $ME[S^2_{\alpha^*}]$ is equivalent to the asymptotic variance of $\hat\alpha_{CI}$.

\textbf{Variance Estimation:} Next, we consider estimating the variance of (\ref{eq:approxbyustat}), which is written as 
$$M^2\{V[\check T_{U,\alpha^*}]+V[S_{\alpha^*}\check T'_{U,\alpha^*}]+\frac{c^2_0}{4}V[S_{\alpha^*}^2]+2(Cov[\check T_{U,\alpha^*},S_{\alpha^*}\check T'_{U,\alpha^*}]+\frac{c_0}{2}Cov[\check T_{U,\alpha^*},S_{\alpha^*}^2]+\frac{c_0}{2}Cov[ S_{\alpha^*}\check T'_{U,\alpha^*},S^2_{\alpha^*}])\}.$$

Similarly to the asymptotic mean calculation, we derive the limit of each term by considering only dominant terms whose expectations are nonzero. As $M\rightarrow\infty$, we have the following convergence
{\allowdisplaybreaks\begin{align*}
	M^2V\left[\check T_{U,\alpha^*}\right]&\rightarrow 2\nu^2E_{i_1,i_2}\left[E_{q_1,q_2}^2\left[\check h_{i_1,i_2,q_1,q_2}\right]\right]+2\nu'^2E_{q_1,q_2}\left[E_{i_1,i_2}^2\left[\check h_{i_1,i_2,q_1,q_2}\right]\right]\\
	&+16\nu\nu'E_{i_1,q_1}\left[E_{i_2,q_2}^2\left[\check h_{i_1,i_2,q_1,q_2}\right]\right]
	\left(=\lim_{M\rightarrow\infty} V\left[MT_{CI}\right]\right),\\
	M^2V\left[S_{\alpha^*}\check T'_{U,\alpha^*}\right]&=M^2E\left[\left(S_{\alpha^*}\check T'_{U,\alpha^*}\right)^2\right]-M^2\left(E\left[S_{\alpha^*}\check T'_{U,\alpha^*}\right]\right)^2\\
	&\rightarrow 4\nu^2E_{i_1}\left[E^2_{q_1}\left[l_{i_1,q_1}\right]\right]E_{i_1}\left[E^2_{i_2,q_1,q_2}\left[\check h'_{i_1,i_2,q_1,q_2}\right]\right]+4\nu\nu'E_{i_1}\left[E^2_{q_1}\left[l_{i_1,q_1}\right]\right]E_{q_1}\left[E^2_{i_1,i_2,q_2}\left[\check h'_{i_1,i_2,q_1,q_2}\right]\right]\\
	&+4\nu\nu'E_{q_1}\left[E^2_{i_1}\left[l_{i_1,q_1}\right]\right]E_{i_1}\left[E^2_{i_2,q_1,q_2}\left[\check h'_{i_1,i_2,q_1,q_2}\right]\right]+4\nu'^2E_{q_1}\left[E^2_i\left[l_{i_1,q_1}\right]\right]E_{q_1}\left[E^2_{i_1,i_2,q_2}\left[\check h'_{i_1,i_2,q_1,q_2}\right]\right]\\
	&+8\nu^2E^2_{i_1}\left[E_{q_1}\left[l_{i_1,q_1}\right]E_{i_2,q_1,q_2}\left[\check h'_{i_1,i_2,q_1,q_2}\right]\right]\\
	&+16\nu\nu'E_{i_1}\left[E_{q_1}\left[l_{i_1,q_1}\right]E_{i_2,q_1,q_2}\left[\check h'_{i_1,i_2,q_1,q_2}\right]\right]E_{q_1}\left[E_{i_1}\left[l_{i_1,q_1}\right]E_{i_1,i_2,q_2}\left[\check h'_{i_1,i_2,q_1,q_2}\right]\right]\\
	&+8\nu'^2E^2_{q_1}\left[E_i\left[l_{i_1,q_1}\right]E_{i_1,i_2,q_2}\left[\check h'_{i_1,i_2,q_1,q_2}\right]\right]-\lim_{M\rightarrow\infty}M^2E^2\left[S_{\alpha^*}\check T'_{U,\alpha^*}\right]\\
	&= 4\left(\nu E_{i_1}\left[E^2_{q_1}\left[l_{i_1,q_1}\right]\right]+\nu'E_{q_1}\left[E^2_{i_1}[l_{i_1,q_1}]\right]\right)\\
	&\left(\nu E_{i_1}[E^2_{i_2,q_1,q_2}[\check h'_{i_1,i_2,q_1,q_2}]]+\nu'E_{q_1}[E^2_{i_1,i_2,q_2}[\check h'_{i_1,i_2,q_1,q_2}]]\right)\\
	&+8\left(\nu E_{i_1}\left[E_{q_1}\left[l_{i_1,q_1}\right]E_{i_2,q_1,q_2}[\check h'_{i_1,i_2,q_1,q_2}]\right]+\nu'E_{q_1}\left[E_i\left[l_{i_1,q_1}\right]E_{i_1,i_2,q_2}[\check h'_{i_1,i_2,q_1,q_2}]\right]\right)^2\\
	&-\lim_{M\rightarrow\infty}M^2E^2\left[S_{\alpha^*}\check T'_{U,\alpha^*}\right],\\
	M^2V\left[S_{\alpha^*}^2\right]&=M^2E\left[S_{\alpha^*}^4\right]-M^2E^2\left[S_{\alpha^*}^2\right]\\
	&\rightarrow 3\nu^2E^2_{i_1}\left[E^2_{q_1}\left[l_{i_1,q_1}\right]\right]+6\nu\nu'E_{i_1}\left[E^2_{q_1}\left[l_{i_1,q_1}\right]\right]E_{q_1}\left[E^2_{i_1}\left[l_{i_1,q_1}\right]\right]+3\nu'^2E^2_{q_1}\left[E^2_{i_1}\left[l_{i_1,q_1}\right]\right]\\&-\lim_{M\rightarrow\infty}M^2E^2\left[S_{\alpha^*}^2\right]\\
	&=  3\left(\nu E_{i_1}\left[E^2_{q_1}\left[l_{i_1,q_1}\right]\right]+\nu'E_{q_1}\left[E^2_{i_1}\left[l_{i_1,q_1}\right]\right]\right)^2-\lim_{M\rightarrow\infty}M^2E^2\left[S_{\alpha^*}^2\right],
\end{align*}
\begin{align*}
	M^2Cov[\check T_{U,\alpha^*},S_{\alpha^*}\check T'_{U,\alpha^*}]&=M^2E[\check T_{U,\alpha^*}S_{\alpha^*}\check T'_{U,\alpha^*}]-M^2E[\check T_{U,\alpha^*}]E[S_{\alpha^*}\check T'_{U,\alpha^*}]\\
	&\rightarrow 4\nu^2 E_{i_1,i_2}\left[E_{q_1,q_2}[\check h_{i_1,i_2,q_1,q_2}]E_{i_2,q_1,q_2}[\check h'_{i_1,i_2,q_1,q_2}]E_{q_1}[l_{i_2,q_1}]\right]\\
	&+8\nu\nu'E_{i_1,q_1}\left[E_{i_2,q_2}[\check h_{i_1,i_2,q_1,q_2}]E_{i_2,q_1,q_2}[\check h'_{i_1,i_2,q_1,q_2}]E_{i_1}[l_{i_1,q_1}]\right]\\
	&+8\nu\nu'E_{i_1,q_1}\left[E_{i_2,q_2}[\check h_{i_1,i_2,q_1,q_2}]E_{i_1,i_2,q_2}[\check h'_{i_1,i_2,q_1,q_2}]E_{q_1}[l_{i_1,q_1}]\right]\\
	&+4\nu'^2E_{q_1,q_2}\left[E_{i_1,i_2}[\check h_{i_1,i_2,q_1,q_2}]E_{i_1,i_2,q_2}[\check h'_{i_1,i_2,q_1,q_2}]E_{i_1}[l_{i_1,q_2}]\right],\\
	M^2Cov[\check T_{U,\alpha^*}, S^2_{\alpha^*}]&=M^2E[\check T_{U,\alpha^*}S^2_{\alpha^*}]-M^2E[\check T_{U,\alpha^*}]E[S^2_{\alpha^*}]\\
	&\rightarrow 2\nu^2E_{i_1,i_2}\left[E_{q_1,q_2}[\check h_{i_1,i_2,q_1,q_2}]E_{q_1}[l_{i_1,q_1}]E_{q_1}[l_{i_2,q_1}]\right]\\&
	+8\nu\nu'E_{i_1,q_1}\left[E_{i_2,q_2}[\check h_{i_1,i_2,q_1,q_2}]E_{q_1}[l_{i_1,q_1}]E_{i_1}[l_{i_1,q_1}]\right]\\
	&+2\nu'^2E_{q_1,q_2}\left[E_{i_1,i_2}[\check h_{i_1,i_2,q_1,q_2}]E_{i_1}[l_{i_1,q_1}]E_{i_1}[l_{i_1,q_2}]\right],\\
	M^2Cov[S_{\alpha^*}\check T'_{U,\alpha^*},S_{\alpha^*}^2]&=M^2E[S_{\alpha^*}^3\check T'_{U,\alpha^*}]-M^2E[S_{\alpha^*}\check T'_{U,\alpha^*}]E[S_{\alpha^*}^2]\\
	&\rightarrow 6\nu^2 E_{i_1}\left[E_{i_2,q_1,q_2}[\check h'_{i_1,i_2,q_1,q_2}]E_{q_1}[l_{i_1,q_1}]\right]E_{i_1}\left[E^2_{q_1}[l_{i_1,q_1}]\right]\\
	&+6\nu\nu' E_{i_1}\left[E_{i_2,q_1,q_2}[\check h'_{i_1,i_2,q_1,q_2}]E_{q_1}[l_{i_1,q_1}]\right]E_{q_1}\left[E^2_{i_1}[l_{i_1,q_1}]\right]\\	
	&+6\nu\nu' E_{q_1}\left[E_{i_1,i_2,q_2}[\check h'_{i_1,i_2,q_1,q_2}]E_{i_1}[l_{i_1,q_1}]\right]E_{i_1}\left[E^2_{q_1}[l_{i_1,q_1}]\right]\\
	&+6\nu'^2 E_{q_1}\left[E_{i_1,i_2,q_2}[\check h'_{i_1,i_2,q_1,q_2}]E_{i_1}[l_{i_1,q_1}]\right]E_{q_1}\left[E^2_{i_1}[l_{i_1,q_1}]\right]-\lim_{M\rightarrow\infty}M^2E[S_{\alpha^*}\check T'_{U,\alpha^*}]E[S_{\alpha^*}^2]\\
	&=6\left(\nu E_{i_1}\left[E_{i_2,q_1,q_2}[\check h'_{i_1,i_2,q_1,q_2}]E_{q_1}[l_{i_1,q_1}]\right]+\nu'E_{q_1}\left[E_{i_1,i_2,q_2}[\check h'_{i_1,i_2,q_1,q_2}]E_{i_1}[l_{i_1,q_1}]\right]\right)\\
	&\left(\nu E_{i_1}\left[E^2_{q_1}[l_{i_1,q_1}]\right]+\nu'E_{q_1}\left[E^2_{i_1}[l_{i_1,q_1}]\right]\right)-\lim_{M\rightarrow\infty}M^2E[S_{\alpha^*}\check T'_{U,\alpha^*}]E[S_{\alpha^*}^2].
\end{align*}
}

Furthermore, the expactations of $\check h_{i_1,i_2,q_1,q_2}$ and $\check h'_{i_1,i_2,q_1,q_2}$ are written as
\begin{align*}
	E_{i_1,i_2}[\check h_{i_1,i_2,q_1,q_2}]&=\left\langle\alpha^*E_{i_1}[\tilde\varphi_{12}(x^{(i_1)})]+(1-\alpha^*)\tilde\varphi_{12}(x'^{(q_1)}), \alpha^*E_{i_1}[\tilde\varphi_{12}(x^{(i_1)})]+(1-\alpha^*)\tilde\varphi_{12}(x'^{(q_2)})\right\rangle,\\
	E_{q_1,q_2}[\check h_{i_1,i_2,q_1,q_2}]&=\left\langle\alpha^*\tilde\varphi_{12}(x^{(i_1)})+(1-\alpha^*)E_{q_1}[\tilde\varphi_{12}(x'^{(q_1)})], \alpha^*\tilde\varphi_{12}(x^{(i_2)})+(1-\alpha^*)E_{q_1}[\tilde\varphi_{12}(x'^{(q_1)})]\right\rangle,\\
	E_{i_2,q_2}[\check h_{i_1,i_2,q_1,q_2}]&=\frac{1}{2} \left\langle\alpha^*\tilde\varphi_{12}(x^{(i_1)})+(1-\alpha^*)E_{q_2}[\tilde\varphi_{12}(x'^{(q_2)})], \alpha^*E_{i_2}[\tilde\varphi_{12}(x^{(i_2)})]+(1-\alpha^*)\tilde\varphi_{12}(x'^{(q_1)})\right\rangle,\\
	E_{i_1,i_2,q_2}[\check h'_{i_1,i_2,q_1,q_2}]&=\left\langle\alpha^*E_{i_1}[\tilde\varphi_{12}(x^{(i_1)})]+(1-\alpha^*)\tilde\varphi_{12}(x'^{(q_1)}), E_{i_1}[\tilde\varphi_{12}(x^{(i_1)})]-E_{q_1}[\tilde\varphi_{12}(x'^{(q_1)})]\right\rangle\\
	&=-\left\langle E_{i_1}[\tilde\varphi_{12}(x^{(i_1)})],E_{q_1}[\tilde\varphi_{12}(x'^{(q_1)})]\right\rangle+\left\langle E_{i_1}[\tilde\varphi_{12}(x^{(i_1)})],\tilde\varphi_{12}(x'^{(q_1)})\right\rangle,\\
	E_{i_2,q_1,q_2}[\check h'_{i_1,i_2,q_1,q_2}]&=\left\langle\alpha^*\tilde\varphi_{12}(x^{(i_1)})+(1-\alpha^*)E_{q_1}[\tilde\varphi_{12}(x'^{(q_1)})], E_{i_1}[\tilde\varphi_{12}(x^{(i_1)})]-E_{q_1}[\tilde\varphi_{12}(x'^{(q_1)})]\right\rangle\\
	&=-\left\langle \tilde\varphi_{12}(x^{(i_1)}),E_{q_1}[\tilde\varphi_{12}(x'^{(q_1)})]\right\rangle+\left\langle E_{i_1}[\tilde\varphi_{12}(x^{(i_1)})],E_{q_1}[\tilde\varphi_{12}(x'^{(q_1)})]\right\rangle.
\end{align*}

With these results, we have derived expressions for the asymptotic mean and variance of  $MT_{\hat\alpha}$. In practice, each term in these expressions is estimated by replacing the population distributions $U_{12}$, $U'_{12}$ and $\alpha^*$ with their empirical counterparts $\hat U_{12}$, $\hat U'_{12}$ and $\hat\alpha$.

For the MCI test, the asymptotic mean and variance of $T_{\hat\alpha}$ can be estimated similarly. This is done by replacing the terms from the CI test, namely $\tilde g_{12}$, $\tilde{k}_{12}$, $\check h_{i_1,i_2,q_1,q_2}$, $\check h'_{i_1,i_2,q_1,q_2}$,  $l_{i_1,q_1}$, with $\tilde g_{12S}$, $\tilde{k}_{12S}$, $f_{i_1,i_2,q_1,q_2}$, $f'_{i_1,i_2,q_1,q_2}$ and $l_{i_1,q_1}^{MCI}$, respectively. Here, we define
$$l_{i_1,q_1}^{MCI}:=-\frac{1}{d_0}(\alpha^*\tilde g_{12S}(x^{(i_1)})+(1-\alpha^*)\tilde g_{12S}(x'^{(q_1)})).$$

\section{EXPERIMENTS}\label{appsec:exp}
\subsection{Practical computation of test statistic}\label{appsubsec:comp_test_stat}
In this section, we explain how to calculate the test statistics in practice.
Let $K_\tau\in\mathbb{R}^{M\times M}$ be the Gram matrix of $x_\tau$ with a kernel $k_\tau$, where the entries are given by $(K_{\tau})_{ij}=k_\tau(\mathbf{v}_{x_\tau,i},\mathbf{v}_{x_\tau,j})$ . Define $D_\alpha\in\mathbb{R}^{M\times M}$ as a diagonal matrix where $(D_{\alpha})_{ii}=\alpha/n$ for $i\leq n$ and $(D_{\alpha})_{ii}=(1-\alpha)/n'$ for $i> n$. Let $\mathbf{1}$ be an $M\times 1$ vector of ones and define the centering matrix $H:=(I-\mathbf{1}\mathbf{1}^TD_{\alpha^*})$.  Then, $T_{CI}$ can be computed in matrix form as
$$T_{CI}=\operatorname{tr}(HK_1H^TD_{\alpha^*}HK_2H^TD_{\alpha^*}). $$

Using centralized kernels $\tilde k_{1S}(x_{1S},x'_{1S})=\langle 
\varphi_1(x_1)-\mu_{X_1 \mid X_S}(x_S)
,\varphi_1(x'_1)-\mu_{X_1 \mid X_S}(x'_S)
\rangle$ and $\tilde k_{2S}(x_{2S},x'_{2S})=\langle 
\varphi_2(x_2)-\mu_{X_2 \mid X_S}(x_S)
,\varphi_2(x'_2)-\mu_{X_2 \mid X_S}(x'_S)
\rangle$,we can compute $T_{MCI}$ as 

$$T_{MCI}=\operatorname{tr}((\tilde K_{1S} \odot K_S) D_{\alpha^*} \tilde K_{2S} D_{\alpha^*})$$

where $\odot$ denotes the Hadamard product. $\tilde K_{1S}\in\mathbb{R}^{M\times M}$ and $\tilde K_{2S}\in\mathbb{R}^{M\times M}$ are the Gram matrices associated with $\tilde k_{1S}$ and $\tilde k_{2S}$, defined by $(\tilde K_{1S})_{ij}=\tilde k_{1S}(\mathbf{v}_{x_{1S},i},\mathbf{v}_{x_{1S},j})$ and $(\tilde K_{2S})_{ij}=\tilde k_{2S}(\mathbf{v}_{x_{2S},i},\mathbf{v}_{x_{2S},j})$. 

For $T_{MCI}$, we need to estimate the centralized kernels $\tilde k_{1S}$ and $\tilde k_{2S}$. To do this, we consider the eigenvalue decomposition of the kernel matrix, $K_\tau=V^T_\tau \Lambda_\tau V_\tau$, which provides an empirical kernel map $\hat\varphi_\tau=[\hat\varphi_{\tau,1}(\mathbf{v}_{x_\tau}),...,\hat\varphi_{\tau,n+n'}(\mathbf{v}_{x_\tau})]=V_\tau \Lambda_\tau^{1/2}$. In the original KCI test \citep{zhang11kci}, each feature map $\hat\varphi_{\tau,i}(\mathbf{v}_{x_\tau})$ is centralized as $\hat\varphi_{\tau,i}(\mathbf{v}_{x_\tau})-E[\hat\varphi_{\tau,i}(\mathbf{v}_{x_\tau})|Z]$ by estimating the conditional expectation $E[\hat\varphi_{\tau,i}(\mathbf{v}_{x_\tau})|Z]$ using kernel ridge regression. 

In our setting, KRR is performed in a weakly-supervised manner, similarly to the MCI MPE in Section \ref{sec:mpemci}. It is optimized by the first-order condition of non-convex loss. To reduce the computational costs, we implement KRR only for the eigenvectors corresponding to the top $k$ eigenvalues and omit other eigenvectors when reconstructing the centralized Gram matrix $\tilde K_{\tau}$. In our experiments, we set $k=5$, which we found to be sufficient to approximate the original Gram matrix accurately.

\subsection{Experimental details}
\subsubsection{CI MPE with synthetic data}\label{appsubsec:expcimpe_syn}
For the UCI datasets, the positive and negative classes were assigned as shown in Table \ref{tab:pnuci}. We set the search range for the mixture proportion to be $I_{\alpha^+}=\mathbb{R}_+$.

\begin{table}[h!]\centering
	\caption{Positive and Negative classes used for UCI dataset}
	\label{tab:pnuci}
	\begin{tabular}{lcc}\\\toprule
		& Positive & Negative      \\\midrule
		Shuttle  & 1        & other classes \\
		Wine     & white    & red           \\
		Dry Bean & DERMASON & other classes\\\bottomrule
	\end{tabular}
\end{table}

\subsubsection{CI MPE with real-world data}\label{appsubsec:expcimpe_real}

We used two datasets from the UCI repository: the Breast Cancer Wisconsin and Dry Bean datasets. For each dataset, we chose positive and negative classes and implemented the experiments, switching the classes.  The procedure was as follows:

\begin{enumerate}
	\item  We first selected a candidate set of features $X_i$ that were discriminative, satisfying $\left|E\left[X_i \mid Y=1\right]-E\left[X_i \mid Y=-1\right]\right| / \sqrt{V\left[X_i \mid Y=1\right]}>0.5$, since a significant mean difference is essential for the efficient MPE.
	\item  We then applied the HSIC test to all pairs of features from this candidate set to identify those satisfying the CI condition, with a significance level 0.05.
	\item  For each detected CI feature pair, we ran our CI MPE method 10 times.
\end{enumerate}

For the MPE task, we set $n=n^{\prime}=2000$ and used a Positive-Unlabeled (PU) setting with class $\operatorname{priors}\left(\theta, \theta^{\prime}\right)=(1,0.5)$.

\subsubsection{MCI MPE with synthetic data}\label{appsubsec:expmcimpe_syn}
We used a regularization parameter $\lambda=5\times 10^{-4}$ and a Gaussian kernel with bandwidth $\sigma=3.5$ for all MCI MPE experiments. The search ranges were set to  $I_{\alpha_+}=[1.1,1.5]$ and $I_{\alpha_-}=[-0.7,0]$.

\subsubsection{MCI MPE with real-world data}\label{appsubsec:expmcimpe_real}
We used the Dry Bean dataset and set the positive and negative classes to SIRA and DERMASON, respectively. The procedure was as follows:

\begin{enumerate}
	\item We searched for feature triplets $(X_1,X_2,X_S)$ satisfying the MCI condition in the negative class by applying the KCI test \citep{zhang11kci} to all possible triplets with a significance level 0.05. In the search, we constructed a candidate set of features that satisfies $\left|E\left[X_i \mid Y=1\right]-E\left[X_i \mid Y=-1\right]\right| / \sqrt{V\left[X_i \mid Y=1\right]}>1$, similarly to \ref{appsubsec:expcimpe_real}. Then we only used features in the set as the candidates for $X_1$ and $X_2$. 
	\item For each detected triplet, we ran our MCI MPE method 5 times and evaluated the estimation error for $\theta'$.
\end{enumerate}

For the MPE task, we set $n=n^{\prime}=1000$ and used a Positive-Unlabeled (PU) setting with class $\operatorname{priors}\left(\theta, \theta^{\prime}\right)=(1,0.5)$. We used a Gaussian kernel with bandwidth $\sigma=1.0$ for KRR, set the regularization parameter to $\lambda=10^{-3}$ and the search range $I_{\alpha_-}=[-1.25,-0.5]$. In this experiment, 48 triplets were detected and the resulting MAE of $\hat\theta'$ over all runs was $0.0312 \pm 0.0304$.

\subsubsection{Hyperparameters for CI and MCI test}\label{appsubsec:exptest_hypara}
We used a Gaussian kernel for all CI and MCI test experiments,  both with and without mixture proportions. For CI test of both cases, kernel bandwidth is set as $\sigma=2.5$. For MCI test with mixture proportions, we set $\lambda=5\times10^{-4}$ and $\sigma=3.5$ for all kernels. For MCI test without mixture proportions, we set $\lambda=5\times10^{-6}$ and $\sigma=2.5$ for the test statistic, and $\lambda=1\times 10^{-2}$ and $\sigma=3$ for the MCI MPE to estimate $\alpha^*$. The search ranges for MPE are set as the same values in Section \ref{appsubsec:expcimpe_syn} and \ref{appsubsec:expmcimpe_syn}.

\subsection{Additional experiments}
\subsubsection{Bias calculation of CI and MCI MPE}\label{appsubsec:exp_bias}
We conducted an additional experiment to investigate the relationship between MPE error and the degree of CI violation (i.e., correlation). We used the same Gaussian data generation process as in Section \ref{subsec:exptest}, where the CI or MCI assumption is only satisfied when $\sigma_{12}=0$.  Each experiment was repeated 100 times with sample sizes $n=n'=2000$ and true class priors $\left(\theta, \theta^{\prime}\right)=(0.8,0.2)$.

The results are presented in Tables \ref{tab:bias_cimpe} and \ref{tab:bias_mcimpe}. As shown, the MPE error remains small even when the CI or MCI assumption is weakly violated.

\begin{table}[h!]\centering
	\caption{Mean absolute error of $(\hat{\theta}, \hat{\theta}^{\prime})$ with weakly-violated CI data}
	\label{tab:bias_cimpe}
	\begin{tabular}{ll}\\\toprule
		$\sigma_{12}^2$ & MAE of $(\hat{\theta}, \hat{\theta}^{\prime})$ \\
		\hline 0 & $(0.026 \pm 0.018,0.025 \pm 0.019)$ \\
		0.1 & $(0.074 \pm 0.034,0.030 \pm 0.021)$ \\
		0.2 & $(0.132 \pm 0.025,0.034 \pm 0.021)$\\\bottomrule
	\end{tabular}
\end{table}

\begin{table}[h!]\centering
	\caption{Mean absolute error of $(\hat{\theta}, \hat{\theta}^{\prime})$ with weakly-violated MCI data}\label{tab:bias_mcimpe}
	\begin{tabular}{ll}\\\toprule
		$\sigma_{12}^2$ & $\operatorname{MAE}$ of $(\hat{\theta}, \hat{\theta}^{\prime})$ \\
		\hline 0 & $(0.008 \pm 0.005,0.015 \pm 0.010)$ \\
		0.1 & $(0.017 \pm 0.016,0.014 \pm 0.012)$ \\
		0.2 & $(0.037 \pm 0.011,0.010 \pm 0.011)$\\\bottomrule
	\end{tabular}
\end{table}



\subsubsection{Investigation of low power in the MCI test without true mixture proportions}\label{appsubsec:exp_low_power}
To investigate the cause of the low test power, we compared our method to a test using the true null distribution (simulated with 1000 trials). We used the same setup as in Section \ref{subsec:exptest} ($H_0 : \sigma_{12} = 0$, $H_1 : \sigma_{12} = 0.2$, $n = n' = 1000$). Table \ref{tab:mci_test_power} summarizes the results.

\begin{table}[h!]
	\centering
	\caption{MCI test power: our null approximation vs. true null distribution}
	\label{tab:mci_test_power}
	\begin{tabular}{lc}
		\toprule
		Method & Test power \\
		\midrule
		Test with our null approximation & 0.199 \\
		Test with true null distribution & 0.293 \\
		\bottomrule
	\end{tabular}
\end{table}

As shown in Table \ref{tab:mci_test_power}, even with the true null distribution, the ideal test power remains low (0.293). This indicates that the poor separation between the true null and alternative distributions causes the low power in small samples.
	
\end{document}